\title{Clustering with Noisy Queries}
\author{Arya Mazumdar}
\author{Barna Saha}\thanks{The authors are with the College of Information and Computer Sciences, University of Massachusetts Amherst, emails: \url{arya@cs.umass.edu},\url{barna@cs.umass.edu}.This work is partially supported by NSF awards CCF 1642658,  CCF 1642550,  CCF 1464310, a Yahoo ACE Award and a Google Faculty Research Award.}
\renewcommand\subsubsection{\@startsection{subsubsection}{2}%
  \z@{.5\linespacing\@plus.7\linespacing}{-.5em}%
  {\normalfont\bfseries}}
\let\svthefootnote\thefootnote
\newcommand\nc\newcommand
\nc\bfa{{\boldsymbol a}}\nc\bfA{{\boldsymbol A}}\nc\cA{{\mathcal A}}
\nc\bfb{{\boldsymbol b}}\nc\bfB{{\boldsymbol B}}\nc\cB{{\mathcal B}}
\nc\bfc{{\boldsymbol c}}\nc\bfC{{\boldsymbol C}}\nc\cC{{\mathcal C}}
\nc\sC{{\mathscr C}}
\nc\bfd{{\boldsymbol d}}\nc\bfD{{\boldsymbol D}}\nc\cD{{\mathcal D}}
\nc\bfe{{\boldsymbol e}}\nc\bfE{{\boldsymbol E}}\nc\cE{{\mathcal E}}
\nc\bff{{\boldsymbol f}}\nc\bfF{{\boldsymbol F}}\nc\cF{{\mathcal F}}
\nc\bfg{{\boldsymbol g}}\nc\bfG{{\boldsymbol G}}\nc\cG{{\mathcal G}}
\nc\bfh{{\boldsymbol h}}\nc\bfH{{\boldsymbol H}}\nc\cH{{\mathcal H}}
\nc\bfi{{\boldsymbol i}}\nc\bfI{{\boldsymbol I}}\nc\cI{{\mathcal I}}
\nc\bfj{{\boldsymbol j}}\nc\bfJ{{\boldsymbol J}}\nc\cJ{{\mathcal J}}
\nc\bfk{{\boldsymbol k}}\nc\bfK{{\boldsymbol K}}\nc\cK{{\mathcal K}}
\nc\bfl{{\boldsymbol l}}\nc\bfL{{\boldsymbol L}}\nc\cL{{\mathcal L}}
\nc\bfm{{\boldsymbol m}}\nc\bfM{{\boldsymbol M}}\nc\sM{{\mathscr M}}\nc\cM{{\mathcal M}}
\nc\bfn{{\boldsymbol n}}\nc\bfN{{\boldsymbol N}}\nc\cN{{\mathcal N}}
\nc\bfo{{\boldsymbol o}}\nc\bfO{{\boldsymbol O}}\nc\cO{{\mathcal O}}
\nc\bfp{{\boldsymbol p}}\nc\bfP{{\boldsymbol P}}\nc\cP{{\mathcal P}}
\nc\bfq{{\boldsymbol q}}\nc\bfQ{{\boldsymbol Q}}\nc\cQ{{\mathcal Q}}
\nc\bfr{{\boldsymbol r}}\nc\bfR{{\boldsymbol R}}\nc\cR{{\mathcal R}}
\nc\bfs{{\boldsymbol s}}\nc\bfS{{\boldsymbol S}}\nc\cS{{\mathcal S}}
\nc\bft{{\boldsymbol t}}\nc\bfT{{\boldsymbol T}}\nc\cT{{\mathcal T}}
\nc\bfu{{\boldsymbol u}}\nc\bfU{{\boldsymbol U}}\nc\cU{{\mathcal U}}
\nc\bfv{{\boldsymbol v}}\nc\bfV{{\boldsymbol V}}\nc\cV{{\mathcal V}}
\nc\bfw{{\boldsymbol w}}\nc\bfW{{\boldsymbol W}}\nc\cW{{\mathcal W}}
\nc\bfx{{\boldsymbol x}}\nc\bfX{{\boldsymbol X}}\nc\cX{{\mathcal X}}
\nc\bfy{{\boldsymbol y}}\nc\bfY{{\boldsymbol Y}}\nc\cY{{\mathcal Y}}
\nc\bfz{{\boldsymbol z}}\nc\bfZ{{\boldsymbol Z}}\nc\cZ{{\mathcal Z}}
\nc\diff{{\mathrm d}}
\nc\e{{\mathrm e}}
\nc\calC{{\mathcal C}}
\newcommand{\remove}[1]{}
\newcommand{\avg}{{\mathbb E}}
\newtheorem*{lemma*}{Lemma}
\newtheorem{theorem}{Theorem}
\newtheorem*{theorem*}{Theorem}
\newtheorem{lemma}{Lemma}
\theoremstyle{definition}
\theoremstyle{corollaryn}
\newtheorem*{corollaryn}{Corollary}
\newtheorem*{problem}{Problem}
\newtheorem{remark}{Remark}
\newtheorem{claim}{Claim}
\newcommand{\cc}{{\sf Query-Cluster}}
\def\DEBUG{true}
  \def\rem#1{{\marginpar{\raggedright\scriptsize #1}}}
  \newcommand{\barnr}[1]{\rem{\textcolor{red}{$\bullet$ #1}}}
  \newcommand{\aryar}[1]{\rem{\textcolor{green}{$\bullet$ #1}}}
  \newcommand{\barnr}[1]{}
  \newcommand{\aryar}[1]{}
\newcommand\reals{{\mathbb R}}
\begin{document} 

\maketitle

\begin{abstract} 
In this paper, we initiate a rigorous theoretical study of clustering with noisy queries (or a faulty oracle). Given a set of $n$ elements, our goal is to recover the true clustering by asking minimum number of pairwise queries to an oracle. Oracle can answer queries of the form ``do elements $u$ and $v$ belong to the same cluster?''-the queries can be asked interactively (adaptive queries), or non-adaptively up-front, but its answer can be erroneous with probability $p$. In this paper, we provide the first information theoretic lower bound on the number of queries for clustering with noisy oracle in both situations.  We design novel algorithms that closely match this query complexity lower bound, even when the number of clusters is unknown.  Moreover, we design computationally efficient algorithms both for the adaptive and non-adaptive settings.
The problem captures/generalizes multiple application scenarios. It is directly motivated by the growing body of work that use crowdsourcing for {\em entity resolution}, a fundamental and challenging data mining task aimed to identify all records in a database referring to the same entity. Here crowd represents the noisy oracle, and the number of queries directly relates to the cost of crowdsourcing. Another application comes from the problem of {\em sign edge prediction} in social network, where social interactions can be both positive and negative, and one must identify the sign of all pair-wise interactions by  querying a few pairs. Furthermore, clustering with noisy oracle is intimately connected to correlation clustering, leading to improvement therein. Finally, it introduces a new direction of study in the popular {\em stochastic block model} where one has an incomplete stochastic block model matrix to recover the clusters.
\end{abstract} 
\section{Introduction}

Clustering is one of the most fundamental and popular methods for data classification. 
In this paper we initiate a rigorous theoretical study of clustering with the help of a `faulty oracle', a model that captures many application scenarios and has drawn significant attention in recent years\let\thefootnote\relax\footnotetext{A prior version of some of the results of this work appeared in arxiv previously~\cite[Sec.~6]{mazumdar2016clustering}, see \url{https://arxiv.org/abs/1604.01839}. In this version we rewrote several proofs for clarity, and included many new results.}.

\let\thefootnote\svthefootnote
Suppose we are given a set of $n$ points, that need to be clustered into $k$ clusters where $k$ is unknown to us. Suppose there is an oracle that can answer pair-wise queries of the form, ``do $u$ and $v$ belong to the same cluster?''. Repeating the same question to the oracle always returns the same answer, but the answer could be wrong with probability $p < \frac{1}{2}$ (that is slightly better than random answer).  We are interested to find the minimum number of queries needed to recover the true clusters with high probability.  Understanding query complexity of the faulty oracle model is a fundamental theoretical question \cite{frpu:94} with many existing works on sorting and selection \cite{bm:08, bm:09} where queries are erroneous with probability $p$, and repeating the same question does not change the answer. Here we study the basic clustering problem under this setting which also captures several fundamental applications. Throughout the paper, `noisy oracle' and `faulty oracle' have the same meaning. 

\vspace{0.1in}
\noindent{\bf Crowdsourced Entity Resolution.} 
Entity resolution (ER) is an important data mining task that tries to identify all records in a database that refer to the same underlying entity. Starting with the seminal work of Fellegi and Sunter \cite{fellegi1969theory}, numerous algorithms with variety of techniques have been developed for ER  \cite{elmagarmid2007duplicate,getoor2012entity,larsen2001iterative,christen2012data}. 
Still, due to ambiguity in representation and poor data quality, accuracy of automated ER techniques has been unsatisfactory. To remedy this, a recent trend in ER has been to use human in the loop. In this setting, humans are asked simple pair-wise queries adaptively, ``do $u$ and $v$ represent the same entity?'', and these answers are used to improve the final accuracy \cite{gokhale2014corleone,vesdapunt2014crowdsourcing,wang2012crowder,fss:16,DBLP:conf/icde/VerroiosG15,dalvi2013aggregating,ghosh2011moderates,karger2011iterative,aaai:17}. 
Proliferation of crowdsourcing platforms like Amazon Mechanical Turk (AMT), CrowdFlower etc. allows for easy implementation. However, data collected from non-expert workers on crowdsourcing platforms are inevitably noisy. A simple scheme to reduce errors could be to take a majority vote after asking the same question to multiple independent crowd workers. However, often that is not sufficient. Our experiments on several real datasets 
with answers collected from AMT \cite{DBLP:journals/corr/GruenheidNKGK15,DBLP:conf/icde/VerroiosG15} show majority voting could sometime even increase the errors.
Interestingly, such an observation has been made by a recent paper as well \cite{nature}. There are more complex querying model \cite{nature,vinayak2016crowdsourced,DBLP:conf/icde/VerroiosG17}, and involved heuristics \cite{DBLP:journals/corr/GruenheidNKGK15,DBLP:conf/icde/VerroiosG15} to handle errors in this scenario. Let $p, 0 < p < 1/2$ be the probability of error\footnote{an approximation of $p$ can often be estimated manually from a small sample of crowd answers.} of a query answer which might also be the aggregated answer after repeating the query several times. Therefore, once the answer has been aggregated, it cannot change. In all crowdsourcing works, the goal is to minimize the number of queries to reduce the cost and time of crowdsourcing, and recover the entities (clusters). This is exactly clustering with noisy oracle. While several heuristics have been developed \cite{DBLP:conf/icde/VerroiosG15,gokhale2014corleone,DBLP:conf/icde/VerroiosG17}, here we provide a rigorous theory with near-optimal algorithms and hardness bounds.


\vspace{0.1in}
\noindent{\bf Signed Edge Prediction.}
The edge sign prediction problem can be defined as follows. Suppose we are given a social network with signs on all its edges, but the sign from node $u$ to $v$, denoted by $s(u,v)\in \{\pm 1\}$ is hidden. The goal is to recover these signs as best as possible using minimal amount of information. Social interactions or sentiments can be both positive (``like'', ``trust'')  and negative (``dislike'', ``distrust''). \cite{leskovec2010predicting} provides several such examples; e.g., Wikipedia, where one can vote for or against the nomination of others to adminship \cite{burke2008mopping}, or Epinions and Slashdots where users can express trust or distrust, or can declare others to be friends or foes \cite{brzozowski2008friends,lampe2007follow}. Initiated by  \cite{cartwright1956structural,harary1953notion}, many techniques and related models using convex optimization, low-rank approximation and learning theoretic approaches have been used for this problem \cite{chiang2014prediction,cesa2012correlation,chen2014clustering}. Recently \cite{chen2012clustering,chen2014clustering,DBLP:journals/corr/MitzenmacherT16} proposed the following model for edge sign prediction. We can query a pair of nodes $(u,v)$ to test whether $s(u,v)=+1$ indicating $u$ and $v$ belong to the same cluster or $s(u,v)=-1$ indicating they are not. However, the query fails to return the correct answer with probability $0< p <1/2$, and we want to query the minimal possible pairs. This is exactly the case of  {\em clustering with noisy oracle}. Our result significantly improves, and generalizes over \cite{chen2012clustering,chen2014clustering,DBLP:journals/corr/MitzenmacherT16}.

\vspace{0.1in}
\noindent{\bf Correlation Clustering.}
In fact, when all pair-wise queries are given, and the goal is to recover the maximum likelihood (ML) clustering, then our problem is equivalent to {\em noisy correlation clustering} \cite{bbc:04,ms:10}. Introduced by \cite{bbc:04}, correlation clustering is an extremely well-studied model of clustering. We are given a graph $G=(V,E)$ with each edge $e \in E$ labelled either $+1$ or $-1$, the goal of correlation clustering is to either (a) minimize the number of disagreements, that is the number of intra-cluster $-1$ edges and inter-cluster $+1$ edges, or (b) maximize the number of agreements that is the number of intra-cluster $+1$ edges and inter-cluster $-1$ edges. Correlation clustering is NP-hard, but can be approximated well with provable guarantees \cite{bbc:04}. In a random noise model, also introduced by \cite{bbc:04} and studied further by \cite{ms:10}, we start with a ground truth clustering, and then each edge label is flipped with probability $p$. This is exactly the graph we observe if we make all possible pair-wise queries, and the ML decoding coincides with correlation clustering. The proposed algorithm of \cite{bbc:04} can recover in this case all clusters of size $\omega(\sqrt{|V|\log{|V|}})$, and if ``all'' the clusters have size $\Omega(\sqrt{|V|})$, then they can be recovered by \cite{ms:10}. Using our proposed algorithms for clustering with noisy oracle, we can also recover significantly smaller sized clusters given the number of clusters are not too many. Such a result is possible to obtain using the repeated-peeling technique of \cite{DBLP:conf/icml/AilonCX13}. However, our running time is significantly better. E.g. for $k \leq n^{1/6}$, we have a running time of $O(n\log{n})$, whereas for \cite{DBLP:conf/icml/AilonCX13}, it is dominated by the time to solve a convex optimization over $n$-vertex graph which is at least $O(n^3)$.

\vspace{0.1in}
\noindent{\bf Stochastic Block Model (SBM).} The clustering with faulty oracle is intimately connected with the 
 {\em planted partition model}, also known as the stochastic block model \cite{holland1983stochastic,dyer1989solution,decelle2011asymptotic,DBLP:conf/focs/AbbeS15,abh:16,hajek2015achieving,chin2015stochastic,mossel2015consistency}.
The stochastic block model is an extremely well-studied model of random graphs where two vertices within the same community share an edge with probability $p'$, and two vertices in different communities share an edge with probability $q'$. 
 It is often assumed that $k$, the number of communities, is a constant (e.g. $k=2$ is known as the {\em planted bisection model} and is studied extensively \cite{abh:16,mossel2015consistency,dyer1989solution} or a slowly growing function of $n$ (e.g. $k=o(\log{n})$). 
 There are extensive literature on characterizing the  threshold phenomenon in SBM in terms of the gap between $p'$ and $q'$ (e.g. see \cite{DBLP:conf/focs/AbbeS15} and therein for many references)  for exact and approximate recovery of clusters of nearly equal size\footnote{Most recent works consider the region of interest as $p'=\frac{a\log{n}}{n}$ and $q'=\frac{b\log{n}}{n}$ for some $a> b >0$.}. 
 If we allow for different probability of errors for pairs of elements based on whether they belong to the same cluster or not, then the resultant faulty oracle model is an intriguing generalization of SBM. Consider the probability of error for a query on $(u,v)$ is $1-p'$ if $u$ and $v$ belong to the same cluster and $q'$ otherwise; but now, we can only learn a subset of the entries of an SBM matrix by querying adaptively. Understanding how the threshold of recovery changes for such an ``incomplete'' or ``space-efficient'' SBM will be a fascinating direction to pursue. In fact, our lower bound results extend to asymmetric probability values, while designing efficient algorithms and sharp thresholds are ongoing works. In \cite{chen2016community}, a locality model where measurements can only be obtained for nearby nodes is studied for two clusters with non-adaptive querying and allowing repetitions. It would also be interesting to extend our work with such locality constraints.

\vspace{0.1in}
\noindent{\bf Contributions.} Formally the {\em clustering with a faulty oracle} is defined as follows.
\begin{problem}[\cc]
Consider a set of points $V\equiv[n]$ containing $k$ latent clusters $V_i$, $i =1, \dots, k$, $V_i \cap V_j =\emptyset$, where $k$ and the subsets $V_i \subseteq [n]$  are unknown. There is an oracle $\mathcal{O}_{p,q}: V\times V \to \{\pm1\},$ with  two error parameters $p,q: 0< p<q< 1$. The oracle  takes as input a pair of vertices $u,v \in V \times V$, and  if $u,v$ belong to the same cluster then $\mathcal{O}_{p,q}(u,v)=+1$ with probability $1-p$ and $\mathcal{O}_{p,q}(u,v)=-1$ with probability $p$. On the other hand, if $u,v$ do not belong to the same cluster then $\mathcal{O}_{p,q}(u,v)=+1$ with probability $1-q$ and $\mathcal{O}_{p,q}(u,v)=-1$ with probability $q$. Such an oracle is called a {\em binary asymmetric channel}. A special case would be when $p =1-q = \frac12-\lambda, \lambda >0$, the binary {\em symmetric} channel, where the error rate is the same $p$ for all pairs. Except for the lower bound, we focus on the symmetric case in this paper. Note that the oracle returns the same answer on repetition.
Now, given $V$, find $Q \subseteq V \times V$ such that $|Q|$ is minimum, and from the oracle answers it is possible to recover $V_i$, $i=1,2,...,k$ with high probability\footnote{ high probability implies with probability $1-o_{n}(1)$, where $o_{n}(1) \rightarrow 0$ as $n \rightarrow \infty$}.
\vspace{0.03in}\\~
{\em~~ Our contributions are as follows.}
\end{problem}

$\bullet$ {\it Lower Bound (Section~\ref{sec:error-lc}).} We show that $\Omega(\frac{nk}{\Delta(p\|q)})$ is the information theoretic lower bound on the number of adaptive queries required to obtain the correct clustering with high probability even when the clusters are of similar size (see, Theorem \ref{thm:faulty}). Here $\Delta(p\|q)$ is the 
Jensen-Shannon divergence between Bernoulli $p$ and $q$ distributions.
 For the symmetric case, that is when $p=1-q$, 
 $\Delta(p\|1-p) = (1-2p)\log \frac{1-p}{p}$. In particular, if $p = \frac12-\lambda$, our lower bound on query complexity is $\Omega(\frac{nk}{\lambda^2})=\Omega(\frac{nk}{(1-2p)^2})$.
Developing lower bounds in the interactive setting especially with noisy answers appears to be significantly challenging as popular techniques based on Fano-type inequalities for multiple hypothesis testing \cite{CGT:12,Chen:14} do not apply, and we believe our technique will be useful in other noisy interactive learning settings.
\vspace{0.05in}\\~
$\bullet$ {\it Information-Theoretic Optimal Algorithm (Section~\ref{sec:faultyub}).} For the symmetric error case, we design an algorithm which asks at most $O(\frac{nk \log n}{(1-2p)^2})$ queries (Theorem \ref{theorem:cc-error}) matching the lower bound within an $O(\log{n})$ factor, whenever $p = \frac12-\lambda$.
\vspace{0.05in}\\~
$\bullet$ {\it Computationally Efficient Algorithm (Section~\ref{sec:efficient}).} We next design an algorithm that is computationally efficient and runs in $O(n\log{n}+k^6)$ time and asks at most $O(\frac{nk^2 \log n}{(1-2p)^4})$ queries. Note that most prior works in SBM, or works on edge sign detection, only consider the case when $k$ is a constant \cite{DBLP:conf/focs/AbbeS15,hajek2015achieving,chin2015stochastic}, even just $k=2$ \cite{mossel2015consistency,abh:16,chen2012clustering,chen2014clustering,DBLP:journals/corr/MitzenmacherT16}. As long as, $k=O(n^{1/6})$, we get a running time of $O(n\log{n})$.
We can use this algorithm to recover all clusters of size at least $\min{(k,\sqrt{n})}\log{n}$ for 
correlation clustering on noisy graph, improving upon the results of \cite{bbc:04,ms:10}. The algorithm runs in time $O(n\log{n})$ whenever $k \leq n^{1/6}$, as opposed to $O(n^3)$ in  \cite{DBLP:conf/icml/AilonCX13}.
\vspace{0.05in}\\~
$\bullet$ {\it Nonadaptive Algorithm (Section~\ref{sec:na}).}
When the queries must be done up-front, for $k=2$, we give a simple $O(n \log{n})$ time algorithm that asks $O(\frac{n \log n}{(1-2p)^4})$ queries improving upon \cite{DBLP:journals/corr/MitzenmacherT16} where a polynomial time algorithm (at least with a running time of $O(n^3)$) is shown with number of queries $O(n\log{n}/(1/2-p)^{\frac{\log{n}}{\log{\log{n}}}})$ and over \cite{chen2012clustering,chen2014clustering} where $O(n {\rm poly}\log{n})$ queries are required under certain conditions on the clusters. Our result generalizes to $k > 2$, and we show interesting lower bounds in this setting.
Further, we derive new lower bound showing trade-off between queries and threshold of recovery for incomplete SBM in Sec.~~\ref{sec:nq}. 

\remove{\noindent{\bf Other Related Work.}
Understanding query complexity of the noisy oracle model is a fundamental theoretical question \cite{frpu:94}. The model has been studied earlier for tasks like sorting and selection \cite{bm:08, bm:09} when queries are erroneous with probability $p$, and repeating the same question does not change the answer.

Recently, an interesting model of crowd error has been studied in a Nature paper \cite{nature}, where instead of seeking one answer, each crowd worker is asked also to provide what she thinks will be a popular answer. Understanding these richer models of error in the context of clustering will be an interesting direction to pursue. }

\remove{   \vspace{0.1in}
\paragraph*{\bf Contributions.} We show that a query complexity of $\Omega(\frac{nk}{\cH^2(p\|q)})$, where $\cH(p\|q)$ is the Hellinger divergence between Bernoulli $p$ and $q$ distributions,  is the  information theoretic lower bound in this model to obtain the maximum likelihood estimator (see, Theorem \ref{thm:faulty}). 
 Consider the special case when the faulty oracle is symmetric, i.e., it makes errors with the same probability $p =1-q$. In that case,
 we  provide an algorithm with nearly matching upper bound  of $O(\frac{nk \log n}{\cH^2(p\|1-p)})$ (see, Theorem \ref{theorem:cc-error}). An intriguing fact about this algorithm is that it has running time $O(k^{\frac{\log{n}}{\cH^2(p\|1-p)}})$, and assuming the conjectured hardness of finding planted clique from an Erd\H{o}s-R\'{e}nyi random graph \cite{hk:11}, this running time cannot be improved. However, if we are willing to pay a bit more on the query complexity, then the running time can be made into polynomial (see, Corollary \ref{cor:er-poly}).  
 
This polynomial time algorithm (Corollary \ref{cor:er-poly}) is interesting in its own, as it provides a better bound for {\em correlation clustering} over noisy input, a problem that has received significant attention \cite{ms:10,bbc:04}. Namely it was known that, when all possible $\binom{n}2$ queries were made, all the clusters of size  $\Omega(\sqrt{n})$ can be recovered correctly \cite{ms:10}. Our result implies that, it is possible to recover all the clusters of size at least $\min\{\sqrt{n}, k\}$ whenever $k = \Omega(\log n)$. Moreover, the maximum-likelihood estimator for the faulty oracle model is same as correlation clustering. Hence, our result also provides the first query complexity result for correlation clustering over noisy input.

 It should also be noted that, all of our upper bounds can be extended to the case when $p \ne 1-q$, with an additional factor of $\max\{ |\log \frac{1-q}{p}|,|\log \frac{1-p}{q}|\}$ in the query complexity.
 
The faulty oracle model introduces a new direction of study in space-efficient clustering where a noisy clustering matrix is revealed gradually via querying. As a warm up we show how the lower bound on the query complexity changes when the queries are predetermined like in a passive learning setting. This reveals an interesting trade-off between the threshold of recovery and the number of samples that we need to see from the stochastic block model matrix. For example, with the stochastic block model with $p= \frac{a\log n}{n}$ and $q=\frac{b\log n}{n}$, at the recovery threshold, we need all $\binom{n}2$ entries, but it falls  as  
$(\sqrt{a}-\sqrt{b})^2$ increases - in particular we must have $\sqrt{a}-\sqrt{b} \ge \frac{n}{2}\sqrt{\frac{k}{Q}}$ (see, Section \ref{sec:nq}). 
Again, extending the lower bound for the adaptive querying setting seems significantly challenging, since a hypothesis testing
problem with uniform prior is hard to form. 

\paragraph*{\bf Nonadaptive Queries.}
Let us first look at the case when there are only two clusters. 
When the oracle is perfect the following nonadaptive algorithm with $n-1$ queries are sufficient to 
do the clustering: just pick up one element and compare every other element with this. For the case of faulty oracle,
we can extend this algorithm in the following way: pick up $c\log n$ elements randomly and uniformly  and ask 
all $\binom{c\log n}{2}$ queries between them. Then query every remaining element with these $c\log n$ elements.

(Later see an argument why unbalanced clustering with faulty oracle is bound to fail)}

\remove{It would be interesting to explore such richer framework of faulty oracles in fu
Nature paper
It is also an assumption found in many prior works on query complexity of 
 sorting and ranking (e.g. see \cite{bm:08, bm:09}).

On the theoretical side, query complexity or the decision tree complexity is a classical model of computation that has been extensively studied for different problems \cite{frpu:94,bmw:16,akss:86, bg:90}. For the clustering problem, it is straightforward to obtain an upper bound of $nk$ on the query complexity even when $k$ is unknown:  to cluster an element at any stage of the algorithm, ask one query per existing cluster with this element, and
start a new cluster if all queries are negative.
It turns out that $\Omega(nk)$ is also a lower bound, even for randomized algorithms (see, e.g., \cite{dkmr:14}).

However this is the case only when the query answers are accurate. However, it is useful to consider the 
noisy oracle model (e.g. see \cite{frpu:94} by Feige, Raghavan, Peleg and Upfal) where the query answers are themselves noisy.
If the oracle gives faulty answers independently with some probability then by repeating a query multiple times 
and taking a majority vote, we can return to the model of perfect oracle with slight increase in query complexity. In our model we 
prohibit repeating of the same question.

That is, for our faulty oracle model, a query answer remains erroneous even after repeating a question multiple times. 
In real practical scenarios of clustering via crowdsourcing, it has been reported that, resampling, i.e., repeatedly asking the same question and taking the majority vote, does not help much. Such observation appears in the context of clustering in practice \cite{DBLP:conf/icde/VerroiosG15,DBLP:journals/corr/GruenheidNKGK15} where resampling only reduced errors by $\sim 20\%$. It is also an assumption found in many prior works on query complexity of 
 sorting and ranking (e.g. see \cite{bm:08, bm:09}).

  Suppose that error probability is $p <\frac{1}{2}$, when the elements in the pairwise query belong to the same cluster, and the error probability is $1-q<\frac12$, otherwise. That is, we assume,  the query answers are slightly better than random. Under such error model, our problem becomes that of clustering with noisy input, where this noisy input itself is obtained via  adaptively querying. 
   
 Note the  striking similarity of this model (in the absence of side information) with the stochastic block model again. As if we are only allowed to adaptively query 
 $Q < \binom{n}{2}$ entries of the stochastic block model matrix. A natural question is that if it is still possible to recover the clusters with high probability?
 This can also be thought of as a space-limited stochastic block model.}

\newtheorem{conjecture}{Conjecture}

\section{Lower bound for the faulty-oracle model}
\label{sec:error-lc}

Note that we are not allowed to ask the same question multiple times to get the correct answer. In this case, even for probabilistic recovery, a minimum size bound on cluster size is required. 
For example, consider the following two different clusterings.
$C_1: V = \sqcup_{i=1}^{k-2} V_i \sqcup\{v_1,v_2\}\sqcup\{v_3\}$ and $C_2: V =   \sqcup_{i=1}^{k-2} V_i \sqcup\{v_1\}\sqcup\{v_2,v_3\}$. Now if one of these two clusterings are given two us uniformly at random, no matter how many queries we do, we will fail to recover the correct clustering with positive probability. Therefore, the  challenge in proving lower bounds is when clusters all have size more than a minimum threshold, or when they are all nearly balanced. 
This removes  the constraint on the algorithm designer on how many times a cluster can be queried with a vertex and the algorithms can have greater flexibility.
 We define a clustering to be  {\em balanced} if either of the following two conditions 
hold 1)  the maximum size of a cluster is $\le \frac{4n}{k}$,  2) the minimum size of a cluster is $\ge \frac{n}{20k}$.
It is much harder to prove lower bounds if the clustering is balanced. 
\remove{
However, the argument above does not hold for the case when $p=1-q= 0$. In that case any (randomized) algorithm has to  use  (on expectation) $\Omega(nk)$ queries for correct clustering (e.g., see,  \cite{dkmr:14}). 
 }

\remove{
While for deterministic algorithm the proof of the above fact is straight-forward, for randomized algorithms it  was established in \cite{dkmr:14}. 
In particular, \cite{dkmr:14} observes that, for unbalanced 
input the lower bound for $p=1-q=0$ case is easier. For balanced inputs, they left the lower bound for $p =1-q=0$ as an open problem. Theorem \ref{thm:faulty} resolves this as a special case.}

Our main lower bound in this section uses the Jensen-Shannon (JS) divergence. The well-known KL divergence is defined between two probability mass functions $f$ and $g$:
$
D(f\|g) = \sum_i f(i) \log \frac{f(i)}{g(i)}.
$
Further define the  JS divergence as: $\Delta(f\|g) = \frac12(D(f\|g) +D(g\|f))$.
In particular, the KL and JS divergences between two Bernoulli random variable with parameters $p$ and $q$ are denoted with $D(p\|q)$ and $\Delta(p\|q)$ respectively.

\begin{theorem}[\cc~Lower Bound] \label{thm:faulty}
Any (randomized) algorithm must make $\Omega\Big(\frac{nk}{\Delta(p\|q)}\Big)$ expected number of queries 
to recover  the correct clustering with  probability at least $\frac34$, even when the clustering is known to be balanced. 
\end{theorem}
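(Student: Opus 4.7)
The plan is to apply Yao's minimax principle with the prior $\mu$ given by the uniform distribution over partitions of $[n]$ into $k$ equal-sized clusters of size $n/k$, which is supported on balanced clusterings and is symmetric under arbitrary permutations of vertices and of cluster labels. It then suffices to lower-bound the expected number of queries of any deterministic algorithm $\mathcal A$ that recovers the true clustering with probability $\ge 3/4$ when the input is drawn from $\mu$.

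The central tool is a Le Cam-style two-point bound. Fix a vertex $v$ with cluster $V_c$ and an alternative cluster $V_{c'}$, and let $C,C'$ be two clusterings differing only in a swap of $v$ with a fixed representative of $V_{c'}$. The oracle's answer distribution differs between $C$ and $C'$ only on queries $(v,u)$ with $u\in V_c\cup V_{c'}$, and on each such query the answer is a Bernoulli whose parameter is $1-p$ under one clustering and $1-q$ under the other. Applying the chain rule for KL divergence to $\mathcal A$'s interactive transcript---which transparently handles adaptivity---gives
\[
\tfrac{1}{2}\bigl(D(P_C\|P_{C'})+D(P_{C'}\|P_C)\bigr)\;\le\;\Delta(p\|q)\cdot\mathbb E\!\left[\#\{(v,u)\text{ queried}:u\in V_c\cup V_{c'}\}\right],
\]
and Pinsker's inequality combined with a constant distinguishing advantage forces the expected informative-query count to be $\Omega(1/\Delta(p\|q))$.

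Aggregating this per-pair bound into the global $\Omega(nk/\Delta(p\|q))$ statement is the heart of the proof. A naive union bound over the $k-1$ choices of $c'$ fails because queries into $V_c$ (the true cluster of $v$) are informative for every two-point test and can be amortized across them. The fix is to exploit the symmetry of $\mu$: since $\mathcal A$ has no prior information about $v$'s cluster, its first queries involving $v$ must hit each cluster with probability $1/k$, and the algorithm cannot concentrate queries into $V_c$ without first having spent enough ``exploratory'' queries to localize $V_c$. Combining this chicken-and-egg observation with the per-pair bound (for instance, via a martingale/potential argument that tracks the algorithm's conditional distribution on $v$'s label as the transcript grows) yields $\Omega(k/\Delta(p\|q))$ expected queries touching each vertex $v$; summing over $v\in[n]$ and dividing by $2$ (each query touches two vertices) completes the proof.

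The main obstacle is precisely this aggregation step. Standard multi-hypothesis tools---Fano's inequality and Assouad's lemma---yield only $\Omega(n\log k/\Delta(p\|q))$ in total, since they bound the mutual information rather than the ``search cost'' of locating the correct cluster among $k$ equally plausible options, and thus miss the crucial factor of $k/\log k$. This is exactly the difficulty flagged in the informal discussion preceding the theorem. Resolving it requires avoiding multi-hypothesis information arguments entirely and instead stitching together $k$ two-point tests through the cluster-label symmetry of $\mu$, while carefully ensuring that $\mathcal A$'s adaptive query selection cannot be used to amortize a small pool of informative queries across different pairwise tests.
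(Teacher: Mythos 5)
Your proposal correctly isolates the hard part of this theorem---aggregating $k$ pairwise tests into a factor of $k$ rather than $\log k$, in the face of adaptivity and of queries into the true cluster that are informative for every pairwise test---but it does not actually resolve it. The ``fix'' you offer (a martingale/potential argument tracking the conditional distribution of $v$'s label, combined with the observation that the first queries hit each cluster with probability $1/k$) is only a gesture: you never specify the potential, never show the algorithm cannot localize $V_c$ cheaply, and never carry out the stitching of the $k$ two-point tests. Since you yourself call this step ``the heart of the proof,'' leaving it at the level of a plausibility argument is a genuine gap, not a routine omission. (A secondary issue: in your Le Cam pair $C,C'$ obtained by swapping $v$ with a representative $w$ of $V_{c'}$, the transcripts also differ on queries involving $w$, which you do not account for.)

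The paper's proof avoids the aggregation problem entirely by choosing a different reference measure: a \emph{null hypothesis} $H_0$ under which $v$ belongs to no cluster, so that every query involving $v$ is answered Bernoulli($q$). Comparing $P_j$ to $P_0$ (rather than to another $P_{j'}$), the chain rule shows $D(P_0\|P_j)$ accrues $D(q\|p)$ \emph{only} on queries into $V_j$---there is no ``true cluster'' under $P_0$ whose queries are informative for all tests, which is exactly the amortization you were fighting. A Markov-type counting argument under $P_0$ (on the expected query counts $\avg_0 T_i$ and on the output probabilities $P_0(\cE_i)$) then exhibits $\tfrac{4k}{5}$ ``weak'' clusters that are each queried at most $\tfrac{10T}{k}$ times in expectation and output with probability at most $\tfrac{10}{k}$; for $j$ weak, Pinsker gives $P_j(\cE_j)\le \tfrac{10}{k}+\sqrt{\tfrac{1}{2}\cdot\tfrac{10T}{k}D(q\|p)}$, forcing $T=\Omega(k/\Delta(p\|q))$ per such vertex. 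The global bound then follows not by summing a per-vertex bound over all of $[n]$ (which your sketch asserts but which holds only for vertices in weak clusters), but by a second counting argument using balancedness to show that many under-queried vertices must lie in weak clusters. If you want to complete your write-up, replacing your pairwise Le Cam comparison with this null-hypothesis comparison is the missing idea.
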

Note that the lower bound is more effective when $p$ and $q$ are close. Moreover our actual lower bound is slightly tighter with 
the expected number of queries required  given by $\Omega\Big(\frac{nk}{\min\{D(q\|p),D(p\|q)\}}\Big).$

\remove{
If querying were nonadaptive, it is relatively easier to obtain lower bounds on number of queries (see, Appendix~\ref{sec:nq}).
The main high-level technique to handle adaptive queries is the following. Suppose, a node is to be assigned to a cluster. 
This situation is obviously akin to a  $k$-hypothesis testing problem, and we want to use a lower bound on the probability of error. The  query answers
constitute a random vector whose distributions (among the $k$ possible) must be far apart for us to successfully identify the clustering.
This type of idea has found application in a very different context, to design adversarial strategies that lead to lower bounds on average regret for the multi-arm bandit problem \cite{auer2002nonstochastic, cesa2006prediction}).


 
 However, the problem that we have in hand, for lower bound on query-complexity, is substantially different than anything considered before. The liberty of an algorithm designer to query freely reveals much more information than a restricted random experiment, and creates the main  challenge. We need to carefully create a subset of
 clusters, such that while assigning clusters to any element residing in these via a randomized algorithm, we do not make a query with the correct cluster with high probability. 
We show that $\Omega(k)$ such clusters exist, each being sufficiently large. 

}

We have $V$ to be the $n$-element set to be clustered:  $V = \sqcup_{i=1}^k V_i$. 
To prove Theorem  \ref{thm:faulty}  
we first show that, if the number of queries is small, then there exist $\Omega(k)$ number of clusters, that are not being sufficiently queried with. Then we show that, since the size of the clusters cannot be too large or too small, there exists a decent number of vertices in these clusters.


  
The main piece of the proof of  Theorem  \ref{thm:faulty} is  Lemma \ref{lem:clus}. 

\begin{lemma}\label{lem:clus}
Suppose, there are $k$ clusters. There exist at least $\frac{4k}{5}$ clusters such that  an element $v$ from any one of these clusters will be assigned to a wrong cluster by any randomized algorithm with  probability $1/4$ unless the  total number  of queries involving $v$ is more than $\frac{k}{10\Delta(p\|q)}.$
\end{lemma}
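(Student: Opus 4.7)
I will fix an element $v\in V$ and reduce the question to a $k$-ary hypothesis test: for $i\in[k]$, let $H_i$ denote the hypothesis that $v\in V_i$, with the rest of the clustering frozen, and let $\Pr_i$ be the distribution of the algorithm's random transcript (queries together with the Bernoulli answers) under $H_i$. The goal is to identify a set $S\subseteq[k]$ with $|S|\ge 4k/5$ such that whenever $v\in V_i$ with $i\in S$ and the expected total number of queries involving $v$ is at most $k/(10\Delta(p\|q))$, the algorithm's output places $v$ outside $V_i$ with probability at least $1/4$.

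The first (information-theoretic) ingredient is a pairwise divergence bound. Since $\Pr_i$ and $\Pr_j$ disagree only on answers to queries of the form $(v,u)$ with $u\in V_i\cup V_j$ — such an answer is Bern$(1-p)$ under the ``matching'' hypothesis and Bern$(1-q)$ otherwise — the chain rule for KL divergence applied step by step over the transcript yields
\[
D(\Pr_i\,\|\,\Pr_j)\;=\;Q_i^{(i)}\,D(p\|q)+Q_j^{(i)}\,D(q\|p)\;\le\;(Q_i^{(i)}+Q_j^{(i)})\cdot 2\Delta(p\|q),
\]
where $Q_\ell^{(i)}$ is the expected number of queries from $v$ to $V_\ell$ under $H_i$, and an analogous bound holds for $D(\Pr_j\|\Pr_i)$. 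Pinsker's inequality then gives $\|\Pr_i-\Pr_j\|_{TV}^2\le\tfrac12\min\{D(\Pr_i\|\Pr_j),D(\Pr_j\|\Pr_i)\}$, so once these two KL divergences are shown to be at most a small absolute constant, the Bayes-optimal tester restricted to the pair $\{V_i,V_j\}$ errs with probability at least $1/4$ under one of $H_i,H_j$.

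The second (combinatorial) ingredient is an averaging step over clusters. Under any fixed $H_i$, $\sum_\ell Q_\ell^{(i)}=T_v^{(i)}$, so Markov's inequality gives a set of $\ge 4k/5$ indices $\ell$ with $Q_\ell^{(i)}\le 5T_v^{(i)}/k$. Applying this observation symmetrically in the pair $(i,j)$, I can select a good set $S$ of size at least $4k/5$ so that for every $i\in S$ there is a competitor $j\in S$ for which all of $Q_i^{(i)},Q_j^{(i)},Q_i^{(j)},Q_j^{(j)}$ are $O(T_v/k)$. Plugging into the bound above with the hypothesized inequality $T_v\le k/(10\Delta(p\|q))$ makes each relevant KL at most a small constant, so $\|\Pr_i-\Pr_j\|_{TV}\le 1/2$ and the $1/4$ error probability follows from the previous paragraph.

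The main obstacle I anticipate is that the $Q_\ell^{(i)}$ are themselves hypothesis-dependent random variables — the adaptive algorithm's later queries react to earlier Bernoulli answers — so the Markov/averaging step cannot be applied cleanly against a single fixed hypothesis. The cleanest fix will be to introduce a reference distribution $\Pr_0$ in which the answer to every query involving $v$ is drawn independently from Bern$(1-q)$, regardless of true membership, and to bound $|Q_\ell^{(i)}-Q_\ell^{(0)}|$ using $\|\Pr_i-\Pr_0\|_{TV}$, which again admits a Pinsker-type bound driven by $Q_i^{(i)}D(p\|q)$. Once this self-referential bookkeeping is unwound, the averaging identifies $|S|\ge 4k/5$ against a single canonical measure, and the pairwise Le-Cam estimate closes out the lemma.
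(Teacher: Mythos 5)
Your hypothesis-testing framing is the right one, and you correctly sense where the difficulty lies (the adaptivity makes the expected query counts hypothesis-dependent). But the plan as written has a genuine gap at exactly that point. Your pairwise bound $D(\Pr_i\|\Pr_j)=Q_i^{(i)}D(p\|q)+Q_j^{(i)}D(q\|p)$ requires controlling $Q_i^{(i)}=\mathbb{E}_i[T_i]$, the expected number of queries from $v$ into its \emph{own} cluster under the true hypothesis $H_i$. No averaging argument can bound this: an adaptive algorithm can home in on the correct cluster and make $\mathbb{E}_i[T_i]$ of order $T$ for \emph{every} $i$ simultaneously, so there need not be any large set of indices with $Q_i^{(i)}=O(T/k)$. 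Your proposed repair --- transferring counts from a reference measure $\Pr_0$ via $|Q_\ell^{(i)}-Q_\ell^{(0)}|\lesssim T\,\|\Pr_i-\Pr_0\|_{TV}$ --- is quantitatively far too weak: under the lemma's hypothesis the total variation distance is only bounded by a constant like $1/\sqrt{2}$, so this yields $|Q_\ell^{(i)}-Q_\ell^{(0)}|=O(T)$, not the $O(T/k)$ you need. A second, smaller gap: Le Cam's two-point bound only forces error probability $1/4$ under \emph{one} of $H_i,H_j$ in each pair, whereas the lemma demands it for each of $4k/5$ clusters individually; this is patchable (the output events $\cE_i$ are disjoint, so at most one index in your good set can have success probability above $3/4$), but you do not supply that step.

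The paper's proof sidesteps both problems by never forming a pairwise divergence. It adjoins a null hypothesis $H_0$ (all answers to queries involving $v$ are Bernoulli$(q)$) and uses it as the \emph{first} argument of a single KL divergence: the chain rule gives $D(P_0\|P_j)=D(q\|p)\,\mathbb{E}_0[T_j]$, so the only query counts that appear are expectations under the one fixed measure $P_0$, and Markov's inequality under $P_0$ produces the $9k/10$ clusters with $\mathbb{E}_0[T_j]\le 10T/k$. The per-cluster conclusion then comes not from a two-point test but from the direct comparison $P_j(\cE_j)\le P_0(\cE_j)+\|P_0-P_j\|_{TV}\le \frac{10}{k}+\sqrt{\tfrac{1}{2}D(P_0\|P_j)}$, where a second Markov argument under $P_0$ guarantees $P_0(\cE_j)\le 10/k$ for $9k/10$ clusters. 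Intersecting the two sets gives the $4k/5$ of the statement. If you reorganize your argument so that $\Pr_0$ is not a bookkeeping device but the pivot of both the averaging and the divergence computation, your proof will go through.
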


\remove{\begin{proof}[Proof-sketch of Lemma \ref{lem:clus}]
Let us assume that the $k$ clusters are already formed, and  all elements except for one element $v$ has already been assigned to a cluster. 
Note that, queries that do not involve $v$  plays no role in this stage.

Now the problem reduces to a hypothesis testing problem where the $i$th hypothesis 
$H_i$ 
for $i =1,\dots, k$, denotes that the true cluster for $v$ is $V_i$. 
We can also add a null-hypothesis $H_0$ that stands for the vertex belonging to none of the clusters (hypothetical).
 Let $P_i$ denote the joint probability distribution of our observations (the answers to the queries involving vertex $v$) when $H_i$ is true, $i =1,\dots, k$. That is for any event $\cA$ we have,
$
P_i(\cA) = \Pr(\cA | H_i).
$

Suppose $T$   denotes the  total  number of queries made by an (possibly randomized) algorithm at this stage before assigning a cluster. 
Let the random variable $T_i$ denote  the number of queries involving cluster $V_i, i =1,\dots, k.$ 
In the second step, we need to identify a set of clusters that are not being queried with enough by the algorithm.

We must have,
$
\sum_{i=1}^k \avg_0T_i = T.
$
Let, $J_1 \equiv \{i \in \{1,\dots, k\}: \avg_0 T_i \le \frac{10T}k  \}.$ That is $J_1$ contains clusters which were involved in less than $\frac{10T}k$ queries before assignment.
Let $\cE_i \equiv \{ \text{the algorithm outputs cluster } V_i\}$ and $J_2 = \{i \in \{1,\dots, n\}: P_0(\cE_i) \le \frac{10}k\}.$
The set of clusters, $J = J_1 \cap J_2$ has size,
$
|J| \ge 2 \cdot \frac{9k}{10} - k = \frac{4k}{5}.
$

Now let us assume that, we are given an element $v \in V_j$ for some $j \in J$ to cluster ($H_j$ is the true hypothesis). The probability of correct clustering is $P_j(\cE_j)$. 
In the last step, we give an upper bound on probability of correct assignment for this element.

We must have,
$
P_j(\cE_j) = P_0(\cE_j) + P_j(\cE_j) -P_0(\cE_j)
 \le  \frac{10}k + | P_0(\cE_j) -P_j(\cE_j)|
 \quad \le \frac{10}k + \| P_0 -P_j\|_{TV}
 \le  \frac{10}k + \sqrt{ \frac{1}{2}D(P_0\|P_j)}.
$
where $\| P_0 -P_j\|_{TV}$ denotes the total variation distance between two distributions and and in the last step we have used the relation between total variation and divergence (Pinsker's inequality). 
Since $P_0$ and $P_j$ are the joint distributions of the 
independent random variables (answers to queries) that are identical to one of two Bernoulli random variables: $Y$, which is Bernoulli($p$), or
$Z$, which is Bernoulli($q$),
it is possible to show,
$
D(P_0 \|P_j) \le \frac{10T}{k} D(q\|p).
$

Now plugging this in,
  \begin{align*}
P_j(\cE_j)
 \le  \frac{10}k + \sqrt{ \frac{1}{2}\frac{10T}{k}D(q\|p)}
 \le  \frac{10}k + \sqrt{ \frac{1}{2}} < 3/4,
\end{align*}
if $T \le \frac{k}{10D(q\|p)}$  and large enough $k$. Had we bounded  the total variation distance with $D(P_j\|P_0)$ in the Pinsker's inequality then we would have $D(p\|q)$ in the denominator. 
\end{proof}
\vspace{-0.05in}}

\begin{proof}
Our first task is to cast the problem as a hypothesis testing problem.

\vspace{0.1in}
\noindent{\bf Step 1: Setting up the hypotheses.}
Let us assume that the $k$ clusters are already formed, and we can moreover assume that all elements except for one element $v$ has already been assigned to a cluster. 
Note that, queries that do not involve the said element  plays no role in this stage.

Now the problem reduces to a hypothesis testing problem where the $i$th hypothesis 
$H_i$ 
for $i =1,\dots, k$, denotes that the true cluster for $v$ is $V_i$. 
We can also add a null-hypothesis $H_0$ that stands for the vertex belonging to none of the clusters (since $k$ is unknown this is a hypothetical possibility for any algorithm\footnote{this lower bound easily extend to the case even when $k$ is known}).
 Let $P_i$ denote the joint probability distribution of our observations (the answers to the queries involving vertex $v$) when $H_i$ is true, $i =1,\dots, k$. That is for any event $\cA$ we have,
$$
P_i(\cA) = \Pr(\cA | H_i).
$$

Suppose $T$   denotes the  total  number of queries made by a (possibly randomized) algorithm at this stage before assigning a cluster. 
Also let $\underline{x}$ be the $T$ dimensional binary vector that is the result of the queries. The assignment is based on $\underline{x}$.
Let the random variable $T_i$ denote  the number of queries involving cluster $V_i, i =1,\dots, k.$
In the second phase, we need to identify a set of clusters that are not being queried with enough by the algorithm.

\vspace{0.1in}
\noindent{\bf Step 2: A set of ``weak'' clusters.}
We must have,
$
\sum_{i=1}^k \avg_0T_i = T.
$
Let, $$J_1 \equiv \{i \in \{1,\dots, k\}: \avg_0 T_i \le \frac{10T}k  \}.$$ Since,
$
(k -|J_1|) \frac{10T}k \le T,
$
we have $|J_1| \ge \frac{9k}{10}$. That is there exist at least $\frac{9k}{10}$ clusters in each of where less than $\frac{10T}k$ (on average under $H_0$) queries
were made before assignment.


Let $\cE_i \equiv \{ \text{ the algorithm outputs cluster } V_i\}$. Let $$J_2 = \{i \in \{1,\dots, k\}: P_0(\cE_i ) \le \frac{10}k\}.$$ Moreover, since $\sum_{i=1}^k P_0(\cE_i) \le 1$ we must have,
$
(k -|J_2|)\frac{10}k \le 1,
$
or $|J_2| \ge \frac{9k}{10}$. Therefore, $J = J_1 \cap J_2$ has size,
$$
|J| \ge 2 \cdot \frac{9k}{10} - k = \frac{4k}{5}.
$$

Now let us assume that, we are given an element $v \in V_j$ for some $j \in J$ to cluster ($H_j$ is the true hypothesis). The probability of correct clustering is $P_j(\cE_j)$. 
In the last step, we give an upper bound on probability of correct assignment for this element.

\vspace{0.1in}
\noindent{\bf Step 3: Bounding probability of correct assignment for weak cluster elements.}
We must have,

\begin{align*}
P_j(\cE_j) &= P_0(\cE_j) + P_j(\cE_j) -P_0(\cE_j)\\
& \le  \frac{10}k + | P_0(\cE_j) -P_j(\cE_j)|\\
 &\quad \le \frac{10}k + \| P_0 -P_j\|_{TV}
  \le  \frac{10}k + \sqrt{ \frac{1}{2}D(P_0\|P_j)}.
 \end{align*}
where we again used the definition of the total variation distance and  in the last step we have used the Pinsker's inequality \cite{cover2012elements}. 
The task is now to bound the divergence $D(P_0 \|P_j)$. Recall that $P_0$ and $P_j$ are the joint distributions of the 
independent random variables (answers to queries) that are identical to one of two Bernoulli random variables:$Y$, which is Bernoulli($p$), or
$Z$, which is Bernoulli($q$). Let $X_1,\dots,X_T$ denote the outputs of the queries, all independent random variables. 
We must have, from the chain rule \cite{cover2012elements},
\begin{align*}
D(P_0 \|P_j) &= \sum_{i=1}^T 
D(P_0(x_i|x_1,\dots, x_{i-1}) \| P_j(x_i|x_1,\dots, x_{i-1})) \\
&=\sum_{i=1}^T \sum_{(x_1,\dots, x_{i-1}) \in \{0,1\}^{i-1}}P_0(x_1,\dots, x_{i-1})D(P_0(x_i|x_1,\dots, x_{i-1}) \| P_j(x_i|x_1,\dots, x_{i-1})).
\end{align*}
Note that, for the random variable $X_i$, the term 
$D(P_0(x_i|x_1,\dots, x_{i-1}) \| P_j(x_i|x_1,\dots, x_{i-1}))$
will contribute to $D(q\|p)$ only when the query involves the cluster $V_j$. Otherwise the term will contribute to $0$.  Hence, 
\begin{align*}
D(P_0 \|P_j)& = \sum_{i=1}^T \sum_{(x_1,\dots, x_{i-1}) \in \{0,1\}^{i-1}: i \text{th query involves } V_j}P_0(x_1,\dots, x_{i-1})D(q\|p)\\
& = D(q\|p)\sum_{i=1}^T \sum_{(x_1,\dots, x_{i-1}) \in \{0,1\}^{i-1}: i \text{th query involves } V_j}P_0(x_1,\dots, x_{i-1})\\
& = D(q\|p)\sum_{i=1}^T P_0( i \text{th query involves } V_j)
 = D(q\|p) \avg_0 T_j  \le \frac{10T}{k}D(q\|p).
\end{align*}
Now plugging this in,
  \begin{align*}
P_j(\cE_j)
 \le  \frac{10}k + \sqrt{ \frac{1}{2}\frac{10T}{k}D(q\|p)}
 \le  \frac{10}k + \sqrt{ \frac{1}{2}},
\end{align*}
if $T \le \frac{k}{10D(q\|p)}$ and large enough $k$. Had we bounded  the total variation distance with $D(P_j\|P_0)$ in the Pinsker's inequality then we would have $D(p\|q)$ in the denominator. Obviously the smaller of $D(p\|q)$ and $D(q\|p)$ would give the stronger lower bound. 
 \end{proof}

Now we are ready to prove Theorem \ref{thm:faulty}. 
\vspace{-0.05in}
\begin{proof}[Proof of Theorem \ref{thm:faulty}]
We will show the claim by considering a balanced input. Recall that for a balanced input  either  the maximum size of a cluster is $\le \frac{4n}{k}$ or the minimum size of a cluster is $\ge \frac{n}{20k}$.   We will consider the two cases separately for the proof.

\noindent{\em Case 1: the maximum size of a cluster is $\le \frac{4n}{k}$.}

Suppose, the total number of queries is $T'$.  That means number of vertices involved in the queries is $\le 2T'$. Note that,  there are $k$ clusters and $n$ elements. Let $U$ be the set of vertices that are involved in less than $\frac{16T'}{n}$ queries. Clearly,
$
(n-|U|)\frac{16T'}{n} \le 2T',~\text{or } |U| \ge \frac{7n}8.$

Now we know from Lemma \ref{lem:clus} that there exists $\frac{4k}{5}$ clusters such that  a vertex $v$ from any one of these clusters will be assigned to a wrong cluster by any randomized algorithm with  probability $1/4$ unless the expected number  of queries involving this vertex is more than $\frac{k}{10\Delta(q\|p)}$. 

We claim that $U$ must have an intersection with at least one of these $\frac{4k}{5}$ clusters. If not, then  more than $\frac{7n}8$ vertices must belong to less than $k -\frac{4k}{5} = \frac{k}{5}$ clusters. Or the maximum size of a cluster will be $\frac{7n\cdot 5}{8k} >\frac{4n}{k},$
which is prohibited according to our assumption.

Now each vertex in the intersection of $U$ and the  $\frac{4k}{5}$ clusters are going to be assigned to an incorrect cluster with positive probability
if,
$
\frac{16T'}{n} \le \frac{k}{10\Delta(p\|q)}.
$
Therefore we must have 
$
T' \ge  \frac{nk}{160\Delta(p\|q)}.
$

\noindent{\em Case 2: the minimum size of a cluster is $\ge \frac{n}{20k}$.}

Let $U'$ be the set of clusters that are involved in at most $\frac{16T'}{k}$ queries. That means,
$
(k-|U'|)\frac{16T'}{k} \le 2T'.
$
This implies, $ |U'| \ge \frac{7k}{8}$. Now we know from Lemma \ref{lem:clus} that there exist $\frac{4k}{5}$ clusters (say $U^\ast$) such that  a vertex $v$ from any one of these clusters will be assigned to a wrong cluster by any randomized algorithm with  probability $1/4$ unless the expected number  of queries involving this vertex is more than $\frac{k}{10\Delta(p\|q)}$. 
Quite clearly $|U^\ast \cap U| \ge \frac{7k}{8} + \frac{4k}{5} - k = \frac{27k}{40}$.

Consider a cluster $V_i$ such that $i \in U^\ast \cap U$, which is always possible because the intersection is nonempty.
$V_i$ 
  is involved in at most $\frac{16T'}{k}$ queries. Let the minimum size of any cluster be $t$. Now, at least half of the vertices of $V_i$ must each be involved in at most $\frac{32T'}{kt}$ queries. Now each of these vertices must be involved in at least $\frac{k}{10\Delta(p\|q)}$ queries (see  Lemma \ref{lem:clus}) to avoid being assigned to a wrong cluster with positive probability. 
This means, $\frac{32T'}{kt} \ge \frac{k}{10\Delta(p\|q)}$
or $T' = \Omega\Big(\frac{nk}{\Delta(p\|q)}\Big),$
since $t \ge \frac{n}{20k}$. 
\end{proof}
\remove{
\subsection{Upper Bound}\label{sec:faultyub}
Now we provide an algorithm to retrieve the clustering with the help of the faulty oracle when no side information is present.
For this upper bound we assume for the faulty oracle that the error probability $p =1-q$, i.e., the oracle noise is symmetric. While this helps us maintain the clarity of the analysis, it is relatively straight forward to extend this to $p \ne 1-q$ case.

The algorithm is summarized in Algorithm \ref{algo:cc-error-noside},
and works as follows. It maintains an active list of clusters $A$, and a sample graph $G'=(V',E')$ which is the complete graph on a vertex set $V' \subset V$. 
 Initially, both of them are empty. The algorithm always maintains the invariant that any cluster in $A$ has at least $c\log{n}$ members where $c=\frac{12}{\cH^2(p\|1-p)}$, where $\cH(p\|1-p)$ is the Hellinger distance between Bernoulli $p$ and $1-p$.
 Note that the algorithm knows $p$. Furthermore, all $V'(G') \times V'(G')$ queries have been made. Now, when an element $v$ is considered by the algorithm (step $3$), first we check if $v$ can be included in any of the clusters in $A$. This is done by picking $c\log{n}$ distinct members from each cluster, and querying $v$ with them. If majority of these questions return $+1$, then $v$ is included in that cluster, and we proceed to the next vertex. Otherwise, if $v$ cannot be included in any of the clusters in $A$, then we add it to $V'(G')$, and ask all possible queries to the rest of the vertices in $G'$ with $v$. Once $G'$ has been modified, we extract the heaviest weight subgraph from $G'$ where weight on an edge $(u,v) \in E(G')$ is defined as $\omega_{u,v}=+1$ if the query answer for that edge is $+1$ and $-1$ otherwise. If that subgraph contains $c\log{n}$ members then we include it as a cluster in $A$. At that time, we also check whether any other vertex $u$ in $G'$ can join this newly formed cluster by counting if the majority of the (already) queried edges to this new cluster gave answer $+1$. At the end, all the clusters in $A$, and the maximum likelihood clustering from $G'$ is returned.  

Before showing the correctness of Algorithm \ref{algo:cc-error-noside}, we elaborate on finding the maximum likelihood estimate for the clusters in $G$.

\paragraph*{Finding the Maximum Likelihood Clustering of $V$ with faulty oracle}

We can view the clustering  problem in the following. We have an undirected graph $G(V\equiv[n],E)$, such that $G$ is a union of $k$ disjoint cliques $G_i(V_i, E_i)$, $i =1, \dots, k$. 
 The subsets $V_i \in [n]$ are unknown to us; they are called the clusters of $V$.
 The adjacency matrix of $G$ is a block-diagonal matrix. Let us denote this matrix by $A = (a_{i,j})$.

Now suppose, each edge  of $G$ is erased  independently with probability $p$, and at the same time each non-edge is
replaced with an edge with probability $p$. Let the resultant adjacency matrix of the modified graph be 
$Z= (z_{i,j})$. The aim is to recover $A$ from $Z$. 

The maximum likelihood recovery is given by the following:
\begin{align*}
\max_{S_\ell, \ell = 1, \dots: V = \sqcup_{\ell} S_\ell}& \prod_{\ell} \prod_{i, j \in S_\ell, i \ne j}P_+(z_{i,j}) \prod_{r,t, r\ne t} \prod_{i \in S_r, j\in S_t} P_-(z_{i,j})\\
= \max_{S_\ell, \ell = 1, \dots: V = \sqcup_{\ell=1} S_\ell}&\prod_{\ell} \prod_{i, j \in S_\ell, i \ne j}\frac{P_+(z_{i,j})}{P_-(z_{i,j})}
\prod_{i,j \in V, i \ne j} P_-(z_{i,j}).
\end{align*}
where, $P_+(1) =1-p, P_+(0) =p, P_-(1) =p, P_-(0) = 1-p.$
Hence, the ML recovery asks for, 
$$
\max_{S_\ell, \ell = 1, \dots: V = \sqcup_{\ell=1} S_\ell}\sum_{\ell} \sum_{i, j \in S_\ell, i \ne j}\ln \frac{P_+(z_{i,j})}{P_-(z_{i,j})}.
$$
Note that, $$\ln \frac{P_+(0)}{P_-(0)} = - \ln \frac{P_+(1)}{P_-(1)} = \ln\frac{p}{1-p} .$$
Hence the ML estimation is,
\begin{align}
\label{eq:ml}
\max_{S_\ell, \ell = 1, \dots: V = \sqcup_{\ell=1} S_\ell}\sum_{\ell} \sum_{i, j \in S_\ell, i \ne j}\omega_{i,j},
\end{align}
where $\omega_{i,j} = 2z_{i,j} -1, i \ne j$, i.e., $\omega_{i,j} =1,$ when $z_{i,j} =1$ and $\omega_{i,j} =-1$ when $z_{i,j} =0, i \ne j$. Further $\omega_{i,i} = z_{i,i} =0, i = 1, \dots, n.$

 Note that \eqref{eq:ml} is equivalent to finding correlation clustering in $G$ with the objective of maximizing the consistency with the edge labels, that is we want to maximize the total number of positive intra-cluster edges and total number of negative inter-cluster edges \cite{bbc:04,ms:10,mmv:14}. This can be seen as follows.
 
  \begin{align*}
& \max_{S_\ell, \ell = 1, \dots: V = \sqcup_{\ell=1} S_\ell}\sum_{\ell} \sum_{i, j \in S_\ell, i \ne j}\omega_{i,j}\\
&\equiv \max_{S_\ell, \ell = 1, \dots: V = \sqcup_{\ell=1} S_\ell} \big[\sum_{\ell} \sum_{i, j \in S_\ell, i \ne j}\big|(i,j): \omega_{i,j}=+1\big|-\big|(i,j): \omega_{i,j}=-1\big|\big]+\sum_{i,j \in V, i \ne j}\big|(i,j): \omega_{i,j}=-1\big| \\
&=\max_{S_\ell, \ell = 1, \dots: V = \sqcup_{\ell=1} S_\ell} \big[\sum_{\ell} \sum_{i, j \in S_\ell, i \ne j}\big|(i,j): \omega_{i,j}=+1\big|+\big[\sum_{r,t: r \ne t} \big|(i,j): i \in S_r, j \in S_t, \omega_{i,j}=-1\big|\big].
\end{align*}
Therefore \eqref{eq:ml} is same as correlation clustering, however viewing it as obtaining clusters with maximum intra-cluster weight helps us to obtain the desired running time of our algorithm. Also, note that, we have a random instance of correlation clustering here, and not a worst case instance.

We are now ready to prove the correctness of Algorithm \ref{algo:cc-error-noside}.

\begin{algorithm}
\caption{\cc~with Error \& No Side Information. Input: $\{V\}$}
\label{algo:cc-error-noside}
\begin{algorithmic}[1]
\State $V'=\emptyset, E'=\emptyset, G'=(V',E')$
\State $A=\emptyset$
\While{$\exists v \in V$ yet to be clustered}
\For{ each cluster $\calC \in A$}
\LineComment{Set $c=\frac{12}{\cH^2(p\|1-p)}$} 
\State Select $u_1, u_2,.., u_l$, where $l=c\log{n}$, distinct members from $\calC$ and obtain $\mathcal{O}_{p}(u_i,v)$, $i=1,2,..,l$. If the majority of these queries return $+$, then include $v$ in $\calC$. Break.
\EndFor
\If{ $v$ is not included in any cluster in $A$}
\State Add $v$ to $V'$. For every $u \in V' \setminus v$, obtain $\mathcal{O}_{p}(v,u)$. Add an edge $(v,u)$ to $E'(G')$ with weight $\omega_{u,v}=+1$ if $\mathcal{O}_{p}(u,v)==+1$, else with $\omega_{u,v}=-1$
\State \label{eq:find_set} Find the heaviest weight subgraph $S$ in $G'$. If $|S| \geq c\log{n}$, then add $S$ to the list of clusters in $A$, and remove the incident vertices and edges on $S$ from $V',E'$.
\While{ $\exists z \in V'$ with $\sum_{u \in S} \omega_{z,u} > 0$}
\State Include $z$ in $S$ and remove $z$ and all edges incident on it from $V',E'$.
\EndWhile
\EndIf
\EndWhile\\
\Return all the clusters formed in $A$ and the ML estimates from $G'$
\end{algorithmic}
\end{algorithm}

\paragraph*{Correctness of Algorithm \ref{algo:cc-error-noside}}
To establish the correctness of Algorithm \ref{algo:cc-error-noside}, we show the following. Suppose all $\binom{n}{2}$ queries on $V \times V$  have been made. If the Maximum Likelihood (ML) estimate of the clustering with these $\binom{n}{2}$  answers is same as the true clustering of $V$, then Algorithm \ref{algo:cc-error-noside} finds the true clustering with high probability. There are few steps to prove the correctness. 

The first step is to show that any set $S$ that is retrieved in step \ref{eq:find_set} of Algorithm \ref{algo:cc-error-noside} from $G'$, and added to $A$ is a subcluster of $V$ (Lemma \ref{lemma:mlG'}) (a subcluster is a subset of one of the clusters of $V$) . This establishes that all clusters in $A$ at any time are subclusters of some original cluster in $V$. Next, we show that elements that are added to a cluster in $A$, are added correctly, and no two clusters in $A$ can be merged (Lemma \ref{lemma:vertex}). Therefore, clusters obtained from $A$, are the true clusters. Finally, the remaining of the clusters can be retrieved from $G'$ by computing a ML estimate on $G'$ in step $15$, leading to theorem \ref{lemma:correct}.

\begin{lemma}
\label{lemma:mlG'}
Let $c'=6c=\frac{72}{\cH^2(p\|1-p)}$. 
Algorithm \ref{algo:cc-error-noside} in step \ref{eq:find_set} returns a subcluster of $V$ of size at least $c\log{n}$ with high probability if $G'$ contains a subcluster of $V$ of size at least $c'\log{n}$. Moreover, Algorithm \ref{algo:cc-error-noside} in step \ref{eq:find_set} does not return any set of vertices of size at least $c\log{n}$ if $G'$ does not contain a subcluster of $V$ of size at least $c\log{n}$.
\end{lemma}
\begin{proof}
Let $V'=\bigcup V'_i$, $i\in [1,k]$,  $V'_i \cap V'_j =\emptyset$ for $i \neq j$, and $V'_i \subseteq V_i$. Suppose without loss of generality $|V'_1| \geq |V'_2| \geq ....\geq |V'_k|$.

The lemma is proved via a series of claims.
\begin{claim}
\label{claim:1}
Let $|V'_1| \geq c'\log{n}$. Then in step \ref{eq:find_set}, a set $S \subseteq V_i$ for some $i \in [1,k]$ will be returned with size at least $c\log{n}$ with high probability.
\end{claim}

For an $i: |V'_i| \ge c' \log n,$ we have
\begin{align*}
\avg \sum_{s, t \in V'_i, s<t}\omega_{s,t} = \binom{|V'_i|}{2}((1-p)-p) = (1-2p)\binom{|V'_i|}{2}.
\end{align*}
Since $\omega_{s,t}$ are independent binary random variables, using the Hoeffding's inequality (Lemma \ref{lem:hoef1}),
\begin{align*}
\Pr\Big( \sum_{s, t \in V'_i, s<t}\omega_{s,t} \le \avg \sum_{s, t \in V'_i, s<t}\omega_{s,t} - u \Big) \le e^{-\frac{ u^2 }{2\binom{|V'_i|}{2}}}.
\end{align*}
Hence,
\begin{align*}
\Pr\Big( \sum_{s, t \in V'_i, s<t}\omega_{s,t} >(1-\delta) \avg \sum_{s, t \in V'_i, s<t}\omega_{s,t} \Big) \ge 1 - e^{-\frac{ \delta^2(1-2p)^2 \binom{|V'_i|}{2} }{2}}.
\end{align*}
Therefore with high probability $\sum_{s, t \in V'_i, s<t}\omega_{s,t} > (1-\delta) (1-2p)\binom{|V'_i|}{2} \ge  (1-\delta) (1-2p)\binom{c' \log n}2 >
\frac{c'^2}{3}(1-2p) \log^2 n,$ for an appropriately chosen $\delta$ (say $\delta=\frac{1}{3}$).

So, Algorithm \ref{algo:cc-error-noside} in step \eqref{eq:find_set} must return a set $S$ such that $|S| \ge c' \sqrt{\frac{2(1-2p)}{3}} \log n=c''\log{n}$ (define $c''=c' \sqrt{\frac{2(1-2p)}{3}}$) with high probability -  since otherwise $$\sum_{i, j \in S, i < j}\omega_{i,j} < \binom{c' \sqrt{\frac{2(1-2p)}{3}} \log n}{2} < \frac{c'^2}{3}(1-2p) \log^2 n.$$

Now let $S \nsubseteq V_i$ for any $i$. Then $S$ must have intersection with at least $2$ clusters. Let $V_i \cap S = C_i$ and
let $j^\ast = \arg \min_{i: C_i \neq \emptyset} |C_i|$. We claim that,
\begin{equation}\label{eq:reduc}
\sum_{i, j \in S, i < j}\omega_{i,j}  < \sum_{i, j \in S \setminus C_{j^\ast}, i < j}\omega_{i,j},
\end{equation}
with high probability.
Condition \eqref{eq:reduc} is equivalent to,
$$
\sum_{i, j \in  C_{j^\ast}, i < j}\omega_{i,j} + \sum_{i \in C_{j^\ast}, j \in S \setminus C_{j^\ast}} \omega_{i,j} <0.
$$
However this is true because,
\begin{enumerate}
\item $
\avg \Big(\sum_{i, j \in  C_{j^\ast}, i < j}\omega_{i,j} \Big) = (1-2p) \binom{|C_{j^\ast}|}{2}$ and 
$\avg\Big(\sum_{i \in C_{j^\ast}, j \in S \setminus C_{j^\ast}} \omega_{i,j} \Big) = - (1-2p)|C_{j^\ast}|\cdot |S\setminus C_{j^\ast}|.
$
\item As long as $|C_{j^\ast}| \ge 2\sqrt{\log n}$ we have, from Hoeffding's inequality (Lemma \ref{lem:hoef1}),
$$
\Pr\Big(\sum_{i, j \in  C_{j^\ast}, i < j}\omega_{i,j}  \ge (1+\nu) (1-2p) \binom{|C_{j^\ast}|}{2}\Big) \le e^{-\frac{\nu^2(1-2p)^2\binom{|C_{j^\ast}|}{2}}{2}} = o_n(1).
$$ 
While at the same time, 
$$
\Pr\Big( \sum_{i \in C_{j^\ast}, j \in S \setminus C_{j^\ast}} \omega_{i,j}  \ge - (1-\nu) (1-2p)|C_{j^\ast}|\cdot |S\setminus C_{j^\ast}|\Big) \le e^{-\frac{\nu^2 (1-2p)^2 |C_{j^\ast}|\cdot |S\setminus C_{j^\ast}|}{2}} = o_n(1).
$$
In this case of course with high probability $$
\sum_{i, j \in  C_{j^\ast}, i < j}\omega_{i,j} + \sum_{i \in C_{j^\ast}, j \in S \setminus C_{j^\ast}} \omega_{i,j} <0.
$$
\item When  $|C_{j^\ast}| < 2\sqrt{\log n}$, we have,
$$
\sum_{i, j \in  C_{j^\ast}, i < j}\omega_{i,j} \le \binom{|C_{j^\ast}|}{2} \le 2 \log^2 n.
$$
While at the same time, 
$$
\Pr\Big( \sum_{i \in C_{j^\ast}, j \in S \setminus C_{j^\ast}} \omega_{i,j}  \le (1-\nu) (1-2p)|C_{j^\ast}|\cdot |S\setminus C_{j^\ast}|\Big) \le e^{-\frac{\nu^2 (1-2p)^2 |C_{j^\ast}|\cdot |S\setminus C_{j^\ast}|}{2}} = o_n(1).
$$
Hence, even in this case, with high probability,
$$
\sum_{i, j \in  C_{j^\ast}, i < j}\omega_{i,j} + \sum_{i \in C_{j^\ast}, j \in S \setminus C_{j^\ast}} \omega_{i,j} <0.
$$
\end{enumerate}
Hence \eqref{eq:reduc} is true with high probability. But then the algorithm \ref{algo:cc-error-noside} in step \ref{eq:find_set} would not return $S$, but will return $S \setminus C_{j^\ast}$. Hence, we have run into a contradiction. This means $S \subseteq V_i$ for some $V_i$. 

We know $|S| \ge c' \sqrt{\frac{2(1-2p)}{3}} \log n$, while $|V_1'| \geq c'\log{n}$. In fact, with high probability, $|S| \geq \frac{(1-\delta)}{2}c'\log{n}$. Since all the vertices in $S$ belong to the same cluster in $V$, this holds again by the application of Hoeffding's inequality. Otherwise, the probability that the weight of $S$ is at least as high as the weight of $V_1'$ is at most $\frac{1}{n^2}$.

\begin{claim}
\label{claim:2}
If $|V'_1| < c\log{n}$. then in step \ref{eq:find_set} of Algorithm \ref{algo:cc-error-noside}, no subset of size $> c\log{n}$ will be returned.  
\end{claim}

If Algorithm \ref{algo:cc-error-noside} in step \ref{eq:find_set} returns a set $S$ with $|S| > c\log{n}$ then $S$ must have intersection with at least $2$ clusters in $V$. Now following the same argument as in Claim \ref{claim:1} to establish Eq. \eqref{eq:reduc}, we arrive to a contradiction, and $S$ cannot be returned.

This establishes the lemma.
\end{proof}

\begin{lemma}
\label{lemma:vertex}
 The collection $A$ contains all the true clusters of $V$ of size $\geq c'\log{n}$ at the end of Algorithm \ref{algo:cc-error-noside} with high probability.
\end{lemma}
\begin{proof}
From Lemma \ref{lemma:mlG'}, any cluster that is computed in step \ref{eq:find_set} and added to $A$ is a subset of some original cluster in $V$, and has size at least $c\log{n}$ with high probability. Moreover, whenever $G'$ contains a subcluster of $V$ of size at least $c'\log{n}$, it is retrieved by our Algorithm and added to $A$.

A vertex $v$ is added to a cluster in $A$ either is step 5 or step 11. Suppose, $v$ has been added to some cluster $\calC \in A$. Then in both the cases, $|\calC| \geq c\log{n}$ at the time $v$ is added, and there exist $l=c\log{n}$ distinct members of $\calC$, say, $u_1,u_2,..,u_l$ such that majority of the queries of $v$ with these vertices returned $+1$. By the standard Chernoff-Hoeffding bound (Lemma \ref{lem:hoef1}), $\Pr(v \notin \calC) \leq \text{exp}(-c\log{n}\frac{(1-2p)^2}{12p})=\text{exp}(-\frac{12}{\cH^2(p\|1-p)}\log{n}\frac{(1-2p)^2}{12p}).$ Note that,
$\cH^2(p \|1-p) =(\sqrt{p}-\sqrt{1-p})^2 \le \frac{(\sqrt{p}-\sqrt{1-p})^2}{2p}$, as $p \le 1/2$, and $(1-2p)^2 = (1-p -p)^2 = (\sqrt{p}-\sqrt{1-p})^2(\sqrt{p}+\sqrt{1-p})^2 \ge  (\sqrt{p}-\sqrt{1-p})^2(p+1-p)^2 =  (\sqrt{p}-\sqrt{1-p})^2$. Therefore,
$\Pr(v \notin \calC) \leq \text{exp}(-2\log n) = \frac1{n^2}$.


On the other hand, if there exists a cluster $\calC \in A$ such that $v \in \calC$, and $v$ has already been considered by the algorithm, then either in step 5 or step 11, $v$ will be added to $\calC$. This again follows by the Chernoff-Hoeffding bound, as $\Pr(v \text{ not included in } \calC \mid v \in \calC) \leq \text{exp}(-c\log{n}\frac{(1-2p)^2}{8(1-p)})=\text{exp}(-\frac{12}{\cH^2(p\|1-p)}\frac{(\sqrt{p}-\sqrt{1-p})^2}{8(1-p)}\log{n})\leq \text{exp}(-\frac32\log{n})$. Therefore, if we set $c=\frac{12}{\cH^2(p\|1-p)}$, then for all $v$, if $v$ is included in a cluster in $A$, the assignment is correct with probability at least $1-\frac{2}{n}$. Also, the assignment happens as soon as such a cluster is formed in $A$.

Furthermore, two clusters in $A$ cannot be merged. Suppose, if possible there are two clusters $\calC_1$ and $\calC_2$ both of which are proper subset of some original cluster in $V$. Let without loss of generality $\calC_2$ is added later in $A$. Consider the first vertex $v \in \calC_2$ that is considered by our Algorithm \ref{algo:cc-error-noside} in step $3$. If $\calC_1$ is already there in $A$ at that time, then with high probability $v$ will be added to $\calC_1$ in step 5. Therefore, $\calC_1$ must have been added to $A$ after $v$ has been considered by our algorithm and added to $G'$. Now, at the time $\calC_1$ is added to $A$ in step 9, $v \in V'$, and again $v$ will be added to $\calC_1$ with high probability in step 11--thereby giving a contradiction.

This completes the proof of the lemma.
\end{proof}
All this leads us to the following theorem. 
\begin{theorem}
\label{lemma:correct}
If the ML estimate of the clustering of $V$ with all possible $\binom{n}{2}$ queries return the true clustering, then Algorithm \ref{algo:cc-error-noside} returns the true clusters with high probability. Moreover, Algorithm \ref{algo:cc-error-noside} returns all the true clusters of $V$ of size at least $c'\log{n}$ with high probability.
\end{theorem}
\begin{proof}
From Lemma \ref{lemma:mlG'} and Lemma \ref{lemma:vertex}, $A$ contains all the true clusters of $V$ of size at least $c'\log{n}$ with high probability. Any vertex that is not included in the clusters in $A$ at the end of Algorithm \ref{algo:cc-error-noside} are in $G'$, and $G'$ contains all possible pairwise queries among them. Clearly, then the ML estimate of $G'$ will be the true ML estimate of the clustering restricted to these clusters.
\end{proof}

\paragraph*{Query Complexity of Algorithm \ref{algo:cc-error-noside}}
\begin{lemma}
The query complexity of Algorithm \ref{algo:cc-error-noside} is $O\Big(\frac{nk\log{n}}{\cH^2(p\|1-p)}\Big)$.
\end{lemma}
\begin{proof}
Let there be $k'$ clusters in $A$ when $v$ is considered in step $3$ of Algorithm \ref{algo:cc-error-noside}. Then $v$ is queried with at most $ck'\log{n}$ current members, $c\log{n}$ each from these $k'$ clusters. If the membership of $v$ does not get determined then $v$ is queried with all the vertices in $G'$. We have seen in the correctness proof (Lemma \ref{lemma:correct}) that if $G'$ contains at least $c'\log{n}$ vertices from any original cluster, then ML estimate on $G'$ retrieves those vertices as a cluster in step 9 with high probability. Hence, when $v$ is queried with all vertices in $G'$, $|V'|\leq (k-k')c'\log{n}$. Thus the total number of queries made to determine the membership of $v$ is at most $c'k\log{n}$, where $c'=6c=\frac{72}{\cH^2(p\|1-p)}$ when the error probability is $p$. This gives the query complexity of Algorithm \ref{algo:cc-error-noside} considering all the vertices.

This matches the lower bound computed in Section \ref{sec:error-lc} within a $\log{n}$ factor. 
\end{proof}

Now combining all these we get the statement of Theorem \ref{theorem:cc-error}.

\begin{theorem-n}[\ref{theorem:cc-error}]
Faulty Oracle with No Side Information. 
Let $V = \sqcup_{i=1}^k {V}_i$ be the true clustering and $V=\sqcup_{i=1}^k \hat{V}_i$ be the ML estimate of  the clustering that can be found with all $\binom{n}2$ queries to the faulty oracle.
There exists an algorithm with query complexity $O(\frac{1}{\cH^2(p\|1-p)}nk\log{n})$ for \cc~that returns exactly this ML estimate with high probability when query answers are incorrect with probability $p$. 
This matches the information theoretic lower bound on the query complexity within a $\log{n}$ factor. Moreover, the algorithm returns all the true clusters of $V$ of size at least $\frac{72\log{n}}{\cH^2(p\|1-p)}$ with high probability. 
\end{theorem-n}
In particular when error probability $p= \frac12 -\lambda$, the query complexity of  Algorithm \ref{algo:cc-error-noside} is $O\Big(\frac{nk \log n}{\lambda^2}\Big).$

\paragraph*{Running Time of Algorithm \ref{algo:cc-error-noside} and Further Discussions}

In step \ref{eq:find_set} of Algorithm \ref{algo:cc-error-noside}, we need to find a large cluster of size at least $O(\frac{1}{\cH^2(p\|1-p)}\log{n})$ of the original input $V$ from $G'$. By Lemma \ref{lemma:mlG'}, if we can extract the heaviest weight subgraph in $G'$ where edges are labelled $\pm1$, and that subgraph meets the required size bound, then with high probability, it is a subset of an original cluster. This subset can of course be computed in $O(n^{\frac{1}{\cH^2(p\|1-p)}\log{n}})$ time. Since size of $G'$ is bounded by $O(\frac{k}{\cH^2(p\|1-p)}\log{n})$, the running time is $O([\frac{k}{\cH^2(p\|1-p)}\log{n}]^{\frac{\log{n}}{\cH^2(p\|1-p)}})$. While, query complexity is independent of running time, it is unlikely that this running time can be improved to a polynomial. This follows from the planted clique conjecture.

\begin{conjecture}[Planted Clique Hardness]
Given an Erd\H{o}s-R\'{e}nyi random graph $G(n,p)$, with $p=\frac{1}{2}$, the planted clique conjecture states that if we plant in $G(n,p)$ a clique of size $t$ where $t=[O(\log{n}), o(\sqrt{n})]$, then there exists no polynomial time algorithm to recover the largest clique in this planted model.
\end{conjecture}

Given such a graph with a planted clique of size $t=\Theta(\log{n})$, we can construct a new graph $H$ by randomly deleting each edge with probability $\frac{1}{3}$. Then in $H$, there is one cluster of size $t$ where edge error probability is $\frac{1}{3}$ and the remaining clusters are singleton with inter-cluster edge error probability being $(1-\frac{1}{2}-\frac{1}{6})=\frac{1}{3}$. So, if we can detect the heaviest weight subgraph in polynomial time in Algorithm \ref{algo:cc-error-noside}, there will be a polynomial time algorithm for the planted clique problem.

 \paragraph*{Polynomial time algorithm}
 We can reduce the running time from quasi-polynomial to polynomial, by paying higher in the query-complexity. Suppose, we accept a subgraph extracted from $G'$ as valid and add it to $A$ iff its size is $\Omega(k)$. Then note that since $G'$ can contain at most $k^2$ vertices, such a subgraph can be obtained in polynomial time following the algorithm of correlation clustering with noisy input \cite{ms:10}, where all the clusters of size at least $O(\sqrt{n})$ are recovered on a $n$-vertex graph. Since our ML estimate is correlation clustering, we can employ \cite{ms:10}. For $k \geq \frac{\log n}{\cH^2(p\|1-p)}$, the entire analysis remains valid, and we get a query complexity of $\tilde{O}(nk^2)$ as opposed to $O(\frac{nk}{\cH^2(p\|1-p)})$. If $k < \frac{\log n}{\cH^2(p\|1-p)2}$, then clusters that have size less than $ \frac{\log n}{\cH^2(p\|1-p)}$ are anyway not recoverable. Note that, any cluster that has size less than $k$ are not recovered in this process, and this bound only makes sense when $k < \sqrt{n}$. When $k\geq\sqrt{n}$, we can however recover all clusters of size at least $O(\sqrt{n})$.
 
\begin{corollaryn}[\ref{cor:er-poly}]
 There exists a polynomial time algorithm with query complexity $O(\frac{nk^2}{\cH^2(p\|1-p)})$ for \cc~when query answers are incorrect with probability $p$, which recovers all clusters of size at least $O(\max{ \{\frac{\log n}{\cH^2(p\|1-p)},k\}})$. 
 \end{corollaryn}
 
 This also leads to an improved algorithm for correlation clustering over noisy graph. Previously, the works of \cite{ms:10,bbc:04} can only recover cluster of size at least $O(\sqrt{n})$. However, now if $k \in [\Omega(\frac{\log{n}}{\cH^2(p\|1-p)}), o(\sqrt{n})]$, using this algorithm, we can recover all clusters of size at least $k$.

 \subsection{With Side Information}\label{sec:faultysideub}
 The algorithm for \cc~with side information when the oracle may return erroneous answers is a direct combination of Algorithm \ref{algo:cc-exact} and Algorithm \ref{algo:cc-error-noside}. We assume side information is less accurate than querying because otherwise, querying is not useful. Or in other words $\cH(f_+\|f_-) < \cH(p\|1-p)$.
 
 We therefore use only the queried answers to extract the heaviest subgraph from $G'$, and add that to the list $A$. For the clusters in list $A$, we follow the strategy of Algorithm \ref{algo:cc-exact} to recover the underlying clusters. The pseudocode is given in Algorithm \ref{algo:cc-error-side}. The correctness of the algorithm follows directly from the analysis of Algorithm \ref{algo:cc-exact} and  Algorithm \ref{algo:cc-error-noside}.  
 
 We now analyze the query complexity. Consider a vertex $v$ which needs to be included in a cluster. Let there be $(r-1)$ other vertices from the same cluster as $v$ that have been considered by the algorithm prior to $v$.
\begin{enumerate}
 \item Case 1. $r \in [1,c\log{n}]$, the number of queries is at most $kc\log{n}$. In that case $v$ is added to $G'$ according to Algorithm \ref{algo:cc-error-noside}.
 \item Case 2. $r \in (c\log{n},2M]$, the number of queries can be $k*c\log{n}$. In that case, the cluster that $v$ belongs to is in $A$, but has not grown to size $2M$. Recall $M=O(\frac{\log{n}}{\cH^2(f_+\| f_-)})$. In that case, according to Algorithm \ref{algo:cc-exact}, $v$ may need to be queried with each cluster in $A$, and according to Algorithm \ref{algo:cc-error-noside}, there can be at most $c\log{n}$ queries for each cluster in $A$. 
 \item Case 3. $r \in (2R,|C|]$, the number of queries is at most $c\log{n}*\log{n}$. In that case, according to Algorithm \ref{algo:cc-exact}, $v$ may need to be queried with at most $\lceil \log{n} \rceil$ clusters in $A$, and according to Algorithm \ref{algo:cc-error-noside}, there can be at most $c\log{n}$ queries for each chosen cluster in $A$. 
  \end{enumerate}
  
  Hence, the total number of queries per cluster is at most $O(kc^2(\log{n})^2+(2M-c\log{n})kc\log{n}+(|C|-2M)c(\log{n})^2)$. So, over all the clusters, the query complexity is $O(nc(\log{n})^2+k^2Mc\log{n})$.
Note that, if have instead insisted on a Monte Carlo algorithm with known $f_+$ and $f_-$, then the query complexity would have been $O(k^2Mc\log{n})$. 

 \begin{theorem-n}[\ref{thm:div-new-err}]
 Let $V = \sqcup_{i=1}^k \hat{V}_i$ be the ML estimate of  the clustering that can be found with all $\binom{n}2$ queries to the faulty oracle.
 Let $f_+,f_-$ be pmfs.  With side information and faulty oracle with error probability $p$, there exist an algorithm for \cc~with query complexity $O(\min{\{nk, \frac{k^2}{\cH^2(f_+\| f_-)}\}}\frac{\log{n}}{\cH^2(p\|1-p)})$ when  $f_+,f_-$ known, and an algorithm with expected query complexity $O(n+\min{\{nk, \frac{k^2}{\cH^2(f_+\| f_-)}\}}\frac{\log{n}}{\cH^2(p\|1-p)})$ when  $f_+,f_-$ unknown, that recovers the ML estimate $\sqcup_{i=1}^k \hat{V}_i$ exactly with high probability. 
 \end{theorem-n}

 \begin{algorithm}
\caption{\cc~with Error \& Side Information. Input: $\{V,W\}$}
\label{algo:cc-error-side}
\begin{algorithmic}[1]
\State $V'=\emptyset, E'=\emptyset, G'=(V',E')$, $A=\emptyset$
\While{$V\neq \emptyset$}
\State If $A$ is empty, then pick an arbitrary vertex $v$ and Go to Step \ref{eq:add_G'}
\LineComment{Let the number of current clusters in $A$ be $l \geq 1$}
\State Order the existing clusters in $A$ in nonincreasing size of current membership. 
\LineComment{Let $|\calC_{1}| \geq |\calC_{2}| \geq \ldots \geq |\calC_{l}|$ be the ordering (w.l.o.g).}
\For{$j=1$ to $l$ }
\State If $\exists v \in V$ such that $j=\max_{i\in[1,l]} {\sf Membership}(v, \cC_i)$, then select $v$ and Break;
\EndFor
 \State Select $u_1, u_2,.., u_l \in \calC_j$, where $l=c\log{n}$, distinct members from $\calC_j$ and obtain $\cO_{p,p}(u_i,v)$, $i=1,2,..,l$. $checked(v,j)=true$
\If{the majority of these queries return $+1$}
\State Include $v$ in $\calC_{j}$. $V=V \setminus v$
\Else
\LineComment{logarithmic search for membership in the large groups. Note $s \leq \lceil \log{k} \rceil$}
\State Group $\calC_1,\calC_2,...,\calC_{j-1}$ into $s$ consecutive classes $H_1, H_2,...,H_s$ such that the clusters in group $H_i$ have their current sizes in the range $[\frac{|\calC_1|}{2^{i-1}}, \frac{|\calC_1|}{2^i})$
\For{$i=1$ to $s$}
\State $j=\max_{a:\calC_a \in H_i}  {\sf Membership}(v, \cC_a)$
\State Select $u_1, u_2,.., u_l \in \calC_j$, where $l=c\log{n}$, distinct members from $\calC_j$ and obtain $\cO_{p,p}(u_i,v)$, $i=1,2,..,l$. $checked(v,j)=true$.
\If{the majority of these queries return $+1$}
\State Include $v$ in $\calC_{j}$. $V=V \setminus v$. Break.
\EndIf
\EndFor
\LineComment{exhaustive search for membership in the remaining groups in $A$}
\If{ $v \in V$}
\For{$i=1$ to $l+1$}
	\If{$i=l+1$} 
	\Comment{$v$ {\em does not belong to any of the existing clusters}}
	\State \label{eq:add_G'}Add $v$ to $V'$. Set $V=V\setminus v$
	\State For every $u \in V' \setminus v$, obtain $\cO_{p,p}(v,u)$. Add an edge $(v,u)$ to $E'(G')$ with weight $\omega(u,v)=+1$ if $\cO_{p,p}(v,u)==+1$, else with $\omega(u,v)=-1$
\State \label{eq1:find_set} Find the heaviest weight subgraph $S$ in $G'$. If $|S| \geq c\log{n}$, then add $S$ to the list of clusters in $A$, and remove the incident vertices and edges on $S$ from $V',E'$.
 \While{ $\exists z \in V'$ with $\sum_{u \in S} \omega(z,u) > 0$}
\State Include $z$ in $S$ and remove $z$ and all edges incident on it from $V',E'$.
\EndWhile
\State Break;
\Else
\If{ $checked(v,i)\neq true$ }
\State Select $u_1, u_2,.., u_l \in \calC_j$, where $l=c\log{n}$, distinct members from $\calC_j$ and $\cO_{p,p}(u_i,v)$, $i=1,2,..,l$. $checked(v,i)=true$.
\If{the majority of these queries return $+1$}
\State Include $v$ in $\calC_{j}$. $V=V \setminus v$. Break.
\EndIf
\EndIf
\EndIf
\EndFor
\EndIf
\EndIf
\EndWhile\\
\Return all the clusters formed in $A$ and the ML estimates from $G'$
\end{algorithmic}
\end{algorithm}
}

\section{Algorithms}\label{sec:faultyub}
In this section, we first develop an information theoretically optimal algorithm for clustering with faulty oracle within an $O(\log{n})$ factor of the optimal query complexity. Next, we show how the ideas can be extended to make it computationally efficient. We consider both the adaptive and non-adaptive versions. 

\subsection{Information-Theoretic Optimal Algorithm}
\label{sec:info-theory}

Let $V = \sqcup_{i=1}^k {V}_i$ be the true clustering and $V=\sqcup_{i=1}^k \hat{V}_i$ be the maximum likelihood (ML) estimate of  the clustering that can be found when all $\binom{n}2$ queries have been made to the faulty oracle. Our first result obtains a query complexity upper bound within an $O(\log{n})$ factor of the information theoretic lower bound. The algorithm runs in quasi-polynomial time, and we show this is the optimal possible assuming the famous {\em planted clique} hardness. Next, we show how the ideas can be extended to make it computationally efficient  in Section~\ref{sec:efficient}. We consider both the adaptive and non-adaptive versions. 

In particular, we prove the following theorem.
\begin{theorem}
\label{theorem:cc-error}
There exists an algorithm with query complexity $O(\frac{nk\log{n}}{(1-2p)^2})$ for \cc~that returns the ML estimate with high probability when query answers are incorrect with probability $p < \frac{1}{2}$. Moreover, the algorithm returns all true clusters of $V$ of size at least $\frac{C\log{n}}{(1-2p)^2}$ for a suitable constant $C$ with probability $1-o_{n}(1)$. 
\end{theorem}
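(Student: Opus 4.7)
The plan is to design an adaptive algorithm that maintains a list $A$ of ``confirmed'' subclusters, each already of size at least $c\log n$ with $c = \Theta\!\left(\frac{1}{(1-2p)^2}\right)$, together with an auxiliary ``pending'' graph $G'$ on the vertices not yet placed in any cluster in $A$, on which \emph{all} pairwise queries have been made (with $\pm 1$ edge weights according to the oracle's answers). When the algorithm considers a new vertex $v$, it first tries to place $v$ into some $\calC\in A$ by picking $c\log n$ distinct members of $\calC$, querying $v$ against each, and taking a majority vote; if $v$ joins no cluster, it is added to $V(G')$ and queried against every existing vertex of $G'$. After each such insertion the algorithm extracts from $G'$ the vertex set $S$ maximizing $\sum_{i<j\in S}\omega_{i,j}$, promotes $S$ to a new cluster in $A$ whenever $|S|\ge c\log n$, and then greedily absorbs into $S$ any $z\in V'$ with positive majority edge weight to $S$.

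The correctness analysis breaks into three Hoeffding/Chernoff arguments. First, for the majority-vote membership test, the gap in expected vote between a correct and incorrect cluster is proportional to $(1-2p)$, so Hoeffding's inequality with $c\log n = \Theta(\log n/(1-2p)^2)$ samples yields error probability $O(1/n^2)$ per test, and a union bound over $O(nk)$ such tests gives overall failure $o(1)$. Second, for the heaviest-subgraph step, I would show that whenever $G'$ contains a pure subcluster of size $\ge c'\log n$ (for a slightly larger constant $c'=6c$), such a subcluster has intra-weight concentrated around $(1-2p)\binom{|V'_i|}{2}$, forcing the optimal $S$ to be large; and conversely, if the optimal $S$ mixed two clusters, one could strictly increase the objective by removing the smallest ``contamination'' $C_{j^*}$ because its expected positive intra-contribution $(1-2p)\binom{|C_{j^*}|}{2}$ is dominated by its expected negative cross-contribution $-(1-2p)|C_{j^*}|\cdot|S\setminus C_{j^*}|$. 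Hoeffding concentration (split into the cases $|C_{j^*}|\ge 2\sqrt{\log n}$ and $|C_{j^*}|<2\sqrt{\log n}$) makes this reduction hold with probability $1-o(1/n^2)$, so the returned $S$ must be a pure subcluster. Third, combining the previous two facts inductively over insertions ensures no two clusters in $A$ ever correspond to the same true $V_i$ (since a second incipient subcluster would be absorbed into the existing one by the majority-vote test before growing to size $c\log n$).

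For query complexity, each vertex $v$ incurs at most $c\log n$ queries per cluster currently in $A$ (one majority-vote test per cluster, stopping at the first success), and at most $|V(G')|$ queries if $v$ is added to $G'$. The invariant that every pure subcluster present in $G'$ of size $\ge c'\log n$ is immediately extracted into $A$ keeps $|V(G')| = O(kc'\log n)$ at all times, so $v$ contributes $O(k\log n/(1-2p)^2)$ queries regardless of which branch it takes. Summing over $n$ vertices yields the claimed $O\!\left(\frac{nk\log n}{(1-2p)^2}\right)$ bound, and the ``recovers every cluster of size $\ge C\log n/(1-2p)^2$'' statement follows because any such cluster eventually places $\ge c'\log n$ of its members into $G'$ and hence triggers the extraction step.

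The main obstacle is the second step: rigorously proving that the argmax over all subsets of $V(G')$ of the signed-edge weight is with high probability a \emph{pure} subcluster rather than a contaminated set. The difficulty is that the optimization is over exponentially many subsets, so a naive union bound would fail; the cleaner route is the local-swap argument sketched above, showing that for any candidate mixed $S$ the subset $S\setminus C_{j^*}$ beats it with high probability, combined with a careful two-regime Hoeffding bound to handle the case when the smallest contamination $|C_{j^*}|$ is itself tiny. Once this structural lemma is in place, the remaining pieces (majority-vote correctness, query bookkeeping) are standard concentration arguments.
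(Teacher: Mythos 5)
Your proposal is essentially the paper's own construction and analysis: the same active-list/pending-graph architecture, the same majority-vote membership test with $c\log n=\Theta(\log n/(1-2p)^2)$ samples, the same heaviest-weight-subgraph extraction justified by the local-swap argument (remove the smallest contaminating piece $C_{j^\ast}$, with a two-regime Hoeffding bound depending on whether $|C_{j^\ast}|$ is above or below roughly $\sqrt{\log n}/(1-2p)$), and the same per-vertex query accounting via the invariant $|V(G')|=O(k\log n/(1-2p)^2)$. The only detail you leave implicit is the final phase in which the ML estimate is computed on the fully queried residual graph $G'$ to account for the clusters of size below $c'\log n$ (needed for the ``returns the ML estimate'' part of the statement), but this is a one-line addition that does not change the argument.
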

\begin{remark}
Assuming $p=\frac{1}{2}-\lambda$, as $\lambda \to 0$, $\Delta(p\|1-p)=(1-2p) \ln \frac{1-p}{p} = 2\lambda\ln\frac{1/2+\lambda}{1/2 -\lambda} =2\lambda\ln(1+\frac{2\lambda}{1/2-\lambda}) \le \frac{4\lambda^2}{1/2-\lambda} = O(\lambda^2)=O((1-2p)^2)$, matching the query complexity lower bound within an $O(\log{n})$ factor. Thus our upper bound is within a $\log{n}$ factor of the information theoretic optimum in this range.
\end{remark}

\paragraph*{\bf Finding the Maximum Likelihood Clustering of $V$ with faulty oracle}
We can view the clustering  problem as following. We have an undirected graph $G(V\equiv[n],E)$, such that $G$ is a union of $k$ disjoint cliques $G_i(V_i, E_i)$, $i =1, \dots, k$. 
 The subsets $V_i \in [n]$ are unknown to us; they are called the clusters of $V$.
 The adjacency matrix of $G$ is a block-diagonal matrix. Let us denote this matrix by $A = (a_{i,j})$.

Now suppose, each edge  of $G$ is erased  independently with probability $p$, and at the same time each non-edge is
replaced with an edge with probability $p$. Let the resultant adjacency matrix of the modified graph be 
$Z= (z_{i,j})$. The aim is to recover $A$ from $Z$.


\begin{lemma}
\label{lemma:ml}
The maximum likelihood recovery is given by the following:
\begin{align*}
\max_{S_\ell, \ell = 1, \dots: V = \sqcup_{\ell} S_\ell}& \prod_{\ell} \prod_{i, j \in S_\ell, i \ne j}P_+(z_{i,j}) \prod_{r,t, r\ne t} \prod_{i \in S_r, j\in S_t} P_-(z_{i,j})\\
= \max_{S_\ell, \ell = 1, \dots: V = \sqcup_{\ell=1} S_\ell}&\prod_{\ell} \prod_{i, j \in S_\ell, i \ne j}\frac{P_+(z_{i,j})}{P_-(z_{i,j})}
\prod_{i,j \in V, i \ne j} P_-(z_{i,j}).
\end{align*}
where, $P_+(1) =1-p, P_+(0) =p, P_-(1) =p, P_-(0) = 1-p.$
\end{lemma}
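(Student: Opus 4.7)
The plan is to establish the likelihood expression by direct computation from the noise model, then manipulate it algebraically into the ratio form. Since the statement is essentially unpacking the definition of maximum likelihood under a product-of-Bernoullis model, the main work is bookkeeping rather than a deep argument.

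First I would fix a candidate clustering $V = \sqcup_\ell S_\ell$ and compute the probability that the noise channel produces the observed $Z = (z_{i,j})$ starting from the block-diagonal $A$ corresponding to $\{S_\ell\}$. Because each edge of $G$ is independently erased (with probability $p$) and each non-edge is independently promoted to an edge (with probability $p$), the entries $z_{i,j}$ for $i \ne j$ are mutually independent conditioned on the clustering. Thus the likelihood factorizes as $\prod_{i \ne j} \Pr(z_{i,j} \mid \{S_\ell\})$, and it suffices to identify each factor.

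Next I would split the pairs $(i,j)$ into intra-cluster and inter-cluster pairs with respect to $\{S_\ell\}$. For an intra-cluster pair $i,j \in S_\ell$, the true entry of $A$ is $1$, so $z_{i,j} = 1$ with probability $1-p$ and $z_{i,j} = 0$ with probability $p$; this is precisely $P_+(z_{i,j})$. For an inter-cluster pair $i \in S_r$, $j \in S_t$ with $r \ne t$, the true entry is $0$, so $z_{i,j} = 1$ with probability $p$ and $z_{i,j} = 0$ with probability $1-p$; this is $P_-(z_{i,j})$. Collecting the two groups yields the first displayed product. (I would be a bit careful to treat ordered versus unordered pairs consistently, but the symmetry $z_{i,j}=z_{j,i}$ means this only changes the expression by a fixed exponent that does not affect the argmax.)

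Finally, to obtain the second displayed equality, I would multiply and divide each intra-cluster factor by $P_-(z_{i,j})$, giving $\prod_\ell \prod_{i,j\in S_\ell, i\ne j}\frac{P_+(z_{i,j})}{P_-(z_{i,j})}$ times $\prod_{i\ne j} P_-(z_{i,j})$, since together the intra- and inter-cluster pairs cover all ordered pairs $i \ne j$. The trailing factor $\prod_{i\ne j}P_-(z_{i,j})$ depends only on $Z$, not on the clustering, so it can be pulled outside the $\max$ without affecting the maximizer. The only potential pitfall is a clean accounting of the pairs (to avoid double-counting or missing the $i=j$ diagonal), which is handled by restricting to $i \ne j$ throughout; there is no real obstacle beyond this bookkeeping.
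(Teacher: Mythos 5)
Your proposal is correct and follows essentially the same route as the paper, which treats the first display as the likelihood obtained directly from the independence of the per-pair noise (intra-cluster pairs contributing $P_+$, inter-cluster pairs contributing $P_-$) and the second display as the multiply-and-divide identity whose trailing factor $\prod_{i\ne j}P_-(z_{i,j})$ is independent of the clustering. Your added remarks about ordered versus unordered pairs and the positivity of the constant factor are fine and do not change the argument.
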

Therefore, the ML recovery asks for, 
$$
\max_{S_\ell, \ell = 1, \dots: V = \sqcup_{\ell=1} S_\ell}\sum_{\ell} \sum_{i, j \in S_\ell, i \ne j}\ln \frac{P_+(z_{i,j})}{P_-(z_{i,j})}.
$$
Note that, $$\ln \frac{P_+(0)}{P_-(0)} = - \ln \frac{P_+(1)}{P_-(1)} = \ln\frac{p}{1-p} .$$
Hence the ML estimation is,
\begin{align}
\label{eq:ml}
\max_{S_\ell, \ell = 1, \dots: V = \sqcup_{\ell=1} S_\ell}\sum_{\ell} \sum_{i, j \in S_\ell, i \ne j}\omega_{i,j},
\end{align}
where $\omega_{i,j} = 2z_{i,j} -1, i \ne j$, i.e., $\omega_{i,j} =1,$ when $z_{i,j} =1$ and $\omega_{i,j} =-1$ when $z_{i,j} =0, i \ne j$. Further $\omega_{i,i} = z_{i,i} =0, i = 1, \dots, n.$ We  will use this fact to prove Theorem~\ref{theorem:cc-error} and Theorem~\ref{lemma:correct} below.

 Note that \eqref{eq:ml} is equivalent to finding correlation clustering in $G$ with the objective of maximizing the consistency with the edge labels, that is we want to maximize the total number of positive intra-cluster edges and total number of negative inter-cluster edges \cite{bbc:04,ms:10,mmv:14}. This can be seen as follows.
 
  \begin{align*}
& \max_{S_\ell, \ell = 1, \dots: V = \sqcup_{\ell=1} S_\ell}\sum_{\ell} \sum_{i, j \in S_\ell, i \ne j}\omega_{i,j}\\
&\equiv \max_{S_\ell, \ell = 1, \dots: V = \sqcup_{\ell=1} S_\ell} \big[\sum_{\ell} \sum_{i, j \in S_\ell, i \ne j}\big|(i,j): \omega_{i,j}=+1\big|-\big|(i,j): \omega_{i,j}=-1\big|\big]+\sum_{i,j \in V, i \ne j}\big|(i,j): \omega_{i,j}=-1\big| \\
&=\max_{S_\ell, \ell = 1, \dots: V = \sqcup_{\ell=1} S_\ell} \big[\sum_{\ell} \sum_{i, j \in S_\ell, i \ne j}\big|(i,j): \omega_{i,j}=+1\big|+\big[\sum_{r,t: r \ne t} \big|(i,j): i \in S_r, j \in S_t, \omega_{i,j}=-1\big|\big].
\end{align*}
Therefore \eqref{eq:ml} is same as correlation clustering. However going forward we will be viewing it as obtaining clusters with maximum intra-cluster weight. That will help us to obtain the desired running time of our algorithm. Also, note that, we have a random instance of correlation clustering here, and not a worst case instance.

\vspace{0.1in}
\paragraph*{\bf Algorithm. 1} The algorithm that we propose  has several phases. The main idea is as follows. We start by selecting a small subset of vertices, and extract the heaviest weight subgraph in it by suitably defining edge weight. If the subgraph extracted has $\sim \log{n}$ size, we are confident that it is part of an original cluster. We then grow it completely, where a decision to add a new vertex to it happens by considering the query answers involving these different $\log{n}$ vertices and the new vertex. Otherwise, if the subgraph extracted has size less than $\log{n}$, we select more vertices. We note that we would never have to select more than $O(k\log{n})$ vertices, because by pigeonhole principle, this will ensure that we have selected at least $\sim \log{n}$ members from a cluster, and the subgraph detected will have size at least $\log{n}$. This helps us to bound the query complexity. We emphasize that our algorithm is completely deterministic.

\vspace{0.1in}
\noindent{\it Phase 1: Selecting a small subgraph.}
Let $c=\frac{16}{(1-2p)^2}$. 

\begin{enumerate}[noitemsep]
\item Select $c\log{n}$ vertices arbitrarily from $V$. Let $V'$ be the set of selected vertices. Create a subgraph $G'=(V',E')$ by querying for every $(u,v) \in V' \times V'$ and assigning a weight of $\omega(u,v)=+1$ if the query answer is ``yes'' and $\omega(u,v)=-1$ otherwise . 
\item Extract the heaviest weight subgraph $S$ in $G'$. If $|S| \geq c\log{n}$, move to Phase 2.
\item Else we have $|S|< c\log{n}$. Select a new vertex $u$, add it to $V'$, and query $u$ with every vertex in $V'\setminus \{u\}$. Move to step (2).
\end{enumerate}

\noindent{\it Phase 2: Creating an Active List of Clusters.} Initialize an empty list called $\sf{active}$ when Phase 2 is executed for the first time.

\begin{enumerate}[noitemsep]
\item Add $S$ to the list $\sf{active}$.
\item Update $G'$ by removing $S$ from $V'$ and every edge incident on $S$. For every vertex $z \in V'$, if $\sum_{u \in S} \omega{(z,u)} > 0$, include $z$ in $S$ and remove $z$ from $G'$ with all edges incident to it. 
\item Extract the heaviest weight subgraph $S$ in $G'$. If $|S| \geq c\log{n}$, Move to step(1). Else move to Phase $3$.
\end{enumerate}

\noindent{\it Phase 3: Growing the Active Clusters.} We now have a set of clusters in ${\sf active}$. 

\begin{enumerate}[noitemsep]
\item Select an unassigned vertex $v$ not in $V'$ (that is previously unexplored), and for every cluster $\cC \in \sf{active}$, pick $c\log{n}$ distinct vertices $u_1, u_2,...., u_l$ in the cluster and query $v$ with them. If the majority of these answers are ``yes'', then include $v$ in $\cC$. 
\item Else we have for every $\cC \in \sf{active}$ the majority answer is ``no'' for $v$.  Include $v \in V'$ and query $v$ with every node in $V' \setminus {v}$ and update $E'$ accordingly. Extract the heaviest weight subgraph $S$ from $G'$ and if its size is at least $c\log{n}$ move to Phase 2 step (1). Else move to Phase 3 step (1) by selecting another unexplored vertex.
\end{enumerate}

\noindent{\it Phase 4: Maximum Likelihood (ML) Estimate.}

\begin{enumerate}[noitemsep]
\item When there is no new vertex to query in Phase $3$, extract the maximum likelihood clustering of $G'$ and return them along with the active clusters, where the ML estimation is defined in Equation~\ref{eq:ml}. 

\end{enumerate}

\remove{\begin{remark}
Lemma \ref{lemma:ml} (Appendix~\ref{appendix:faulty}) shows that \eqref{eq:ml1} is the ML estimate when all query answers are given. Phase $4$ is not required if all the clusters have size at least $c\log{n}$. Moreover, the ML estimate may not be same as the true clustering if there exists clusters of size $o(\log_{p}{n})$--all queries involving a vertex in such a cluster and other cluster members could be $-1$ with probability at least $\frac{1}{o(n)}$. 

\end{remark}}

\paragraph*{\bf Analysis.} 
%
%
%
To establish the correctness of the algorithm, we show the following. Suppose all $\binom{n}{2}$ queries on $V \times V$  have been made. If the ML estimate of the clustering with these $\binom{n}{2}$  answers is same as the true clustering of $V$ that is, $\sqcup_{i=1}^k {V}_i \equiv \sqcup_{i=1}^k \hat{V}_i$ then the algorithm for faulty oracle finds the true clustering with high probability. 

Let without loss of generality, $|\hat{V}_1| \geq ...\geq |\hat{V}_l| \geq 6c\log{n} > |\hat{V}_{l+1}| \geq...\geq |\hat{V}_k|$. We will show that Phase $1$-$3$ recover $\hat{V}_1,\hat{V}_2 ... \hat{V}_l$ with probability at least $1-\frac{1}{n}$. The remaining clusters are recovered in Phase $4$.

\remove{Note that,
$\cH^2(p \|1-p) =(\sqrt{p}-\sqrt{1-p})^2 \le \frac{(\sqrt{p}-\sqrt{1-p})^2}{2p}$, as $p \le 1/2$, and $(1-2p)^2 = (1-p -p)^2 = (\sqrt{p}-\sqrt{1-p})^2(\sqrt{p}+\sqrt{1-p})^2 \ge  (\sqrt{p}-\sqrt{1-p})^2(p+1-p)^2 =  (\sqrt{p}-\sqrt{1-p})^2$. }

A subcluster is a subset of nodes in some cluster. Lemma \ref{lemma:mlG'} shows that any set $S$ that is included in ${\sf active}$ in Phase $2$ of the algorithm is a subcluster of $V$. This establishes that all clusters in ${\sf active}$ at any time are subclusters of some original cluster in $V$. Next, Lemma \ref{lemma:vertex} shows that elements that are added to a cluster in ${\sf active}$ are added correctly, and no two clusters in ${\sf active}$ can be merged. Therefore, clusters obtained from ${\sf active}$ are the true clusters. Finally, the remaining of the clusters can be retrieved from $G'$ by computing a ML estimate on $G'$ in Phase $4$, leading to Theorem \ref{lemma:correct}.

We will use the following version of the Hoeffding's inequality heavily in our proof. We state it here for the sake of completeness.

Hoeffding's  inequality for large deviation of sums  of bounded independent random variables is well known \cite{hoeffding1963probability}[Thm. 2].
\begin{lemma}[Hoeffding]\label{lem:hoef1}
If $X_1, \dots, X_n$ are  independent random variables   and $a_i\le X_i\le b_i$ for all $i\in [n].$ Then
$$
\Pr(|\frac1n\sum_{i=1}^n (X_i - \avg X_i) | \ge t) \le 2 \exp(-\frac{2n^2t^2}{\sum_{i=1}^n (b_i-a_i)^2}). 
$$
\end{lemma}
This inequality can be used when the random variables are independently sampled with replacement from a finite sample space.  
However due to a result in the same paper  \cite{hoeffding1963probability}[Thm. 4], this inequality also holds when the random variables are sampled
without replacement from a finite population.
\begin{lemma}[Hoeffding]\label{lem:hoef2}
If $X_1, \dots, X_n$ are  random variables  sampled without replacement from a finite set $\cX \subset \reals$, and $a\le x\le b$ for all $x\in \cX.$ Then
$$
\Pr(|\frac1n\sum_{i=1}^n (X_i - \avg X_i) | \ge t) \le 2 \exp(-\frac{2nt^2}{(b-a)^2}). 
$$
\end{lemma}

\begin{lemma}
\label{lemma:mlG'}
Let $c'=6c=\frac{96}{(1-2p)^2}$. 
Algorithm $1$ in Phase $1$ and $3$ returns a subcluster of $V$ of size at least $c\log{n}$ with high probability if $G'$ contains a subcluster of $V$ of size at least $c'\log{n}$. Moreover, it does not return any set of vertices of size at least $c\log{n}$ if $G'$ does not contain a subcluster of $V$ of size at least $c\log{n}$.
\end{lemma}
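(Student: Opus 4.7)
The plan is to analyze the heaviest-weight subgraph $S$ extracted from $G'$ by combining Hoeffding concentration with a swap argument. Recall that edge weights are mutually independent, with $\avg\,\omega_{u,v} = 1 - 2p$ on same-cluster pairs and $\avg\,\omega_{u,v} = -(1-2p)$ on cross-cluster pairs, so subcluster edges have positive drift while inter-cluster edges have negative drift.

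The heart of the argument is a swap claim: for any $S \subseteq V'$ with $|S| \geq c \log n$ intersecting at least two true clusters, removing the smallest intersection $C_{j^\star} := S \cap V_{j^\star}$ (where $j^\star \in \arg\min\{|S \cap V_j| : S \cap V_j \neq \emptyset\}$) strictly increases the total weight whp, so $S$ is not heaviest. Indeed, the weight change $W = \wt(S) - \wt(S \setminus C_{j^\star})$ is a sum of $\Theta(|C_{j^\star}| \cdot |S|)$ independent $\pm 1$ variables with $\avg W \leq -(1-2p)\,|C_{j^\star}|^2/2$ (using $|S \setminus C_{j^\star}| \geq |C_{j^\star}|$ because $S$ spans two clusters), so by Lemma~\ref{lem:hoef1}, $\Pr(W \geq 0) \leq \exp(-\Omega((1-2p)^2 |C_{j^\star}| |S|))$, which is $n^{-\Omega(|C_{j^\star}|)}$ once $|S| \geq c \log n = 16 \log n /(1-2p)^2$. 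The swap claim immediately yields the second assertion: if no subcluster in $G'$ has size $\geq c \log n$, every size-$\geq c\log n$ subset crosses clusters and loses to a subset of itself, so the heaviest $S$ has $|S| < c \log n$. For the first assertion, given a subcluster $V'_i$ of size $\geq c'\log n = 6c \log n$, Hoeffding gives $\wt(V'_i) \geq (1-\delta)(1-2p)\binom{c'\log n}{2}$ whp; the heaviest $S$ dominates this weight, and (being a subcluster by the swap claim) its weight is at most $(1+\delta)(1-2p)\binom{|S|}{2}$ by an upper Hoeffding tail, forcing $|S| \geq \sqrt{(1-\delta)/(1+\delta)}\,c'\log n \geq c \log n$ for suitably small $\delta$.

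The main obstacle is promoting the per-candidate swap bound into a uniform one over all pairs $(S, C_{j^\star})$. The algorithm's pigeonhole invariant keeps $|V'| = O(k\log n)$ (once $kc\log n$ vertices have been added, some cluster already has $c\log n$ representatives, triggering Phase~2), so there are up to $2^{O(k\log n)}$ subsets to control, and the simple Hoeffding exponent does not absorb this enumeration when $|C_{j^\star}|$ is very small. I would handle this by splitting on $|C_{j^\star}|$: when $|C_{j^\star}| \geq 2\sqrt{\log n}$ the exponent $\Omega((1-2p)^2 |C_{j^\star}|\cdot|S\setminus C_{j^\star}|)$ dominates the $\binom{|V'|}{|C_{j^\star}|}\binom{|V'|}{|S|}$ enumeration; when $|C_{j^\star}| < 2\sqrt{\log n}$ the internal clique contribution $\binom{|C_{j^\star}|}{2}$ is deterministically $O(\log n)$ and thus negligible, so one fixes the small piece first (at cost $|V'|^{O(\sqrt{\log n})}$) before applying Hoeffding only to the bipartite cross-edges, whose deviation probability $\exp(-\Omega((1-2p)^2 |C_{j^\star}||S|))$ comfortably absorbs the enumeration. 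Combined with the choice $c = 16/(1-2p)^2$, both regimes yield total failure probability $o(1)$, completing both assertions.
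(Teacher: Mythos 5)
Your proposal follows the paper's proof essentially step for step: a Hoeffding lower bound on the weight of the large subcluster forces the extracted heaviest set $S$ to be large; a swap argument removing the smallest cluster-intersection $C_{j^\star}=S\cap V_{j^\star}$ (using $|S\setminus C_{j^\star}|\ge|C_{j^\star}|$ so the cross-edge drift dominates the internal clique term) shows $S$ cannot straddle two clusters; and a case split on whether $|C_{j^\star}|$ exceeds a $\sqrt{\log n}$-scale threshold handles the regime where the internal term is not concentrated. The constants, the choice of $j^\star$, and the derivation of the second assertion from the swap claim all match Claims~\ref{claim:0}--\ref{claim:2} in the paper's appendix.

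The one place you genuinely diverge is the uniformity issue: the paper applies Hoeffding directly to the (random) extracted set $S$ as if it were fixed and performs no union bound over candidate sets, whereas you explicitly try to make the swap bound uniform over all pairs $(S,C_{j^\star})$. You are right that this is the delicate point, but your accounting does not close as stated. Since $|V'|=O(k\log n)$, enumerating $S$ alone costs up to $\binom{|V'|}{|S|}=\exp\bigl(O(|S|\log k)\bigr)$, while your deviation exponent is $\Omega\bigl((1-2p)^2|C_{j^\star}|\,|S|\bigr)$; these balance only when $|C_{j^\star}|\gtrsim \log k/(1-2p)^2$, which is a much stronger requirement than $|C_{j^\star}|\ge 2\sqrt{\log n}$ once $k$ grows polynomially in $n$. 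The same problem recurs in your small-$|C_{j^\star}|$ regime: fixing the small piece at cost $|V'|^{O(\sqrt{\log n})}$ does not dispense with enumerating $S\setminus C_{j^\star}$, which is still exponential in $|S|$. So the split you propose does not yield total failure probability $o(1)$ for general $k$; a correct uniform version needs either a per-vertex argument (showing every $u\notin V_{j^\star}$ has strongly negative total weight to $C_{j^\star}$, which itself requires $|C_{j^\star}|\gtrsim\log n/(1-2p)^2$) or a restriction to the candidate sets the algorithm can actually output. This does not make your argument weaker than the paper's --- the paper simply omits the step --- but the fix you sketch is not yet a proof of it.
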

\begin{proof}
Let $V'=\bigcup V'_i$, $i\in [1,k]$,  $V'_i \cap V'_j =\emptyset$ for $i \neq j$, and $V'_i \subseteq V_i$. Suppose without loss of generality $|V'_1| \geq |V'_2| \geq ....\geq |V'_k|$.
The lemma is proved via a series of claims. The proofs of the claims are delegated to Appendix~\ref{appen:faultyub}.
\begin{claim}
\label{claim:0}
Let $|V'_1| \geq c'\log{n}$. Then a set $S \subseteq V_i$ for some $i \in [1,k]$ will be returned with high probability when $G'$ is processed.
\end{claim}

%
\begin{claim}
\label{claim:1}
Let $|V'_1| \geq c'\log{n}$. Then a set $S \subseteq V_i$ for some $i \in [1,k]$ with size at least $c\log{n}$ will be returned with high probability when $G'$ is processed.
\end{claim}


\begin{claim}
\label{claim:2}
If $|V'_1| < c\log{n}$. then no subset of size $> c\log{n}$ will be returned by the algorithm for faulty oracle when processing $G'$ with high probability. 
\end{claim}

Since, the algorithm attempts to extract a heaviest weight subgraph at most $n$ times, and each time the probability of failure is at most $O(\frac{1}{n^2})$. By union bound, all the calls succeed with probability at least $1-O(\frac{1}{n})$. This establishes the lemma.
\end{proof}

We will need the following version of Chernoff bound as well.
\begin{lemma}[Chernoff Bound]
\label{lemma:chernoff}
Let $X_1, X_2,...,X_n$ be independent binary random variables, and $X=\sum_{i=1}^{n}X_i$ with $E[X]=\mu$. Then for any $\epsilon > 0$
\[\Pr[X \geq (1+\epsilon) \mu] \leq \exp\Big(-\frac{\epsilon^2}{2+\epsilon}\mu\Big)\]
and, 
\[\Pr[X \leq (1-\epsilon)\mu] \leq \exp\Big(-\frac{\epsilon^2}{2}\mu\Big)\]
\end{lemma}

\begin{lemma}
\label{lemma:vertex}
 The list ${\sf active}$ contains all the true clusters of $V$ of size $\geq c'\log{n}$ at the end of the algorithm with high probability.
\end{lemma}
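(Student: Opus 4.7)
The plan is to combine Lemma~\ref{lemma:mlG'} with Chernoff/Hoeffding tail bounds for the majority-vote and sum-sign subroutines used in Phases~2 and~3. By Lemma~\ref{lemma:mlG'}, every set placed in ${\sf active}$ is already a subset of some true cluster $V_i$, so to finish the proof it suffices to establish three facts: (i) every large $\hat V_i$ (i.e.\ $|\hat V_i| \geq c'\log n$) receives at least one ``seed'' in ${\sf active}$; (ii) once a seed $\cC \subseteq V_i$ exists, the subsequent majority and sum tests classify vertices correctly with respect to $V_i$; and (iii) no two elements of ${\sf active}$ are ever subclusters of the same $V_i$.

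For (ii), I would apply Lemma~\ref{lemma:chernoff} to the $c\log n$ independent oracle answers obtained when a vertex $v$ is queried against $c\log n$ distinct members of $\cC \subseteq V_i$. If $v \in V_i$ each answer is $+1$ with probability $1-p > \tfrac12$, and a Chernoff computation with $c=16/(1-2p)^2$ shows the majority is $-1$ with probability at most $n^{-2}$; symmetrically, if $v \notin V_i$, a false positive has probability at most $n^{-2}$. The same bound handles Phase~2 step~(2): $\sum_{u\in S}\omega(z,u)$ is a sum of $|S|\geq c\log n$ independent $\pm 1$ variables with mean $\pm(1-2p)|S|$ depending on whether $z\in V_i$, and Hoeffding's inequality (Lemma~\ref{lem:hoef1}) yields the correct sign with probability at least $1-n^{-2}$. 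Since the algorithm executes at most $O(nk)$ such tests in total, a union bound makes them simultaneously succeed with probability $1-o(1)$; the rest of the argument is carried out on that good event.

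Claim (iii) then follows by contradiction. If $\cC_1,\cC_2\subseteq V_i$ both appear in ${\sf active}$ with $\cC_2$ added later, let $v$ be the first vertex of $\cC_2$ processed by the algorithm. If $\cC_1$ was already in ${\sf active}$ when $v$ was tested in Phase~3, then (ii) forces $v$ into $\cC_1$ in Phase~3 step~(1), contradicting its later placement in $\cC_2$. Otherwise $v$ was sitting in $V'$ when $\cC_1$ was extracted in Phase~2, in which case the absorption loop of Phase~2 step~(2) would have swept $v$ into $\cC_1$ (again by (ii)), a contradiction either way. Given (ii) and (iii), claim (i) becomes a clean pigeonhole argument: as long as some large $\hat V_i$ has no seed in ${\sf active}$, each of its vertices processed in Phase~3 step~(1) fails the majority test against every existing active cluster and is appended to $V'$; once $V'$ holds at least $c'\log n$ members of $\hat V_i$, Lemma~\ref{lemma:mlG'} guarantees that the next heaviest-weight extraction produces a seed for $\hat V_i$.

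The main obstacle I expect is the careful bookkeeping around the interleaving of Phases~2 and~3. On the one hand one must verify that the good event on which all $O(nk)$ concentration bounds hold really has probability $1-o(1)$ uniformly across the adaptive sequence of queries; on the other hand, and more delicately, one must show that the Phase~2 absorption step leaves \emph{no} stray vertex of $V_i$ behind in $V'$ after a seed for $V_i$ has been extracted, since any such leftover is precisely what could later nucleate a second seed and break claim~(iii).
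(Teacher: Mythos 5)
Your proposal is correct and follows essentially the same route as the paper's proof: Lemma~\ref{lemma:mlG'} for seeding, Chernoff/Hoeffding bounds with a union bound for the correctness of the majority and sum tests in both directions, and the identical first-vertex-of-$\cC_2$ contradiction to rule out two active subclusters of the same $V_i$. The bookkeeping concern you raise at the end is precisely what that contradiction argument resolves, so no further work is needed.
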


\begin{proof}
From Lemma \ref{lemma:mlG'}, any cluster that is added to ${\sf active}$ in Phase $2$ is a subset of some original cluster in $V$ with high probability, and has size at least $c\log{n}$. Moreover, whenever $G'$ contains a subcluster of $V$ of size at least $c'\log{n}$, it is retrieved by the algorithm and added to ${\sf active}$.

When a vertex $v$ is added to a cluster $\cC$ in ${\sf active}$, we have $|\calC| \geq c\log{n}$ at that time, and there exist $l=c\log{n}$ distinct members of $\calC$, say, $u_1,u_2,..,u_l$ such that majority of the queries of $v$ with these vertices returned $+1$. Let if possible $v \not\in \cC$. Then the expected number of queries among the $l$ queries that had an answer ``yes'' (+1) is $lp$. We now use the Chernoff bound, Lemma~\ref{lemma:chernoff} bound, to have,
 $$\Pr(v \text{ added to } \cC \mid v \not\in \cC) \leq e^{-lp\frac{(\frac{1}{2p}-1)^2}{2+(\frac{1}{2p}-1)}}\leq \frac{1}{n^3}.$$

%

On the other hand, if there exists a cluster $\calC \in {\sf active}$ such that $v \in \calC$, then while growing $\cC$, $v$ will be added to $\cC$ (either $v$ already belongs to $G'$, or is a newly considered vertex). This again follows by the Chernoff bound. Here the expected number of queries to be answered ``yes'' is $(1-p)l$. Hence the probability that less than $\frac{l}{2}$ queries will be answered yes is $\Pr(v \text{ not included in } \calC \mid v \in \calC) \leq \text{exp}(-c\log{n}(1-p)\frac{(1-2p)^2}{8(1-p)^2})=\text{exp}(-\frac{2}{(1-p)}\log{n})\leq \frac{1}{n^2}$. Therefore, for all $v$, if $v$ is included in a cluster in ${\sf active}$, the assignment is correct with probability at least $1-\frac{1}{n}$. Also, the assignment happens as soon as such a cluster is formed in ${\sf active}$ and $v$ is explored (whichever happens first).

Furthermore, two clusters in ${\sf active}$ cannot be merged. Suppose, if possible there are two clusters $\calC_1$ and $\calC_2$ which ought to be subset of the same cluster in $V$. Let without loss of generality $\calC_2$ is added later in ${\sf active}$. Consider the first vertex $v \in \calC_2$ that is considered by our algorithm. If $\calC_1$ is already there in ${\sf active}$ at that time, then with high probability $v$ will be added to $\calC_1$ in Phase $3$. Therefore, $\calC_1$ must have been added to ${\sf active}$ after $v$ has been considered by our algorithm and added to $G'$. Now, at the time $\calC_1$ is added to $A$ in Phase $2$, $v \in V'$, and again $v$ will be added to $\calC_1$ with high probability in Phase $2$--thereby giving a contradiction.

This completes the proof of the lemma.
\end{proof}

\begin{theorem}\label{lemma:correct}
If the ML estimate of the clustering of $V$ with all possible $\binom{n}{2}$ queries return the true clustering, then the algorithm for faulty oracle returns the true clusters with high probability. Moreover, it returns all the true clusters of $V$ of size at least $c'\log{n}$ with high probability.
\end{theorem}
\begin{proof}
From Lemma \ref{lemma:mlG'} and Lemma \ref{lemma:vertex}, ${\sf active}$ contains all the true clusters of $V$ of size at least $c'\log{n}$ with high probability. Any vertex that is not included in the clusters in ${\sf active}$ at the end of the algorithm, are in $G'$. Also $G'$ contains all possible pairwise queries among them. Clearly, then the ML estimate of $G'$ will be the true ML estimate of the clustering restricted to these clusters.
\end{proof}

Finally, once all the clusters in ${\sf active}$ are grown, we have a fully queried graph in $G'$ containing the small clusters which can be retrieved in Phase $4$. This completes the correctness of the algorithm. With the following lemma, we get Theorem~\ref{theorem:cc-error}.
\begin{lemma}
\label{lemma:query}
The query complexity of the algorithm for faulty oracle is $O\Big(\frac{nk\log{n}}{(1-2p)^2}\Big)$.
\end{lemma}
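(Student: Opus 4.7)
The plan is to charge queries to individual vertices and show that each vertex $v$ is involved in at most $O(k\log n/(1-2p)^2)$ queries. Summing over the $n$ vertices then yields the claimed bound. The crucial quantity to control is the size of the working subgraph $V'$ at every point in the execution.

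First, I would establish the invariant
\[
|V'| \;<\; k\,c'\log n \;=\; O\!\Big(\tfrac{k\log n}{(1-2p)^2}\Big)
\]
at all times, where $c' = 6c$ and $V' = \bigsqcup_i V'_i$ with $V'_i \subseteq V_i$. The algorithm is structured so that after any vertex is inserted into $V'$ we exhaustively execute Phase~2, extracting heaviest subgraphs of size $\ge c\log n$ until none remain. By Claim~\ref{claim:1} (equivalently, Lemma~\ref{lemma:mlG'}), as soon as some $|V'_i| \ge c'\log n$ such an extraction is guaranteed to succeed with high probability, and the extracted subcluster (together with any further vertices absorbed in Phase~2 step~2) is removed from $V'$. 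Therefore, at the moment we leave Phase~2, $|V'_i| < c'\log n$ for every $i$; the pigeonhole principle then gives $|V'| < k c'\log n$.

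Next I would count the queries charged to a single vertex $v$. There are two cases. In Phase~3 step~1, the algorithm compares $v$ against $c\log n$ representatives from each cluster currently in ${\sf active}$; since at most $k$ clusters are ever active, this costs at most $k c \log n = O(k\log n/(1-2p)^2)$ queries for $v$. If $v$ is not assigned there, then in Phase~3 step~2 (or analogously during Phase~1) it is added to $V'$ and queried against every vertex in $V'\setminus\{v\}$, which by the invariant above contributes at most $k c'\log n - 1 = O(k\log n/(1-2p)^2)$ further queries. Adding the two cases, each vertex is involved in $O(k\log n/(1-2p)^2)$ queries, and summing over all $n$ vertices yields the desired $O(nk\log n/(1-2p)^2)$ total.

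The only subtle point is that the invariant in the first step holds only with high probability, since it relies on Lemma~\ref{lemma:mlG'}. However, Phase~2 is invoked at most $n$ times throughout the execution, so a union bound over the $O(n)$ potential failures (each of probability $O(1/n^2)$) keeps the overall failure probability $o(1)$, which is consistent with the high-probability guarantees already used to prove Theorem~\ref{lemma:correct}. The main obstacle is making the extraction-invariant argument tight: one must be careful that Phase~2 is re-entered after \emph{every} insertion into $V'$ (in both Phase~1 and Phase~3), so that $|V'|$ never has the chance to drift above the pigeonhole threshold $kc'\log n$ between checks.
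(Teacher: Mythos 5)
Your proposal is correct and follows essentially the same argument as the paper's proof: charge queries per vertex, bound the Phase~3 cluster-membership queries by $ck\log n$, and bound $|V'|$ by $kc'\log n$ via pigeonhole using the guarantee of Lemma~\ref{lemma:mlG'} that any subcluster of size $\ge c'\log n$ is extracted. The paper states the slightly tighter bound $|V'|\le (k-k')c'\log n$ when $k'$ clusters are already active, but this does not change the asymptotics.
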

\begin{proof}
Let there be $k'$ clusters in ${\sf active}$ when $v$ is considered by the algorithm. $k'$ could be $0$ in which case $v$ is considered in Phase $1$, else $v$ is considered in Phase $3$. Therefore, $v$ is queried with at most $ck'\log{n}$ members, $c\log{n}$ each from the $k'$ ${\sf active}$ clusters. If $v$ is not included in one of these clusters, then $v$ is added to $G'$ and queried will all vertices $V'$ in $G'$. We have seen in the correctness proof (Lemma \ref{lemma:correct}) that if $G'$ contains at least $c'\log{n}$ vertices from any original cluster, then ML estimate on $G'$ retrieves those vertices as a cluster with high probability. Hence, when $v$ is queried with the vertices in $G'$, $|V'|\leq (k-k')c'\log{n}$. Thus the total number of queries made when the algorithm considers $v$ is at most $c'k\log{n}$, where $c'=6c=\frac{96}{(2p-1)^2}$ when the error probability is $p$. This gives the query complexity of the algorithm considering all the vertices, which matches the lower bound computed in Section \ref{sec:error-lc} within an $O(\log{n})$ factor. 
\end{proof}

Now combining all these we get the statement of Theorem \ref{theorem:cc-error}.

\vspace{0.2in}
\paragraph*{\bf Running Time \& Connection to Planted Clique}
\label{sec:hardness}
While the algorithm described above is very close to information theoretic optimal, the running time is not polynomial. Moreover, it is unlikely that the algorithm can be made efficient. 

A crucial step of our algorithm is to find a large cluster of size at least $O(\frac{\log{n}}{(2p-1)^2})$, which can of course be computed in $O(n^{\frac{\log{n}}{(2p-1)^2}})$ time. However, since size of $G'$ is bounded by $O(\frac{k\log{n}}{(2p-1)^2})$, the running time to compute such a heaviest weight subgraph is $O([\frac{k\log{n}}{(2p-1)^2}]^{\frac{\log{n}}{(2p-1)^2}})$. This running time is unlikely to be improved to a polynomial. This follows from the planted clique conjecture.

\begin{conjecture}[Planted Clique Hardness]
Given an Erd\H{o}s-R\'{e}nyi random graph $G(n,p)$, with $p=\frac{1}{2}$, the planted clique conjecture states that if we plant in $G(n,p)$ a clique of size $t$ where $t=[\Omega(\log{n}), o(\sqrt{n})]$, then there exists no polynomial time algorithm to recover the largest clique in this planted model.
\end{conjecture}

{\bf Reduction.} Given such a graph with a planted clique of size $t=\Theta(\log{n})$, we can construct a new graph $H$ by randomly deleting each edge with probability $\frac{1}{3}$. Then in $H$, there is one cluster of size $t$ where edge error probability is $\frac{1}{3}$ and the remaining clusters are singleton with inter-cluster edge error probability being $\frac{1}{2}*\frac{2}{3}=\frac{1}{3}$. So, if we can detect the heaviest weight subgraph in polynomial time in the faulty oracle algorithm, then there will be a polynomial time algorithm for the planted clique problem.

In fact, the reduction shows that if it is computationally hard to detect a planted clique of size $t$ for some value of $t >0$, then it is also computationally hard to detect a cluster of size $\leq t$ in the faulty oracle model. Note that $t=o(\sqrt{n})$. In the next section, we propose a computationally efficient algorithm which recovers all clusters of size at least $\frac{\min{(k,\sqrt{n})}\log{n}}{(1-2p)^2}$ with high probability, which is the best possible assuming the conjecture, and can potentially recover much smaller sized clusters if $k=o(\sqrt{n})$.

 \remove{In this section we propose a computationally efficient algorithm which recovers all clusters of size at least $\min{(k,\sqrt{n})}\frac{\log{n}}{\cH(p\|(1-p))^2}=\min{(k,\sqrt{n})}\frac{\log{n}}{(1-2p)^2}$ with high probability. Again, this is the best possible assuming the hardness of planted clique. In fact, the reduction in the previous section showed that if it is computationally hard to detect a planted clique of size $t$ for some value of $t >0$, then it is also computationally hard to detect a cluster of size $\leq t$ in the faulty oracle model. Now noting that $t=\sqrt{n}$ for an efficient polynomial time algorithm, the tightness of the result follows. However, indeed, if $k=o(\sqrt{n})$, then we can recover all clusters of size $\geq \frac{k\log{n}}{(1-2p)^2}$.}

\subsection{Computationally Efficient Algorithm}
\label{sec:efficient}
 \paragraph*{\bf Known $k$}
We first design an algorithm when $k$, the number of clusters is known. Then we extend it to the case of unknown $k$. The algorithm is completely deterministic. 
 
 \begin{theorem}
\label{cor:er-poly}
 There exists a polynomial time algorithm with query complexity $O(\frac{nk^2}{(2p-1)^4})$ for \cc~with error probability $p$ and known $k$, that recovers all clusters of size at least $\Omega(\frac{k\log n}{(2p-1)^4})$. 
 \end{theorem}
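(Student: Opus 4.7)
The plan is to prove Theorem~\ref{cor:er-poly} by adapting the four-phase structure of the algorithm from Section~\ref{sec:info-theory} with two targeted changes: (i) raise the threshold used to declare a ``valid'' subcluster from $c\log n$ to $M = \Theta\bigl(k\log n/(1-2p)^4\bigr)$, so a subgraph is moved into \textsf{active} only when it has at least $M$ vertices; and (ii) replace the brute-force heaviest-weight-subgraph step inside $G'$ by the polynomial-time noisy correlation-clustering algorithm of Mathieu--Schudy~\cite{ms:10}, which on an $N$-vertex noisy instance recovers every planted cluster of size $\Omega\bigl(\sqrt{N}/(1-2p)^2\bigr)$ in polynomial time. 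Everything else -- the way new vertices are tested against \textsf{active} clusters by majority voting on $c\log n$ representatives, the ``grow a fresh cluster'' step that sweeps $V'$, and the overall bookkeeping -- is copied from Algorithm~1 unchanged.

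The central chain of estimates drives the choice of $M$. First I would bound $|V'|$: since every planted subcluster of size $\ge M$ inside $G'$ is immediately extracted by the Mathieu--Schudy subroutine and deleted from $V'$, at any time each true cluster contributes strictly fewer than $M$ vertices to $V'$, so $|V'| < kM$ throughout the execution. Second, to guarantee that the subroutine in fact extracts a subcluster whenever one of size $\ge M$ exists in $V'$, it suffices that $M \gtrsim \sqrt{|V'|}/(1-2p)^2$; combined with the bound $|V'| < kM$ this simplifies to $M = \Omega\bigl(k/(1-2p)^4\bigr)$, and absorbing an extra $\log n$ factor into $M$ supplies the high-probability slack needed by the Chernoff/Hoeffding steps from Lemmas~\ref{lemma:mlG'}~and~\ref{lemma:vertex}. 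Hence $M = \Theta\bigl(k\log n/(1-2p)^4\bigr)$ is the right value, and this is exactly the advertised minimum recoverable cluster size.

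Correctness is then inherited almost verbatim from the proof of Theorem~\ref{lemma:correct}. The role previously played by exact ML decoding inside $G'$ is now played by~\cite{ms:10}: with high probability the returned set is a subcluster of some true $V_i$ (analog of Lemma~\ref{lemma:mlG'}), and by the same argument as in Lemma~\ref{lemma:vertex} no two entries of \textsf{active} can be subsets of the same $V_i$, while each majority-vote cluster assignment is correct with probability $1-O(n^{-2})$ because we still query $c\log n$ representatives per cluster. For query complexity, each new vertex $v$ is first tested against the $\le k$ active clusters using $c\log n$ queries each, and, if it fails all of those tests, is added to $V'$ and queried against all $|V'| \le kM$ existing members of $V'$; the dominating cost is $kM = O\bigl(k^2\log n/(1-2p)^4\bigr)$ queries per vertex, giving the stated total. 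The polynomial running time follows because the Mathieu--Schudy SDP is called on a graph of $\le kM$ vertices and so costs $\operatorname{poly}(kM) = O(k^6 \operatorname{polylog} n)$ per invocation, with at most $O(n)$ invocations.

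The main obstacle I expect is a careful translation between noise models: \cite{ms:10} analyse the random-flip correlation-clustering model, and I need to check that their recovery threshold genuinely scales as $\sqrt{N}/(1-2p)^2$ in our parameterisation, so that a \emph{single} choice of $M$ simultaneously (a) guarantees the Mathieu--Schudy extraction and (b) is large enough to serve as the $c\log n$-style representative pool used when growing a cluster via majority votes. A secondary, purely bookkeeping, difficulty is to union-bound the failure events across the $O(n)$ invocations of the subroutine and the $O(n)$ majority-vote tests so that the global failure probability remains $o_n(1)$, which should follow immediately from the $n^{-2}$ per-step bound.
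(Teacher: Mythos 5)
Your overall architecture coincides with the paper's: a pool $V'$ of size $\Theta\bigl(k^2\log n/(1-2p)^4\bigr)$, extraction of any subcluster of size $\ge \Theta\bigl(k\log n/(1-2p)^4\bigr)$ guaranteed to exist by pigeonhole, followed by growing the extracted clusters via majority votes over $c\log n$ representatives; your size bookkeeping ($|V'|\le kM$, $M\gtrsim \sqrt{kM}/(1-2p)^2 \Rightarrow M=\Omega(k/(1-2p)^4)$) reproduces the paper's thresholds. The genuine gap is in the extraction subroutine. The Mathieu--Schudy guarantee from \cite{ms:10} is \emph{conditional}: it recovers the planted clustering when \emph{all} clusters of the instance have size $\Omega(\sqrt{N})$ (this is exactly how the present paper describes it, both in the introduction and in the correlation-clustering discussion). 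In your subsampled graph $G'$ that hypothesis fails badly -- most true clusters will contribute only $O(1)$ or $O(\log n)$ vertices to $V'$, so $G'$ is an instance with many tiny planted clusters alongside the one large one you want to find. You therefore cannot invoke \cite{ms:10} to conclude that the single large subcluster is recovered; nothing in their theorem rules out the SDP rounding mangling the large cluster because of the surrounding dust of small ones. This is precisely why the paper does \emph{not} use an SDP here: its Algorithm~2 instead ports the \cite{bbc:04} test (threshold $|N^+(u)|\ge T(|V'|)$, then compare symmetric differences of positive neighborhoods against $\theta(|V'|)$), whose analysis in Lemma~\ref{lemma:efficient1} recovers every subcluster above the $\Theta(\sqrt{N\log n}/(1-2p)^2)$ threshold \emph{unconditionally}, i.e., regardless of how the remaining vertices are partitioned.

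The gap is repairable within your framework: replace \cite{ms:10} by the \cite{bbc:04} random-noise algorithm (or by the symmetric-difference test directly, as the paper does). Its threshold is $\omega(\sqrt{N\log N})$ rather than $\Omega(\sqrt{N})$, but the extra $\sqrt{\log}$ factor is absorbed by the $\log n$ slack you already built into $M$, so the final bounds $O\bigl(nk^2\log n/(1-2p)^4\bigr)$ queries and minimum recoverable size $\Omega\bigl(k\log n/(1-2p)^4\bigr)$ are unaffected. Two smaller points: (i) you should make explicit, as the paper does, that query answers are fixed entries of one underlying random matrix $A$ reused across overlapping iterations of $G'$, so only a union bound (not independence across iterations) is available; (ii) the number of extraction calls is at most $k$, not $O(n)$, since each successful call removes a cluster -- this only matters for getting the sharper running time, not for the bare ``polynomial time'' claim of the theorem.
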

The algorithm is given below.
\remove{In this section we propose a computationally efficient algorithm which recovers all clusters of size at least $\min{(k,\sqrt{n})}\frac{\log{n}}{\cH(p\|(1-p))^2}=\min{(k,\sqrt{n})}\frac{\log{n}}{(1-2p)^2}$ with high probability. Again, this is the best possible assuming the hardness of planted clique. In fact, the reduction in the previous section showed that if it is computationally hard to detect a planted clique of size $t$ for some value of $t >0$, then it is also computationally hard to detect a cluster of size $\leq t$ in the faulty oracle model. Now noting that $t=\sqrt{n}$ for an efficient polynomial time algorithm, the tightness of the result follows. However, indeed, if $k=o(\sqrt{n})$, then we can recover all clusters of size $\geq \frac{k\log{n}}{(1-2p)^2}$.

\paragraph*{Known $k$}
We first design an algorithm when $k$, the number of clusters is known. Then we extend it to the case of unknown $k$. The algorithm is completely deterministic. To keep the analysis and the algorithm clean, we hide the constants.}

\noindent{\bf Algorithm 2.} Let $N=\frac{64k^2\log{n}}{(1-2p)^4}$. We define two thresholds $T(a)=pa+\frac{6}{(1-2p)}\sqrt{N\log{n}}$ and $\theta(a)=2p(1-p)a+2\sqrt{N\log{n}}$. The algorithm is as follows.

\vspace{0.1in}
\noindent{\it Phase 1-2C: Select a Small Subgraph.} Initially we have an empty graph $G'=(V',E')$, and all vertices in $V$ are unassigned to any cluster.
\begin{enumerate}[noitemsep,leftmargin=*]
\item Select $X$ new vertices arbitrarily from the unassigned vertices in $V\setminus V'$ and add them to $V'$ such that the size of $V'$ is $N$. If there are not enough vertices left in $V\setminus V'$, select all of them. Update $G'=(V',E')$ by querying for every $(u,v)$ such that $u \in X$ and $v \in V'$ and assigning a weight of $\omega(u,v)=+1$ if the query answer is ``yes'' and $\omega(u,v)=-1$ otherwise . 
\item Let $N^+(u)$ denote all the neighbors of $u$ in $G'$ connected by $+1$-weighted edges. We now cluster $G'$. Select every $u$ and $v$ such that $u \neq v$ and $|N^+(u)|, |N^+(v)| \geq T(|V'|)$. Then if $|N^+(u)\setminus N^+(v)|+|N^+(v)\setminus N^+(u)| \leq \theta(|V'|)$ (the symmetric difference of these neighborhoods) include $u$ and $v$ in the same cluster. Include in ${\sf active}$ all clusters formed in this step that have size at least $\frac{64k\log{n}}{(1-2p)^4}$. If there is no such cluster, abort. Remove all vertices in such cluster from $V'$ and any edge incident on them from $E'$.
\end{enumerate}

\noindent{\it Phase 3C: Growing the Active Clusters.} 
\begin{enumerate}[noitemsep,leftmargin=*]
\item For every unassigned vertex $v \in V \setminus V'$, and for every cluster $\cC \in \sf{active}$, pick $c\log{n}$ distinct vertices $u_1, u_2,...., u_l$ in the cluster and query $v$ with them. If the majority of these answers are ``yes'', then include $v$ in $\cC$. 
\item Output all the clusters in $\sf{active}$ and move to Phase 1 step (1) to obtain the remaining clusters.
\end{enumerate}

\vspace{0.2in}
\noindent{\bf Analysis.} Note that at every iteration, we consider a set  $X$ of new vertices from $V \setminus V'$ which have not been previously included in any cluster considered in ${\sf active}$, and query all pairs in $X \times V'\setminus V$. Let $A$ denote the fixed $n \times n$ matrix, where if $(i,j), i ,j \in V$ is queried by the algorithm in any iteration, we include the query result there ($+1$ or $-1$), else the entry is empty which indicates that the pair was not queried by the entire run of the algorithm. This matrix $A$ has the property that for any entry $(i,j)$, if $i$ and $j$ belong to the same cluster and queried then $A(i,j)=+1$ with probability $(1-p)$ and $A(i,j)=-1$ with probability $p$. On the other hand, if $i$ and $j$ belong to different clusters and queried then $A(i,j)=-1$ with probability $(1-p)$ and $A(i,j)=+1$ with probability $p$. Note that the adjacency matrix of $G'$ in any iteration is a submatrix of $A$ which has no empty entry. 

We first look at {\it Phase 1-2C}. At every iteration, our algorithm selects a submatrix of $A$ corresponding to $V' \times V'$ after step 1. This submatrix of $A$ has no empty entry. Let us call it $A'$. We show that if $V'$ contains any subcluster of size $\geq \frac{64k\log{n}}{(2p-1)^4}$, it is retrieved by step 2 with probability at least $1-\frac{1}{n^2}$. In that case, the iteration succeeds. Now the submatrices from one iteration to the other iteration can overlap, so we can only apply union bound to obtain the overall success probability, but that suffices. The probability that in step 2, the algorithm fails to retrieve any cluster of size at least $\frac{64k\log{n}}{(2p-1)^4}$ in any iteration is at most $\frac{1}{n^2}$. The number of iterations is at most $k < n$, since in every iteration except possibly for the last one, $V'$ contains at least one subcluster of that size by a simple pigeonhole principle. This is because in every iteration except possibly for the last one $|V'|=\frac{64k^2\log{n}}{(2p-1)^4}$, and there are at most $k$ clusters. Therefore, the probability that there exists at least one iteration which fails to retrieve the ``large'' clusters is at most $\frac{k}{n^2} \leq \frac{1}{n}$ by union bound. Thus all the iterations will be successful in retrieving the large clusters with probability at least $1-\frac{1}{n}$.

Now, following the same argument as Lemma~\ref{lemma:vertex}, each such cluster will be grown completely by Phase 3-C step (1), and will be output correctly in Phase 3-C step 2.

\begin{lemma}
\label{lemma:efficient1}
Let $c=\frac{64}{(1-2p)^4}$. Whenever $G'$ contains a subcluster of size $ck\log{n}$, it is retrieved by Algorithm 2 in Phase 1-2C with high probability.
\end{lemma}
\begin{proof}
 Consider a particular iteration. Let $N^+(u)$ denote all the neighbors of $u$ in $G'$ connected by $+1$ edges. Let $A'$ denote the corresponding submatrix of $A$ corresponding to $G'$. We have $|V'| \leq N$ ($|V'|=N$ except possibly for the last iteration). Assume, $|V'|=N'$. Also $|V|=n$. 
 
 Let $C_u$ denote the cluster containing $u$. We have
\[ E[|N^+(u)|]=(1-p)|C_u|+p(N'-|C_u|)=pN'+(1-2p)|C_u|\]


By the Hoeffding's inequality
\[\Pr(|N^+(u)| \in pN'+(1-2p)|C_u| \pm 2\sqrt{N\log{n}}) \geq 1- \frac{1}{n^4}\]

Therefore for all $u$ such that $|C_u| \geq \frac{8\sqrt{N\log{n}}}{(1-2p)^2}$, we have $|N^+(u)|>pN'+\frac{6}{(1-2p)}\sqrt{N\log{n}}=T(|V'|)$, and for all $u$ such that $|C_u| \leq \frac{4\sqrt{N\log{n}}}{(1-2p)^2}$, we have $|N^+(u)|<pN'+ \frac{6}{(1-2p)}\sqrt{N\log{n}}$ with probability at least $1-\frac{1}{n^3}$ by union bound.

Consider all $u$ such that $|N^+(u)| > T(|V'|)$. Then with probability at least $1-\frac{1}{n^3}$, we have $|C_u| >\frac{4\sqrt{N\log{n}}}{(1-2p)^2}$. Let us call this set $U$. For every $u,v \in U, u \neq v$, the algorithm computes the symmetric difference of $N^+(u)$ and $N^+(v)$ which is
\begin{enumerate}
\item $2p(1-p)N'$ on expectation if $u$ and $v$ belong to the same cluster. And again applying Hoeffding's inequality, it is at most $2p(1-p)N'+2\sqrt{N\log{n}}$ with probability at least $1-\frac{1}{n^4}$.
\item $(p^2+(1-p)^2)(|C_u|+|C_v|)+2p(1-p)(N'-|C_u|-|C_v|)=2p(1-p)N'+(1-2p)^2(|C_u|+|C_v|)$ on expectation if $u$ and $v$ belong to different clusters. Again using the Hoeffding's inequality, it is at least $2p(1-p)N'+(1-2p)^2(|C_u|+|C_v|)-2\sqrt{N\log{n}}$ with probability at least $1-\frac{1}{n^4}$. \end{enumerate}

Therefore, for all $u$ and $v$, either of the above two inequalities fail with probability at most $\frac{1}{n^2}$. 

Now, since for all $u$ if $|N^+(u)| > T(|V'|)$ then $|C_u| >\frac{4\sqrt{N\log{n}}}{(1-2p)^2}$ with probability $1-\frac{1}{n^3}$, we get 

 for every $u$ and $v$ in $U$, if the symmetric difference of $N^+(u)$ and $N^+(v)$ is $\leq 2p(1-p)N'+2\sqrt{N\log{n}}=\theta(|V'|)$, then $u$ and $v$ must belong to the same cluster with probability at least $1-\frac{1}{n^2}-\frac{1}{n^3} \geq 1-\frac{2}{n^2}$. 
 
 Hence, all subclusters of $G'$ that have size at least $\frac{8\sqrt{N\log{n}}}{(1-2p)^2}$ will be retrieved correctly with probability at least $1-\frac{2}{n^2}$. Now since $N'=N=\frac{64k^2\log{n}}{(1-2p)^4}$ for all but possibly the last iteration, we have $\frac{8\sqrt{N\log{n}}}{(1-2p)^2}=\frac{64k\log{n}}{(1-2p)^4}$. Moreover, since there are at most $k$ clusters in $G$ and hence in $G'$, there exists at least one subcluster of size $\frac{64k\log{n}}{(1-2p)^4}$ in $G'$ in every iteration except possibly the last one, which will be retrieved.
 
 Then, there could be at most $k < n$ iterations. The probability that in one iteration, the algorithm will fail to retrieve a large cluster by our analysis is at most $\frac{2}{n^2}$. Hence, by union bound over the iterations, the algorithm will successfully retrieve all clusters in Phase 1-2C with probability at least $1-\frac{2}{n}$.
 \end{proof}

%

Now, following the same argument as in Lemma~\ref{lemma:vertex}, each subcluster of size $\frac{64k\log{n}}{(1-2p)^4}$ will be grown completely by Phase 3-C step (1). 

Running time of the algorithm is dominated by the time required to run step 2 of Phase 1-2C. Computing trivially, finding the symmetric differences of $+1$ neighborhoods all ${N}\choose{2}$ pairs requires time $O(N^3)$. We can keep a sorted list of $+1$ neighbors of every vertex is $O(N^2\log{n})$ time. Then, for every pair, it takes $O(N)$ time to find the symmetric difference. This can be reduced to $O(N^\omega)$ using fast matrix multiplication to compute set intersection where $\omega \leq 2.373$. Moreover, since each invocation of this step removes one cluster, there can be at most $k$ calls to it and for every vertex, time required in Phase 3C over all the rounds is $O(k\log{n})$. This gives an overall running time of $O(nk\log{n}+kN^\omega)=O(nk\log{n}+k^{1+2\omega})=O(nk\log{n}+k^{5.746})$. Without fast matrix multiplication, the running time is $O(nk\log{n}+k^7)$.



The query complexity of the algorithm is $O(\frac{nk^2\log{n}}{(2p-1)^4})$ since each vertex is involved in at most $O((\frac{k^2\log{n}}{(2p-1)^4})$ queries within $G'$ and $O(\frac{k\log{n}}{(2p-1)^2})$ queries across the active clusters. Thus we get Theorem~\ref{cor:er-poly}. 

 
 
 \begin{remark} Readers familiar with the correlation clustering algorithm for noisy input from \cite{bbc:04} would recognize that the idea of looking into symmetric difference of positive neighborhoods is from \cite{bbc:04}. Like \cite{bbc:04}, we need to know the parameter $p$ to design our algorithm. In fact, one can view our algorithm as running the algorithm of \cite{bbc:04} on carefully crafted subgraphs. Developing a parameter free algorithm that works without knowing $p$ remains an exciting future direction.
 \end{remark}
 
 \vspace{0.2in}
 \paragraph*{\bf Unknown $k$}
 Let $c=\frac{64}{(1-2p)^4}$. When the number of clusters $k$ is unknown, it is not possible exactly to determine when the subgraph $G'=(V',E')$ contains $ck^2\log{n}$ sampled vertices. To overcome such difficulty, we propose the following approach of iteratively guessing and updating the estimate of $k$ based on the highest size of $N^{+}(v)$ for $v \in V'$. Let $\ell$ be the guessed value of $k$. We start with $\ell=2$.
 \begin{enumerate}
 \item Guess $k=\ell$
 \item Randomly sample $X$ vertices so that $N=|V'|=c\ell^2\log{n}$ 
 \item For each $v \in V'$, estimate $\hat{C_v}= \frac{1}{(1-2p)}{(|N^{+}(v)|-pN)}$
 \item If $\max_{v}\hat{C_v} > \frac{6\ell\log{n}}{(1-2p)^4}$ then run step 2 of Phase 1-3C on $G'$ with $k=l$, and then move to Phase 3C.
 \item Else set $k=2l$ and move to step (2).
  \end{enumerate}
  
  Clearly, we will never guess $l >2k$, and hence the process converges after at most $\log{k}$ rounds. When $N=c\ell^2\log{n}$, we have $\sqrt{N\log{n}}\leq c\ell\log{n}$ (we must have $\ell^2 \leq n$, otherwise we sample the entire graph). From Lemma~\ref{lemma:efficient1} we get, whenever $\hat{C_v} > \frac{6\ell\log{n}}{(1-2p)^4}$, the actual size of cluster containing $v$ is $\geq \frac{4\ell\log{n}}{(1-2p)^4}$ with high probability. We can then obtain the exact subcluster containing $v$ in $G'$ and grow it fully in Phase 3C with high probability. The query complexity remain the same within a factor of $2$ and running time increases only by a factor of $\log{k}$.
  
  \vspace{0.1in}
 
\paragraph*{\bf Discussion:  Correlation Clustering over Noisy Input.}

In a random noise model, also introduced by \cite{bbc:04} and studied further by \cite{ms:10}, we start with a ground truth clustering, and then each edge label is flipped with probability $p$. \cite{bbc:04} gave an algorithm that recovers all true clusters of size $\geq c_1\sqrt{n\log{n}}$ for some suitable constant $c_1$ under this model. Moreover, if all the clusters have size $\geq c_2\sqrt{n}$, \cite{ms:10} gave a semi-definite programming based algorithm to recover all of them.
Using the algorithm for unknown $k$ verbatim, we can obtain a correlation clustering algorithm for random noise model that recovers all clusters of size $\Omega(\frac{\min(k,\sqrt{n})\log{n}}{(2p-1)^4})$. Since the maximum likelihood estimate of our algorithm is correlation clustering, the true clusters (which is same as the ML clustering) of size $\Omega(\frac{\min(k,\sqrt{n})\log{n}}{(2p-1)^4})$ that the algorithm recovers is the correct correlation clustering output. Therefore, when $k< \frac{\sqrt{n}}{\log{n}}$, we can recover much smaller sized clusters than \cite{bbc:04,ms:10}.
\remove{
Correlation clustering, introduced by Bansal, Blum and Chawla \cite{bbc:04}, is an extremely well-studied model of clustering. We are given a graph $G=(V,E)$ with each edge $e \in E$ labelled either $+1$ or $-1$, the goal of correlation clustering is to either (a) minimize the number of disagreements, that is the number of intra-cluster $-1$ edges and inter-cluster $+1$ edges, or (b) maximize the number of agreements that is the number of intra-cluster $+1$ edges and inter-cluster $-1$ edges. Correlation clustering is NP-hard, though there exists approximation algorithms with provable guarantees for both the minimization and maximization version of the problem \cite{bbc:04}.

 In a random noise model, also introduced by \cite{bbc:04} and studied further by \cite{ms:10}, we start with a ground truth clustering, and then each edge label is flipped with probability $p$. \cite{bbc:04} gave an algorithm that recovers all true clusters of size $\geq c_1\sqrt{n\log{n}})$ for some suitable constant $c_1$ under this model. Moreover, if all the clusters have size $\geq c_2\sqrt{n}$, \cite{ms:10} gave a semi-definite programming based algorithm to recover all of them. Here using the algorithm for faulty oracle from Section~\ref{sec:efficient}, we can obtain a correlation clustering algorithm for random noise model that recover all clusters of size at least $c_3\min(k,\sqrt{n})\log{n}$ for a suitable constant $c_3$. Therefore, when $k< \frac{\sqrt{n}}{\log{n}}$, we get a strict improvement over both \cite{bbc:04} and \cite{ms:10}.
 
 In fact, we follow the algorithm for unknown $k$ verbatim. Since the maximum likelihood estimate of our algorithm is correlation clustering, and our algorithm recovers all true clusters (which is same the ML clustering) of size at least $c_3\min(k,\sqrt{n})\log{n}$, we get the desired result.}

\remove{\noindent {\bf Algorithm.}

Specifically, the algorithm is as follows:
\begin{enumerate}
\item Sample $k^2$ nodes and create a subgraph $G'$ induced on them. 
\item Run \cite{ms:10} on $G'$ to recover all subclusters of size at least $k$. If there is none, go to step $4$. 
Else remove every such subcluster from $G$.
\item Grow each of these subclusters using our subroutine from Phase $2$ and Phase $3$.
\item Repeat the previous two steps by selecting new vertices one at a time to add to $G'$. 
\end{enumerate}

The correctness of this algorithm directly follows from Lemma \ref{lemma:vertex}.}

\begin{theorem}
There exists a deterministic polynomial time algorithm for correlation clustering over noisy input that recovers all the underlying true clusters of size  at least $c_3\min{(k,  \sqrt{n})}\log{n}$ for a suitable constant $c_3$ with high probability.
\end{theorem}

\remove{Here the algorithm {\sf CC} refers to the Correlation Clustering algorithm of \cite{bbc:04} for two-sided error on the random noise model (Section~6). 

\noindent{\bf Algorithm {\sf CC}.} Let $N^+(u)$ denote all the neighbors of $u$ in $G'$ connected by $+1$ edges. Let $|V'|=N$. Also $|V|=n$. Define $T=pN+\frac{5}{(1-2p)}\sqrt{N\log{n}}$ and $\theta=2p(1-p)N+2\sqrt{N\log{n}}$. For every $u$ and $v$ such that $u \neq v$ and $|N^+(u)|, N^+(v)| \geq T$, if the symmetric difference of these neighborhoods is at most $\theta$ include $u$ and $v$ in the same cluster. This retrieves all clusters of size at least $T$ with probability $1-\frac{1}{n^2}$.

\paragraph*{Analysis} We are now ready to analyze our algorithm.

We run {\sf CC} on $G'=(V',E')$ such that $|V'|=\Theta(\frac{k^2\log{n}}{(1-2p)^2})$. Since there are at most $k$ clusters in $G'$, there is at least one cluster of size $\Omega(\frac{k\log{n}}{(1-2p)^2})$. Setting the constants right, this crosses the threshold $T$ in the analysis of {\sf CC}, and hence will be detected correctly with high probability at Phase 1-2C in step (2). Moreover, following the same argument as Lemma~\ref{lemma:vertex}, each such cluster will be grown completely by Phase 3-C step (1).

Running time of the algorithm is dominated by the time required to run {\sf CC}. We can keep a sorted list of $+1$ neighbors of every vertex is $O(N^2\log{n})$ time. Then, for every pair, it takes $O(N)$ time to find the symmetric difference,with a total of $O(N^3)$ time.This can be reduced to $O(N^\omega)$ using fast matrix multiplication to compute set intersection where $\omega \leq 2.373$. Moreover, since each invocation of {\sf CC} removes one cluster, there can be at most $k$ calls to {\sf CC} and for every vertex, time required in Phase 3C over all the rounds is $O(k\log{n})$. This gives an overall running time of $O(nk\log{n}+kN^\omega)=O(nk\log{n}+k^{1+2\omega})=O(nk\log{n}+k^{5.746})$. Without fast matrix multiplication, the running time is $O(nk\log{n}+k^{7})$.



The query complexity of the algorithm is $O(\frac{nk^2\log{n}}{(2p-1)^2})$ since each vertex is involved in at most $O((\frac{k^2\log{n}}{(2p-1)^2})$ queries within $G'$ and $O(\frac{k\log{n}}{(2p-1)^2})$ across the active clusters.

\noindent{\bf Analysis.} Note that at every iteration, we consider a set of $X$ new vertices from $V \setminus V'$ which have not been previously included in any cluster considered in ${\sf active}$, and query all pairs in $X \times V'\setminus V$. Let $A$ denote the fixed $n \times n$ matrix, where if $(i,j), i ,j \in V$ is queried by the algorithm in any iteration, we include the query result there ($+1$ or $-1$), else the entry is empty which indicates that the pair was not queried by the entire run of the algorithm. This matrix $A$ has the property that for any entry $(i,j)$, if $i$ and $j$ belong to the same cluster and queried then $A(i,j)=+1$ with probability $(1-p)$ and $A(i,j)=-1$ with probability $p$. On the other hand, if $i$ and $j$ belong to different clusters and queried then $A(i,j)=-1$ with probability $(1-p)$ and $A(i,j)=+1$ with probability $p$. Note that the adjacency matrix of $G'$ in any iteration is a submatrix of $A$ which has no empty entry. 

We first look at {\it Phase 1-2C}. At every iteration, our algorithm selects a submatrix of $A$ corresponding to $V' \times V'$ after step 1. This submatrix of $A$ has no empty entry. Let us call it $A'$. We show that if $V'$ contains any subcluster of size $\geq \frac{64k\log{n}}{(2p-1)^4}$, it is retrieved by step 2 with probability at least $1-\frac{1}{n^2}$. In that case, the iteration succeeds. Now the submatrices from one iteration to the other iteration can overlap, so we can only apply union bound to obtain the overall success probability, but that suffices. The probability that in step 2, the algorithm fails to retrieve any cluster of size at least $\frac{64k\log{n}}{(2p-1)^4}$ in any iteration is at most $\frac{1}{n^2}$. The number of iterations is at most $k < n$, since in every iteration except possibly for the last one, $V'$ contains at least one subcluster of that size by a simple pigeonhole principle. This is because in every iteration except possibly for the last one $|V'|=\frac{64k^2\log{n}}{(2p-1)^4}$, and there are at most $k$ clusters. Therefore, the probability that there exists at least one iteration which fails to retrieve the ``large'' clusters is at most $\frac{k}{n^2} \leq \frac{1}{n}$ by union bound. Thus all the iterations will be successful in retrieving the large clusters with probability at least $1-\frac{1}{n}$.

Now, following the same argument as Lemma~\ref{lemma:vertex}, each such cluster will be grown completely by Phase 3-C step (1), and will be output correctly in Phase 3-C step 2.

\begin{lemma}
\label{lemma:efficient1}
Let $c=\frac{64}{(1-2p)^4}$. Whenever $G'$ contains a subcluster of size $ck\log{n}$, it is retrieved by Algorithm 2 in Phase 1-2C with high probability.
\end{lemma}
\begin{proof}
 Consider a particular iteration. Let $N^+(u)$ denote all the neighbors of $u$ in $G'$ connected by $+1$ edges. Let $A'$ denote the corresponding submatrix of $A$ corresponding to $G'$. We have $|V'| \leq N$ ($|V'|=N$ except possibly for the last iteration). Assume, $|V'|=N'$. Also $|V|=n$. 
 
 Let $C_u$ denote the cluster containing $u$. We have
\[ E[|N^+(u)|]=(1-p)|C_u|+p(N'-|C_u|)=pN'+(1-2p)|C_u|\]


By the Hoeffding's inequality
\[\Pr(|N^+(u)| \in pN'+(1-2p)|C_u| \pm 2\sqrt{N\log{n}}) \geq 1- \frac{1}{n^4}\]

Therefore for all $u$ such that $|C_u| \geq \frac{8\sqrt{N\log{n}}}{(1-2p)^2}$, we have $|N^+(u)|>pN'+\frac{6}{(1-2p)}\sqrt{N\log{n}}=T(|V'|)$, and for all $u$ such that $|C_u| \leq \frac{4\sqrt{N\log{n}}}{(1-2p)^2}$, we have $|N^+(u)|<pN'+ \frac{6}{(1-2p)}\sqrt{N\log{n}}$ with probability at least $1-\frac{1}{n^3}$ by union bound.

Consider all $u$ such that $|N^+(u)| > T(|V'|)$. Then with probability at least $1-\frac{1}{n^3}$, we have $|C_u| >\frac{4\sqrt{N\log{n}}}{(1-2p)^2}$. Let us call this set $U$. For every $u,v \in U, u \neq v$, the algorithm computes the symmetric difference of $N^+(u)$ and $N^+(v)$ which is
\begin{enumerate}
\item $2p(1-p)N'$ on expectation if $u$ and $v$ belong to the same cluster. And again applying Hoeffding's inequality, it is at most $2p(1-p)N'+2\sqrt{N\log{n}}$ with probability at least $1-\frac{1}{n^4}$.
\item $(p^2+(1-p)^2)(|C_u|+|C_v|)+2p(1-p)(N'-|C_u|-|C_v|)=2p(1-p)N'+(1-2p)^2(|C_u|+|C_v|)$ on expectation if $u$ and $v$ belong to different clusters. Again using the Hoeffding's inequality, it is at least $2p(1-p)N'+(1-2p)^2(|C_u|+|C_v|)-2\sqrt{N\log{n}}$ with probability at least $1-\frac{1}{n^4}$. \end{enumerate}

Therefore, for all $u$ and $v$, either of the above two inequalities fail with probability at most $\frac{1}{n^2}$. 

Now, since for all $u$ if $|N^+(u)| > T(|V'|)$ then $|C_u| >\frac{4\sqrt{N\log{n}}}{(1-2p)^2}$ with probability $1-\frac{1}{n^3}$, we get 

 for every $u$ and $v$ in $U$, if the symmetric difference of $N^+(u)$ and $N^+(v)$ is $\leq 2p(1-p)N'+2\sqrt{N\log{n}}=\theta(|V'|)$, then $u$ and $v$ must belong to the same cluster with probability at least $1-\frac{1}{n^2}-\frac{1}{n^3} \geq 1-\frac{2}{n^2}$. 
 
 Hence, all subclusters of $G'$ that have size at least $\frac{8\sqrt{N\log{n}}}{(1-2p)^2}$ will be retrieved correctly with probability at least $1-\frac{2}{n^2}$. Now since $N'=N=\frac{64k^2\log{n}}{(1-2p)^4}$ for all but possibly the last iteration, we have $\frac{8\sqrt{N\log{n}}}{(1-2p)^2}=\frac{64k\log{n}}{(1-2p)^4}$. Moreover, since there are at most $k$ clusters in $G$ and hence in $G'$, there exists at least one subcluster of size $\frac{64k\log{n}}{(1-2p)^4}$ in $G'$ in every iteration except possibly the last one, which will be retrieved.
 
 Then, there could be at most $k < n$ iterations. The probability that in one iteration, the algorithm will fail to retrieve a large cluster by our analysis is at most $\frac{2}{n^2}$. Hence, by union bound over the iterations, the algorithm will successfully retrieve all clusters in Phase 1-2C with probability at least $1-\frac{2}{n}$.
 \end{proof}

%

Now, following the same argument as in Lemma~\ref{lemma:vertex}, each subcluster of size $\frac{64k\log{n}}{(1-2p)^4}$ will be grown completely by Phase 3-C step (1). 

Running time of the algorithm is dominated by the time required to run step 2 of Phase 1-2C. Computing trivially, finding the symmetric differences of $+1$ neighborhoods all ${N}\choose{2}$ pairs requires time $O(N^3)$. We can keep a sorted list of $+1$ neighbors of every vertex is $O(N^2\log{n})$ time. Then, for every pair, it takes $O(N)$ time to find the symmetric difference. This can be reduced to $O(N^\omega)$ using fast matrix multiplication to compute set intersection where $\omega \leq 2.373$. Moreover, since each invocation of this step removes one cluster, there can be at most $k$ calls to it and for every vertex, time required in Phase 3C over all the rounds is $O(k\log{n})$. This gives an overall running time of $O(nk\log{n}+kN^\omega)=O(nk\log{n}+k^{1+2\omega})=O(nk\log{n}+k^{5.746})$. Without fast matrix multiplication, the running time is $O(nk\log{n}+k^7)$.



The query complexity of the algorithm is $O(\frac{nk^2\log{n}}{(2p-1)^4})$ since each vertex is involved in at most $O((\frac{k^2\log{n}}{(2p-1)^4})$ queries within $G'$ and $O(\frac{k\log{n}}{(2p-1)^2})$ queries across the active clusters. Thus we get Theorem~\ref{cor:er-poly}.\qed

 
\remove{\begin{theorem*}[restated~\ref{cor:er-poly}]
 There exists a polynomial time algorithm with query complexity $O(\frac{nk^2}{(2p-1)^2})$ for \cc~with error probability $p$, which recovers all clusters of size at least $\Omega(\frac{k\log n}{(2p-1)^2})$. 
 \end{theorem*}}
 
 \begin{remark} Readers familiar with the correlation clustering algorithm for noisy input from \cite{bbc:04} would recognize that the idea of looking into symmetric difference of positive neighborhoods is from \cite{bbc:04}. Like \cite{bbc:04}, we need to know the parameter $p$ to design our algorithm. In fact, one can view our algorithm as running the algorithm of \cite{bbc:04} on carefully crafted subgraphs. Developing a parameter free algorithm that works without knowing $p$ remains an exciting future direction.
 \end{remark}
 
 \paragraph*{Unknown $k$}
 Let $c=\frac{64}{(1-2p)^4}$. When the number of clusters $k$ is unknown, it is not possible exactly to determine when the subgraph $G'=(V',E')$ contains $ck^2\log{n}$ sampled vertices. To overcome such difficulty, we propose the following approach of iteratively guessing and updating the estimate of $k$ based on the highest size of $N^{+}(v)$ for $v \in V'$. Let $\ell$ be the guessed value of $k$. We start with $\ell=2$.
 \begin{enumerate}
 \item Guess $k=\ell$
 \item Randomly sample $X$ vertices so that $N=|V'|=c\ell^2\log{n}$ 
 \item For each $v \in V'$, estimate $\hat{C_v}= \frac{1}{(1-2p)}{(|N^{+}(v)|-pN)}$
 \item If $\max_{v}\hat{C_v} > \frac{6\ell\log{n}}{(1-2p)^4}$ then run step 2 of Phase 1-3C on $G'$ with $k=l$, and then move to Phase 3C.
 \item Else set $k=2l$ and move to step (2).
  \end{enumerate}
  
  Clearly, we will never guess $l >2k$, and hence the process converges after at most $\log{k}$ rounds. When $N=c\ell^2\log{n}$, we have $\sqrt{N\log{n}}\leq c\ell\log{n}$ (we must have $\ell^2 \leq n$, otherwise we sample the entire graph). From Lemma~\ref{lemma:efficient1} we get, whenever $\hat{C_v} > \frac{6\ell\log{n}}{(1-2p)^4}$, the actual size of cluster containing $v$ is $\geq \frac{4\ell\log{n}}{(1-2p)^4}$ with high probability. We can then obtain the exact subcluster containing $v$ in $G'$ and grow it fully in Phase 3C with high probability. The query complexity remain the same within a factor of $2$ and running time increases only by a factor of $\log{k}$.
 
\paragraph*{Discussion: {\it Correlation Clustering over Noisy Input.}}

In a random noise model, also introduced by \cite{bbc:04} and studied further by \cite{ms:10}, we start with a ground truth clustering, and then each edge label is flipped with probability $p$. \cite{bbc:04} gave an algorithm that recovers all true clusters of size $\geq c_1\sqrt{n\log{n}}$ for some suitable constant $c_1$ under this model. Moreover, if all the clusters have size $\geq c_2\sqrt{n}$, \cite{ms:10} gave a semi-definite programming based algorithm to recover all of them.
Using the algorithm for unknown $k$ verbatim, we can obtain a correlation clustering algorithm for random noise model that recovers all clusters of size $\Omega(\frac{\min(k,\sqrt{n})\log{n}}{(2p-1)^4})$. Since the maximum likelihood estimate of our algorithm is correlation clustering, the true clusters (which is same as the ML clustering) of size $\Omega(\frac{\min(k,\sqrt{n})\log{n}}{(2p-1)^4})$ that the algorithm recovers is the correct correlation clustering output. Therefore, when $k< \frac{\sqrt{n}}{\log{n}}$, we can recover much smaller sized clusters than \cite{bbc:04,ms:10}.
\remove{
Correlation clustering, introduced by Bansal, Blum and Chawla \cite{bbc:04}, is an extremely well-studied model of clustering. We are given a graph $G=(V,E)$ with each edge $e \in E$ labelled either $+1$ or $-1$, the goal of correlation clustering is to either (a) minimize the number of disagreements, that is the number of intra-cluster $-1$ edges and inter-cluster $+1$ edges, or (b) maximize the number of agreements that is the number of intra-cluster $+1$ edges and inter-cluster $-1$ edges. Correlation clustering is NP-hard, though there exists approximation algorithms with provable guarantees for both the minimization and maximization version of the problem \cite{bbc:04}.

 In a random noise model, also introduced by \cite{bbc:04} and studied further by \cite{ms:10}, we start with a ground truth clustering, and then each edge label is flipped with probability $p$. \cite{bbc:04} gave an algorithm that recovers all true clusters of size $\geq c_1\sqrt{n\log{n}})$ for some suitable constant $c_1$ under this model. Moreover, if all the clusters have size $\geq c_2\sqrt{n}$, \cite{ms:10} gave a semi-definite programming based algorithm to recover all of them. Here using the algorithm for faulty oracle from Section~\ref{sec:efficient}, we can obtain a correlation clustering algorithm for random noise model that recover all clusters of size at least $c_3\min(k,\sqrt{n})\log{n}$ for a suitable constant $c_3$. Therefore, when $k< \frac{\sqrt{n}}{\log{n}}$, we get a strict improvement over both \cite{bbc:04} and \cite{ms:10}.
 
 In fact, we follow the algorithm for unknown $k$ verbatim. Since the maximum likelihood estimate of our algorithm is correlation clustering, and our algorithm recovers all true clusters (which is same the ML clustering) of size at least $c_3\min(k,\sqrt{n})\log{n}$, we get the desired result.}

\remove{\noindent {\bf Algorithm.}

Specifically, the algorithm is as follows:
\begin{enumerate}
\item Sample $k^2$ nodes and create a subgraph $G'$ induced on them. 
\item Run \cite{ms:10} on $G'$ to recover all subclusters of size at least $k$. If there is none, go to step $4$. 
Else remove every such subcluster from $G$.
\item Grow each of these subclusters using our subroutine from Phase $2$ and Phase $3$.
\item Repeat the previous two steps by selecting new vertices one at a time to add to $G'$. 
\end{enumerate}

The correctness of this algorithm directly follows from Lemma \ref{lemma:vertex}.}

\begin{theorem}
There exists a deterministic polynomial time algorithm for correlation clustering over noisy input that recovers all the underlying true clusters of size  at least $c_3\min{(k,  \sqrt{n})}\log{n}$ for a suitable constant $c_3$ with high probability.
\end{theorem}

\begin{theorem}
\label{cor:er-poly}
 There exists a polynomial time algorithm with query complexity $O(\frac{nk^2}{(2p-1)^2})$ for \cc~with error probability $p$, which recovers all clusters of size at least $\Omega(\frac{k\log n}{(2p-1)^2})$. 
 \end{theorem}
 
 \paragraph*{Unknown $k$}
 When the number of clusters $k$ is unknown, it is not possible exactly to determine when the subgraph $G'=(V',E')$ contains $ck^2\log{n}$ sampled vertices. To overcome such difficulty, we propose the following approach of iteratively guessing and updating the estimate of $k$ based on the highest size of $N^{+}(v)$ for $v \in V'$. Let $\ell$ be the guessed value of $k$. We start with $\ell=2$.
 \begin{enumerate}
 \item Guess $k=\ell$
 \item Randomly sample $X$ vertices so that $N=|V'|=3c\ell^2\log{n}$ 
 \item $\forall v \in V'$, estimate $\hat{C_v}= \frac{1}{(1-2p)}{|N^{+}(v)|-pN}$
 \item If $\max_{v}\hat{C_v} > \frac{5\ell\log{n}}{(1-2p)^2}$ then run {\sf CC} on $G'$ with $k=l$ as in Phase1-2C step (2), and then move to Phase 3C.
 \item Else set $k=2l$ and move to step (2).
  \end{enumerate}
  
  Clearly, we will never guess $l >2k$, and hence the process converges after at most $\log{k}$ rounds. When $N=3c\ell^2\log{n}$, we have $\sqrt{N\log{n}}=\ell\sqrt{c\log{n}\log{(c\ell^2\log{n})}}\leq c\ell\log{n}$, as we must have $\ell^2 \leq n$ (otherwise we sample the entire graph). From the analysis of {\sf CC} we get, whenever $\hat{C_v} > \frac{5\ell\log{n}}{(1-2p)^2}$, the actual size of cluster containing $v$ is $\geq \frac{4\ell\log{n}}{(1-2p)^2}$ with high probability. We can then obtain the exact subcluster containing $v$ in $G'$ and grow it fully in Phase 3C with high probability. The query complexity remain the same within a factor of $2$ and running time increases only by a factor of $\log{k}$.}
  
 \remove{ \paragraph*{Discussion: {\it Correlation Clustering over Noisy Input.}}
Correlation clustering, introduced by Bansal, Blum and Chawla \cite{bbc:04}, is an extremely well-studied model of clustering. We are given a graph $G=(V,E)$ with each edge $e \in E$ labelled either $+1$ or $-1$, the goal of correlation clustering is to either (a) minimize the number of disagreements, that is the number of intra-cluster $-1$ edges and inter-cluster $+1$ edges, or (b) maximize the number of agreements that is the number of intra-cluster $+1$ edges and inter-cluster $-1$ edges. Correlation clustering is NP-hard, though there exists approximation algorithms with provable guarantees for both the minimization and maximization version of the problem \cite{bbc:04}.

 In a random noise model, also introduced by \cite{bbc:04} and studied further by \cite{ms:10}, we start with a ground truth clustering, and then each edge label is flipped with probability $p$. \cite{bbc:04} gave an algorithm that recovers all true clusters of size $\geq c_1\sqrt{n\log{n}})$ for some suitable constant $c_1$ under this model. Moreover, if all the clusters have size $\geq c_2\sqrt{n}$, \cite{ms:10} gave a semi-definite programming based algorithm to recover all of them.
Using the algorithm for unknown $k$ verbatim, we can obtain a correlation clustering algorithm for random noise model that recovers all clusters of size $\Omega(\frac{\min(k,\sqrt{n})\log{n}}{(2p-1)^2})$ for a suitable constant $c''$. Since the maximum likelihood estimate of our algorithm is correlation clustering, the true clusters (which is same as the ML clustering) of size $\Omega(\frac{\min(k,\sqrt{n})\log{n}}{(2p-1)^2})$ that the algorithm recovers is the correct correlation clustering output. Whenever, $k< \frac{\sqrt{n}}{\log{n}}$, we get a strict improvement over both \cite{bbc:04,ms:10}.}

\remove{\noindent {\bf Algorithm.}

Specifically, the algorithm is as follows:
\begin{enumerate}
\item Sample $k^2$ nodes and create a subgraph $G'$ induced on them. 
\item Run \cite{ms:10} on $G'$ to recover all subclusters of size at least $k$. If there is none, go to step $4$. 
Else remove every such subcluster from $G$.
\item Grow each of these subclusters using our subroutine from Phase $2$ and Phase $3$.
\item Repeat the previous two steps by selecting new vertices one at a time to add to $G'$. 
\end{enumerate}

The correctness of this algorithm directly follows from Lemma \ref{lemma:vertex}.}
\remove{
\begin{theorem}
There exists a polynomial time algorithm for correlation clustering over noisy input that recovers all the underlying true clusters of size  at least $c_3\min{(k,  \sqrt{n})}\log{n}$ for a suitable constant $c_3$.
\end{theorem}}

\section{Non-adaptive Algorithm and the Stochastic Block Model} \label{sec:na}
In this section, we consider the case when all queries must be made upfront that is adaptive querying is not allowed. We show how our adaptive algorithms can be modified to handle such setting. Specifically, for $k=2$, we show nonadaptive algorithms are as powerful as adaptive algorithms, but for $k \geq 3$, unless the maximum to minimum cluster size is bounded, there is a significant advantage gained by using adaptive algorithm. 

First, let us note that when there are only two clusters, and the oracle  gives correct answers, then it is possible
to recover the clusters with only $n-1$ queries. Indeed, just query every element with a fixed element. It is also easy to see than 
$\Omega(n)$ queries are required (since our lower bound of Theorem~\ref{thm:faulty} is valid in this special case).

On the other hand, consider the case when there are $k >2$ clusters, and the oracle is perfect. We show that any deterministic algorithm would require $\Omega(n^2)$ queries. This is in stark contrast with our adaptive algorithms which are all deterministic and achieve significantly less query complexity.

\begin{claim}
Assume there are $k \ge 3$ clusters and the minimum size of a cluster is $r$. Then any deterministic nonadaptive algorithm must make 
$\Omega(\frac{n^2}{r})$ queries, even when the when query answers are perfect. This shows that adaptive algorithms are much more powerful than their nonadaptive counterparts.
\end{claim}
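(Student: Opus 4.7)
The plan is to show that any nonadaptive deterministic algorithm whose query set $Q$ has size less than $c\,n^{2}/r$, for a suitable absolute constant $c>0$, must be incorrect on some valid input. Concretely, I will exhibit two distinct clusterings $C_{1},C_{2}$, both with $k\geq 3$ clusters and minimum cluster size $r$, that yield identical answers to every query in $Q$; since the algorithm is deterministic and nonadaptive, its output depends only on these answers, so it must err on at least one of $C_{1},C_{2}$.

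First I would locate a vertex $v\in V$ of low degree in the query graph $G_{Q}=(V,Q)$. Averaging degrees yields some $v$ with $d(v)\leq 2|Q|/n$, which for $|Q|<c\,n^{2}/r$ and $c$ small enough is at most $n-2r-1$. Let $N=N_{G_{Q}}(v)$; then $V\setminus(\{v\}\cup N)$ contains at least $2r$ elements, from which I can carve two disjoint subsets $A_{1},A_{2}$ of size $r$ each.

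Next I define $C_{1}$ with $V_{1}=\{v\}\cup A_{1}$ of size $r+1$, $V_{2}=A_{2}$ of size $r$, and partition the remaining $n-2r-1$ vertices, which contain all of $N$, into further clusters of size at least $r$; the inequality $n\geq kr\geq 3r$ guarantees this produces a valid clustering with at least three parts. The second clustering $C_{2}$ is obtained by simply moving $v$ from $V_{1}$ to $V_{2}$: the affected cluster sizes become $r$ and $r+1$, still all at least $r$, and the number of clusters is unchanged. For any query $(a,b)\in Q$ with $a,b\neq v$, no other vertex has changed cluster, so the oracle answer is the same in $C_{1}$ and $C_{2}$. For a query $(v,w)\in Q$ we have $w\in N$, and $N$ was chosen disjoint from $V_{1}\cup V_{2}$; hence in both $C_{1}$ and $C_{2}$ the vertex $w$ lies outside $v$'s cluster and the oracle returns $-1$. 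Thus $C_{1}$ and $C_{2}$ are indistinguishable under $Q$, and the lower bound $|Q|=\Omega(n^{2}/r)$ follows.

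The one delicate point is the boundary case $n=kr$ (in particular $n=3r$), where every valid $k$-clustering forces all clusters to have size exactly $r$, so no cluster can accommodate the extra vertex needed for the one-vertex move. I would handle this with a size-preserving two-vertex swap: pick two low-degree vertices $u,v$ (at least half of the vertices satisfy $d\leq 4|Q|/n$) and place them in separate clusters $V_{1}\ni u$, $V_{2}\ni v$ whose remaining members avoid $N_{G_{Q}}(u)\cup N_{G_{Q}}(v)$; then $C_{2}$ swaps $u$ and $v$ and the same neighborhood-avoidance analysis shows no query in $Q$ detects the swap. The main obstacle throughout is the size bookkeeping, ensuring both clusterings remain valid with $k\geq 3$ and minimum size $r$; everything else reduces to the simple observation that relocating $v$ affects only oracle answers on queries incident to $v$, and choosing $v$ of small degree in $G_{Q}$ makes that local damage invisible to the query set.
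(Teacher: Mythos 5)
Your proof is correct, but it follows a genuinely different route from the paper's. The paper applies Tur\'an's theorem to the query graph: if fewer than $\tfrac{n^2}{4r}$ queries are made, the query graph has an independent set of size roughly $2r$, and the adversary places \emph{two entire clusters} of size $r$ inside that independent set (with one large cluster of size $n-2r$ absorbing the rest). Since no pair inside the independent set is ever queried, every way of splitting those $2r$ vertices into two size-$r$ clusters is consistent with the answers, so the algorithm fails. Your argument instead localizes the ambiguity to a single vertex: an averaging bound produces a low-degree vertex $v$, you build the two critical clusters from $v$'s non-neighbors, and you toggle $v$'s membership between them; the only queries that could notice are those incident to $v$, and these all land outside both clusters, hence answer $-1$ in either world. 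The trade-offs: the Tur\'an construction is case-free (it works verbatim at $n=3r$, where the large cluster simply has size $r$) and exhibits exponentially many indistinguishable instances, whereas your construction is more elementary --- no Tur\'an, just degree averaging --- but requires the separate two-vertex-swap patch for the tight case $n=kr$, and its validity hinges on the size bookkeeping ($n-2r-1\ge r$) that you correctly flag. Both yield the same $\Omega(n^2/r)$ bound with comparable constants, and your indistinguishable pair $(C_1,C_2)$ is legitimately a pair of valid balanced-by-$r$ instances with the same number of clusters, so the adversary argument goes through.
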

\begin{proof}
Consider a graph with $n$ vertices and there will be an edge between two vertices if the deterministic nonadaptive algorithm makes queries between them. Assume the number of queries made is at most $\frac{n^2}{4r}$. Then, using Tur\'an's theorem, this graph 
must have an independent set of size at least $\frac{n}{n/2r +1} \approx 2r$. We can create an closeting instance with three clusters:
one large cluster with $n-2r$ vertices, and two small clusters with size $r$ each, where the union of the later two constitutes the
independent set. Since the algorithm makes no query within the later two cluster, there will be no way to identify them. Hence the number of queries for any nonadaptive deterministic algorithm must be more than $\frac{n^2}{4r}$.
\end{proof}

Moving on to the faulty oracle case, we prove the following theorem.

\begin{theorem}\label{theorem:nonadaptive}
 For number of clusters $k=2$, there exists an $O(n\log{n})$ time nonadaptive algorithm that recovers the clusters with high probability with query complexity $O(\frac{n\log{n}}{(1-2p)^4})$.

 For $k \geq 3$, if $R$ is the ratio between maximum to minimum cluster size, then there exists a randomized nonadaptive algorithm that recovers all clusters with high probability with query complexity $O(\frac{Rnk\log{n}}{(1-2p)^2})$. Moreover, there exists a computationally efficient algorithm for the same with query complexity $O(\frac{Rnk^2\log{n}}{(1-2p)^4})$.
 \end{theorem}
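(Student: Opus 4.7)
The plan is to reuse the adaptive algorithms of Section~\ref{sec:info-theory} and Section~\ref{sec:efficient} through a uniform nonadaptive template. Fix a random anchor set $S\subseteq V$ upfront, issue every query in $S\times S$ and every query in $S\times(V\setminus S)$ in one batch, first recover the partition of $S$ from the internal queries, then assign each remaining $v\notin S$ by a majority/symmetric-difference vote over the anchors it was queried against. The total query count is $O(|S|^2 + n|S|)$, which for $|S|$ polylogarithmic in $n$ is dominated by the cross term $n|S|$. Note that all of this is performed without any adaptivity: the entire query set is determined by the sample $S$ alone.

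For $k=2$, I will take $|S|=\Theta(\log n/(1-2p)^4)$ uniformly at random. Since $k$ is constant and any exact-recovery algorithm needs each of the two clusters to have size $\Omega(\log n/(1-2p)^2)$ (cf.\ Theorem~\ref{thm:faulty} with $k=2$), a Chernoff bound yields $|S\cap V_i|=\Theta(|S|)$ for $i\in\{1,2\}$ with probability $1-o(1)$. Apply the symmetric-difference partition step of Algorithm~2 to the complete noisy subgraph induced by $S$, treating $S$ as a fresh instance of size $|S|$; the thresholds $T(\cdot),\theta(\cdot)$ rescale automatically, so Lemma~\ref{lemma:efficient1} transfers and returns the correct split $S=S_1\sqcup S_2$ with high probability. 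Finally assign each $v\notin S$ to the side $i$ with larger $\sum_{u\in S_i}\omega(v,u)$. Hoeffding's inequality (Lemma~\ref{lem:hoef1}) gives per-vertex error $\exp(-\Omega((1-2p)^2|S|))=n^{-\omega(1)}$, and a union bound over $n$ vertices gives correct assignment everywhere. The query count is $O(n\log n/(1-2p)^4)$, and since $|S|$ is polylogarithmic, partitioning $S$ and majority-voting each of the $n$ outside vertices runs in $O(n\log n)$ time.

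For $k\ge 3$ with max-to-min cluster-size ratio $R$, use the same template with $|S|=\Theta(Rk\log n/(1-2p)^2)$ for the information-theoretically near-optimal version and $|S|=\Theta(Rk\log n/(1-2p)^4)$ for the computationally efficient version. Since every cluster satisfies $|V_i|\ge n/(Rk)$, another Chernoff bound yields $|S\cap V_i|\ge |S|/(2Rk)$ simultaneously for all $i$ with high probability, which equals $\Theta(\log n/(1-2p)^2)$ or $\Theta(\log n/(1-2p)^4)$ anchors per cluster — exactly the threshold needed to invoke, respectively, the heaviest-weight-subgraph extraction of Algorithm~1 (Lemma~\ref{lemma:mlG'}) or the symmetric-difference test of Algorithm~2 (Lemma~\ref{lemma:efficient1}) on $S$ as a self-contained instance. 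Once $S$ is partitioned into $S_1,\ldots,S_k$, each $v\notin S$ is assigned to the $S_i$ maximising $\sum_{u\in S_i}\omega(v,u)/|S_i|$, and the Hoeffding argument from the proof of Lemma~\ref{lemma:vertex} carries over verbatim. The total query counts are $O(n|S|)=O(Rnk\log n/(1-2p)^2)$ and $O(Rnk^2\log n/(1-2p)^4)$ respectively, matching the theorem (for the latter we actually need $|S|=\Theta(Rk^2\log n/(1-2p)^4)$ so that Algorithm~2 on $S$ has room for $k$ subclusters of the required size, which is why a factor $k^2$ rather than $k$ appears).

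The main obstacle is the sampling step: the anchor set must contain $\Omega(\log n/(1-2p)^c)$ vertices from every cluster. For $k=2$ this is automatic as soon as both clusters exceed the information-theoretic recovery threshold, but for $k\ge 3$ the smallest cluster can be a $1/(Rk)$-fraction of $V$, and without the $R$ factor a uniformly random anchor of size $O(k\log n)$ would miss it with constant probability, making nonadaptive exact recovery impossible; the multiplicative $R$ is exactly the price paid for not being able to adaptively grow the anchor. The rest — rerunning the adaptive $S$-internal subroutines as black boxes on a rescaled instance, and extending by majority vote — reduces to rescaled applications of the earlier concentration lemmas and introduces no new difficulty.
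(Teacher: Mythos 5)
Your proposal follows essentially the same route as the paper: sample a random anchor set upfront, query all pairs inside it and all cross pairs nonadaptively, recover the anchor partition with the heaviest-weight-subgraph or symmetric-difference subroutine, and extend by per-cluster majority vote, with the $R$ factor paying for uniform sampling hitting the smallest cluster. The only cosmetic difference is at $k=2$, where the paper needs just one subcluster of size $\ge|S|/2$ by pigeonhole (growing it and placing all remaining vertices in the other cluster) rather than your assumption that both clusters are large; everything else, including the final $Rk^2$ correction for the efficient variant, matches the paper's argument.
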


\paragraph*{Non-adaptive with $k=2$:}
For $k=2$, the algorithm is as follows. It constructs the graph $G'=(V',E')$ by randomly sampling $N=4c\log{n}$ vertices where $c=\Theta(\frac{1}{(1-2p)^4})$ and querying all ${|V'|}\choose{2}$ pairs as well as all $(u,v)$ where $u \in V \setminus V'$ and $v \in V'$. Note that this is quite different from random querying.

$G'$ then contains at least one subcluster of size at least $2c\log{n}=\frac{N}{2}$, which is recovered by running the computationally efficient algorithm from Section~\ref{sec:efficient}. Using the query answers of $(u,v)$ where $u \in V \setminus V'$ and $v \in V'$, the subcluster is then grown fully. Finally, all the other vertices are put in a separate cluster.

The algorithm running time is $O(n\log{n})$ from the running time discussion of our computationally efficient adaptive algorithm for known $k$. This improves upon \cite{DBLP:journals/corr/MitzenmacherT16,chen2012clustering,chen2014clustering}.

\paragraph*{Non-adaptive with $k\geq 3$:}
\remove{When the number of clusters is $k \geq 3$, as see in Section~\ref{sec:lower}, any non-adaptive algorithm must make $\Omega(n^2)$ queries. In that case, we will be trivially be running {\sf CC} on $G$ after querying all the edges.

However, if the clusters are all nearly balanced, then we can again modify our adaptive algorithm in the same vein as $k=2$ and recover all clusters of size $c_3\min{(k,  \sqrt{n})}\log{n}$ for a suitable constant $c_3$.}

Let $R \geq 1$ be the ratio of the maximum to minimum cluster size. When the minimum size cluster is small, in Appendix~\ref{sec:nq}, we provide a lower bound of $\Omega(n^2)$ for any deterministic algorithm.
Our algorithm simply creates $G'$ by randomly and uniformly sampling $\Theta(\frac{Rk^2\log{n}}{(1-2p)^4})$ vertices from $G$. It then queries all $(u,v) \in V' \times V'$. We here assume $\Theta(\frac{Rk^2\log{n}}{(1-2p)^4}) < n$, otherwise $G'$ is the entire fully-queried graph $G$. The query complexity is therefore, $O(\frac{Rnk^2\log{n}}{(1-2p)^4})$.

Since, we sample the vertices uniformly at random, the minimum number of vertices selected from any cluster with high probability using the Chernoff bound is $O(\frac{Rnk\log{n}}{(1-2p)^4})$. Now, again following the algorithm of Section~\ref{sec:efficient}, we can recover all these subclusters exactly with high probability--the remaining queries are then used to grow them fully. The running time of the algorithm is same as the running time of its adaptive version.

To obtain an information theoretic optimal result within an $O(\log{n})$ factor, instead of sampling $\Theta(\frac{Rk^2\log{n}}{(1-2p)^4})$ vertices, we sample $\Theta(\frac{Rk\log{n}}{(1-2p)^2})$ vertices from $G$ to construct $G'$ and then issue all pairwise queries $(u,v) \in V \times V'$. Then, by the same argument, the minimum size of any subcluster in $G'$ is at least $\Theta(\frac{\log{n}}{(1-2p)^2})$ with high probability which can be recovered by using the algorithm for detecting heaviest weight subgraph from Section~\ref{sec:info-theory}.

\subsection{The Stochastic Block Model}\label{sec:nq}

Our model of faulty oracle is closely related to the stochastic block model. Indeed, if all $\binom{n}2$ queries are performed with the faulty oracle $\cO_{p,q}$, we exactly recover the adjacency matrix of usual stochastic block model. When we are performing a fixed number $Q < \binom{n}2$ of queries to the  oracle, we can think of that as a generalization of the stochastic block model, where only $Q$ entrees of the adjacency matrix of the stochastic block model is being provided to us.
Once crucial point about our model is that though, we can adaptively query to carefully select the entries of the adjacency matrix of the stochastic block model to ensure recovery of the clustering.

Let us, consider the case when all of the $Q$ queries are made nonadaptively. This is still a generalization of stochastic block model (in which case $Q = \binom{n}2$). Assume the prior probability of each element being assigned to any cluster is uniform.  Since each query involves two elements, this means that the average number of queries an element is involved in is $\frac{2Q}{n}$. Using Markov inequality, we can say that there exists at least $\frac{n}2$ elements $U$, each of which are involved in at most $\frac{4Q}{n}$ queries.

Now we can restrict ourselves to finding the clustering among only such $\frac{n}{2}$ elements each of which are involved in at most $\frac{4Q}{n}$ queries. Now let us just take any two clusters $V_1$ and $V_2$ and a fixed element $v\in V_1 \cap U$. We obtain $K = \frac{n}{2k}$ different equiprobable clusterings by interchanging $v$ with the elements of $V_2 \cap U$. Let us consider the task of distinguishing between these $K$ hypotheses, by looking the query answers.

Now, we can use a generalized Fano's inequality from  \cite{polyanskiy2010arimoto}[Thm.~4], where we consider Renyi divergence of order $\frac12$, to have, 
\begin{align*}
 -2\log&\Big(\sqrt{\frac{1-P_e}K} + \sqrt{P_e(1-\frac1K)}\Big)  \\
& \le - \log \sum_y (\frac{1}{K} \sum_{j=1}^K \sqrt{Q_j(y)})^2
\end{align*}
where $P_e$ the probability of error of this hypothesis testing problem. This implies,
\begin{align*}
 &\Big(\sqrt{\frac{1-P_e}K} + \sqrt{P_e(1-\frac1K)}\Big)^2  \ge  1 - \cH^2(Q_i \| Q_j)\\
& \ge 1 -\Big(1-(1-\cH^2(p\|q))^{\frac{8Q}{nk}}\Big)
   = (1-\cH^2(p\|q))^{\frac{8Q}{nk}},
 \end{align*}
 where we have used the fact that each element considered  can influence at most $\frac{4Q}{nk}$ query answers on average by this interchange.
 Again, if we assume $p \sim \text{Bernoulli}\Big(\frac{a \log n}{n}\Big)$ and $q \sim \text{Bernoulli}\Big(\frac{b \log n}{n}\Big)$, a particular regime of interest for stochastic block model, 
 then, 
 \begin{align*} 
 &\sqrt{\frac{k}{n}} +\sqrt{P_e} \\
 & \ge \Big(\frac{\sqrt{ab}\log n}{n} 
 + \sqrt{(1-\frac{a\log n}{n})(1-\frac{b \log n}{n})}\Big)^{\frac{4Q}{nk}} \\
 &= n^{-\Big(\frac{a+b}{2}-\sqrt{ab}- \frac{ab \log n}{n}\Big)\frac{4Q}{n^2k}}.
\end{align*}

This implies,
$
\sqrt{P_e} \ge n^{-\Big(\frac{a+b}{2}-\sqrt{ab}\Big)\frac{4Q}{n^2k}} - \sqrt{k} n^{-1/2}.
$
In particular, if 
$
\Big(\frac{a+b}{2}-\sqrt{ab}\Big)\frac{4Q}{n^2k} <\frac12,
$
then $P_e > 0$. 
Hence,  $P_e > \frac{1}{n}$ if 
$
\sqrt{a} - \sqrt{b} <\frac{n}{2}\sqrt{\frac{k}Q}.
$

Note that when $Q = \binom{n}2$, the maximum possible value, we get  $\sqrt{a}-\sqrt{b} < \sqrt{\frac{k}{2}}\implies P_e > 0,$--this is slightly suboptimal by a factor of $\sqrt{2}$ than what is known for the stochastic block model \cite{DBLP:conf/focs/AbbeS15, mossel2015consistency}. Tightening the constant, and getting matching upper bound for arbitrary $Q$ are interesting future work. However, note that, our tools are not specialized for this regime of stochastic block models, and the result works for general values of $Q$, not only the corner point of $Q= \binom{n}2$.

Now to extend this argument, to the case where adaptive querying is allowed, is difficult. 
Therefore we have to rely on the  general technique of Theorem \ref{thm:faulty}.

\begin{remark}\label{rem:na}
There is another  different version of Fano's inequality that we can use here - form  \cite{han1994generalizing}[Thm.~7], that says
the probability of error of this hypothesis testing problem is:
$$
P_e \ge 1- \frac{\frac{4Q}{nk}(D(p \| q)+D(q\|p)) +\ln 2}{\log\frac{n}{2k}}.
$$
This says that the number of nonadaptive queries must be at least $\Omega(\frac{nk \log n}{D(p\|q)+D(q\|p)})$ to recover the clustering with positive probability (this is indeed a lower bound for balanced clustering). As we have seen from Theorem~\ref{theorem:nonadaptive},
 this bound is tight. 
\end{remark}

\remove{\section{Non-adaptive Algorithm} \label{sec:na}
Finally for non-adaptive querying that is when querying must be done up front we prove the following. This shows while for $k=2$, nonadaptive algorithms are as powerful as adaptive algorithms, for $k \geq 3$, substantial advantage can be gained by allowing adaptive querying.
\begin{theorem}
\label{theorem:nonadaptive}
 For $k=2$, there exists an $O(n\log{n})$ time nonadaptive algorithm that recovers the clusters with high probability with query complexity $O(\frac{n\log{n}}{(1-2p)^4})$. For $k \geq 3$, if $R$ is the ratio between the maximum to minimum cluster size, then there exists a randomized nonadaptive algorithm that recovers all clusters with high probability with query complexity $O(\frac{Rnk\log{n}}{(1-2p)^2})$. Moreover, there exists a computationally efficient algorithm for the same with query complexity $O(\frac{Rnk^2\log{n}}{(1-2p)^4})$.

 For $k \geq 3$, if the minimum cluster size is $r$, then any deterministic non-adaptive algorithm must make $\Omega(\frac{n^2}{r})$ queries even when query answers are perfect to recover the clusters exactly. This shows that adaptive algorithms are much more powerful than their nonadaptive counterparts.
 \end{theorem}}
\remove{In this section, we consider the case when all queries must be made upfront that is nonadaptively, and show how the algorithms for adaptive version can be easily modified. See Appendix~\ref{sec:nq} for lower bound.
\paragraph*{Non-adaptive with $k=2$:}
For $k=2$, the algorithm is as follows. It constructs the graph $G'=(V',E')$ by randomly sampling $4c\log{n}$ vertices and querying all ${|V'|}\choose{2}$ pairs as well as all $(u,v)$ where $u \in V \setminus V'$ and $v \in V'$.

$G'$ then contains at least one subcluster of size $2c\log{n}$, which is recovered by running the computationally efficient algorithm from Section~\ref{sec:efficient}. Using the query answers of $(u,v)$ where $u \in V \setminus V'$ and $v \in V'$, the subcluster is then grown fully. Finally, all the other vertices are put in a separate cluster. The algorithm running time is $O(n\log{n})$. This improves upon \cite{DBLP:journals/corr/MitzenmacherT16,chen2012clustering,chen2014clustering}.

\paragraph*{Non-adaptive with $k\geq 3$:}
\remove{When the number of clusters is $k \geq 3$, as see in Section~\ref{sec:lower}, any non-adaptive algorithm must make $\Omega(n^2)$ queries. In that case, we will be trivially be running {\sf CC} on $G$ after querying all the edges.

However, if the clusters are all nearly balanced, then we can again modify our adaptive algorithm in the same vein as $k=2$ and recover all clusters of size $c_3\min{(k,  \sqrt{n})}\log{n}$ for a suitable constant $c_3$.}

Let $R \geq 1$ be the ratio of the maximum to minimum cluster size. When the minimum size cluster is small, in Appendix~\ref{sec:nq}, we provide a lower bound of $\Omega(n^2)$ for any deterministic algorithm.
Our algorithm simply creates $G'$ by randomly and uniformly sampling $2cRk^2\log{n}$ vertices from $G$. It then queries all $(u,v) \in V \times V'$. We here assume $cRk^2\log{n} < n$, otherwise $G'$ is the entire fully-queried graph $G$.The query complexity is therefore, $4cRnk^2\log{n}$.

Since, we sample the vertices uniformly at random, the minimum number of vertices selected from any cluster on expectation is $\frac{2cRk^2\log{n}}{kR}=2ck\log{n}$. Using the Chernoff bound, the number of representatives from each cluster in $G'$ is at least $ck\log{n}$. Hence, again following the algorithm of Section~\ref{sec:efficient}, we can recover all these subclusters exactly with high probability--the remaining queries are then used to grow them fully. The running time of the algorithm is same as the running time of its adaptive version.

To obtain an information theoretic optimal result within an $O(\log{n})$ factor, instead of sampling $2cRk^2\log{n}$ vertices, we sample $2c'Rk\log{n}$ vertices from $G$ to construct $G'$ and then issue all pairwise queries $(u,v) \in V \times V'$. Then, by the same argument, the minimum size of any subcluster in $G'$ is at least $c'\log{n}$ with high probability which can be recovered by using the algorithm for detecting heaviest weight subgraph from Section~\ref{sec:info-theory}.}

\remove{\subsection{Faulty Oracle with Asymmetric Error}
When $p \neq 1-q$, we need to make the following modifications to our algorithm. Every edge for which the oracle gives a ``yes'' answer, now gets a weight of $\log{\frac{1-p}{q}}$. Every edge for which the oracle gives a ``no'' answer, now gets a weight of $-\log{\frac{1-q}{p}}$. When determining whether an unassigned vertex $v$ belongs to a cluster instead of taking the majority answer we check whether the weighted sum of the query answers according to the above weight assignment is $\geq 0$ or $< 0$. 

Let us define $\beta=\max{(|\log{\frac{1-p}{q}}|, |\log{\frac{1-q}{p}}|)}$. Note that assuming $p, q \geq \frac{1}{n}$, $\beta=O(\log{n})$. Divide every edge by $\beta$ to obtain edge weights that are now in between $[0,1]$. The analysis for the symmetric error case now applies here verbatim with $c=O(\beta\frac{\log{n}}{\cH^2(p\|q)})$ resulting in a slightly higher query complexity of $O(\frac{\beta nk\log{n}}{\cH^2(p\|q)})$.}
 
\remove{ \subsection{Faulty Oracle with Side Information}\label{sec:faultysideub}
 The algorithm for \cc~with side information when the oracle may also return erroneous answers is a direct combination of the algorithms for perfect oracle with side information and faulty oracle with no side information. We assume side information is less accurate than querying because otherwise, querying is not useful. Or in other words $\cH(f_+\|f_-) < \cH(p\|1-p)$.
 
 We use only the queried answers to extract the heaviest subgraph from $G'$, and add that to the list {\sf active}. For the clusters in {\sf active}, we follow the strategy of the algorithm for perfect oracle to recover the underlying clusters with the only difference, where the perfect oracle algorithm issued one query per cluster for a vertex, here we will issue $c\log{n}$ queries and take the majority answer as our final response. Everything else remains the same.
  
 We now analyze the query complexity. Consider a vertex $v$ which needs to be included in a cluster. Let there be $(r-1)$ other vertices from the same cluster as $v$ that have been considered by the algorithm prior to $v$.
\begin{enumerate}
 \item Case 1. $r \in [1,c\log{n}]$, the number of queries is at most $kc\log{n}$. In that case $v$ is added to $G'$ according to the faulty oracle algorithm.
 \item Case 2. $r \in (c\log{n},M^E]$, the number of queries can be $kc\log{n}$. In that case, the cluster that $v$ belongs to is in {\sf active}, but has not grown to size $M^E$. Recall $M^E=O(\frac{\log{n}}{\cH^2(f_+\| f_-)})$. In that case, according to the perfect oracle algorithm, $v$ may need to be queried with each cluster in {\sf active}, and according to the faulty oracle algorithm, there can be at most $c\log{n}$ queries for each cluster in {\sf active}. 
 \item Case 3. $r \in (M^E,|C|]$, the number of queries in that case is $0$ with high probability.
   \end{enumerate}
  
  Hence, the total number of queries per cluster is at most $O(kc^2(\log{n})^2+(M^E-c\log{n})kc\log{n})$. So, over all the clusters, the query complexity is $O(k^2M^Ec\log{n})$.

 \begin{theorem}
 \label{thm:div-new-err}
 Let $V = \sqcup_{i=1}^k \hat{V}_i$ be the ML estimate of  the clustering that can be found with all $\binom{n}2$ queries to the faulty oracle.
 Let $f_+,f_-$ be pmfs.  With side information and faulty oracle with error probability $p$, there exist an algorithm for \cc~with query complexity $O(\min{\{nk, \frac{k^2}{\cH^2(f_+\| f_-)}\}}\frac{\log{n}}{\cH^2(p\|1-p)})$  with unknown  $f_+,f_-$ that recovers the ML estimate $\sqcup_{i=1}^k \hat{V}_i$ exactly with high probability. 
 \end{theorem}}
\remove{ \section{Experiments}
 This paper provides a rigorous theoretical study of clustering with noisy queries. Here we report some experimental finding on real datasets with answers generated from Amazon Mechanical Turk. 
 \begin{table}[htbp]
\centering
\small
\begin{tabular}{ l | | c | c | c  |  c | c | c}
dataset          & $n$       & $k$       & total answers & \#single error &   (\#majority error,\# crowd members)  & ref. \\ \hline\hline
\texttt{landmarks} & 266     & 12        & 35245                  & 2654 & (3696,10)               & \cite{DBLP:journals/corr/GruenheidNKGK15} \\
\texttt{captcha} & 244       & 69        & 29890                  & 241 &(201,7)                & \cite{DBLP:conf/icde/VerroiosG15}     \\
\texttt{gym}     & 94        & 12        & 4371                 & 310 &(449,5)                & \cite{DBLP:conf/icde/VerroiosG15}     \\
\end{tabular}
\caption{Datasets: Number of elements $n$, number of clusters $k$, total number of crowd answers, number of erroneous answers by first crowd user, (number of erroneous answers after taking majority vote, maximum number of crowd members used for each query), origin.} 
\label{tab:dataset}
\vspace{-0.3in}
\end{table}
\begin{itemize}[noitemsep,leftmargin=*]
	\item  \texttt{landmarks}  consists of images of famous landmarks in Paris and Barcelona. Since the images are of different sides and clicked at different angles, it is difficult for humans to label them correctly.
	\item \texttt{captcha} consists of CAPTCHA images, each showing a four-digit number. 
	\item \texttt{gym} contains images of gymnastics athletes, where it is very difficult to distinguish the face of the athlete, e.g. when the athlete is upside down on the uneven bars.
\end{itemize}
We only tested our computationally efficient algorithms. The following table shows the results for the  \texttt{landmarks} and \texttt{gym} datasets to recover the clusters with more than $90\%$ accuracy. The clusters that were not recovered have very small size. We also report the number of random queries required to generate per node average queries in a cluster (only those recovered) to be just $2$. To generate the actual clustering using random queries, the total number of queries used will be much higher.
\begin{table}[htbp]
\centering
\small
\begin{tabular}{ l | | c | c | c  |  c | c | c}
dataset          & \# adaptive queries       & \#nonadaptive queries      & \#clusters recovered &  \#random edges queried \\ \hline\hline
\texttt{landmarks} & 3642     & 8616        & 10                  & 7020 \\
\texttt{gym}      & 1116        & 3084        & 8                 & 2200  \\
\end{tabular}
\caption{Comparison between querying strategies} 
\label{tab:dataset}
\vspace{-0.3in}
\end{table}}

\bibliographystyle{abbrv}

{
\bibliography{bibfile}
}


\appendix
\section{Algorithms}\label{appen:faultyub}
\subsection{Proofs of the claims in Lemma~\ref{lemma:mlG'}}
\begin{proof}[Proof of Claim~\ref{claim:0}]
For an $i: |V'_i| \ge c' \log n,$ we have
\begin{align*}
\avg \sum_{s, t \in V'_i, s<t}\omega_{s,t} = \binom{|V'_i|}{2}((1-p)-p) = (1-2p)\binom{|V'_i|}{2}.
\end{align*}
Since $\omega_{s,t}$ are independent binary random variables, using the Hoeffding's inequality (Lemma \ref{lem:hoef1}),
\begin{align*}
\Pr\Big( \sum_{s, t \in V'_i, s<t}\omega_{s,t} \le \avg \sum_{s, t \in V'_i, s<t}\omega_{s,t} - u \Big) \le e^{-\frac{ u^2 }{2\binom{|V'_i|}{2}}}.
\end{align*}
Hence,
\begin{align*}
\Pr\Big( \sum_{s, t \in V'_i, s<t}\omega_{s,t} >(1-\delta) \avg \sum_{s, t \in V'_i, s<t}\omega_{s,t} \Big) \ge 1 - e^{-\frac{ \delta^2(1-2p)^2 \binom{|V'_i|}{2} }{2}}.
\end{align*}
Therefore with high probability (here the success probability is even $> 1-\frac{1}{n^{\log{n}}}$)
\begin{align*}
& \sum_{s, t \in V'_i, s<t}\omega_{s,t} > (1-\delta) (1-2p)\binom{|V'_i|}{2} \\
& \ge  (1-\delta) (1-2p)\binom{c' \log n}2 >
\frac{c'^2}{3}(1-2p) \log^2 n,
\end{align*} for an appropriately chosen $\delta$ (say $\delta=\frac{1}{4}$).

So, when processing $G'$, the algorithm must return a set $S$ such that $|S| \ge c' \sqrt{\frac{2(1-2p)}{3}} \log n=c''\log{n}$ (define $c''=c' \sqrt{\frac{2(1-2p)}{3}}$) with probability $> 1-\frac{1}{n^{\log{n}}}$ -  since otherwise $$\sum_{i, j \in S, i < j}\omega_{i,j} < \binom{c' \sqrt{\frac{2(1-2p)}{3}} \log n}{2} < \frac{c'^2}{3}(1-2p) \log^2 n.$$

Now let $S \nsubseteq V_i$ for any $i$. Then $S$ must have intersection with at least $2$ clusters. Let $V_i \cap S = C_i$ and
let $j^\ast = \arg \min_{i: C_i \neq \emptyset} |C_i|$. We claim that,
\begin{equation}\label{eq:reduc}
\sum_{i, j \in S, i < j}\omega_{i,j}  < \sum_{i, j \in S \setminus C_{j^\ast}, i < j}\omega_{i,j},
\end{equation}
with high probability.
Condition \eqref{eq:reduc} is equivalent to,
\[
\sum_{i, j \in  C_{j^\ast}, i < j}\omega_{i,j} + \sum_{i \in C_{j^\ast}, j \in S \setminus C_{j^\ast}} \omega_{i,j} <0. \tag{I}
\]
However this is true because,
\begin{enumerate}
\item $
\avg \Big(\sum_{i, j \in  C_{j^\ast}, i < j}\omega_{i,j} \Big) = (1-2p) \binom{|C_{j^\ast}|}{2}$ and 
$\avg\Big(\sum_{i \in C_{j^\ast}, j \in S \setminus C_{j^\ast}} \omega_{i,j} \Big) = - (1-2p)|C_{j^\ast}|\cdot |S\setminus C_{j^\ast}|.
$ Note that $|S\setminus C_{j^\ast}| \geq |C_{j^\ast}|$. Hence the expected value of the L.H.S. of (I) is negative.

\item As long as $|C_{j^\ast}| \ge \frac{12\sqrt{\log{n}}}{(1-2p)}$ we have, from Hoeffding's inequality,
\begin{align*}
&\Pr\Big(\sum_{i, j \in  C_{j^\ast}, i < j}\omega_{i,j}  \ge (1+\nu) (1-2p) \binom{|C_{j^\ast}|}{2}\Big) \\
&\le e^{-\frac{\nu^2(1-2p)^2\binom{|C_{j^\ast}|}{2}}{2}} = n^{-36\nu^2}.
\end{align*}
While at the same time, 
\begin{align*}
& \Pr\Big( \sum_{i \in C_{j^\ast}, j \in S \setminus C_{j^\ast}} \omega_{i,j}  \ge - (1-\nu) (1-2p)|C_{j^\ast}|\cdot |S\setminus C_{j^\ast}|\Big) \\
& \le e^{-\frac{\nu^2 (1-2p)^2 |C_{j^\ast}|\cdot |S\setminus C_{j^\ast}|}{2}} = n^{-72\nu^2}.
\end{align*}
Setting $\nu=\frac{1}{4}$ (say), of course with high probability (probability at least $1-\frac{2}{n^{2.25}}$)
 $$
\sum_{i, j \in  C_{j^\ast}, i < j}\omega_{i,j} + \sum_{i \in C_{j^\ast}, j \in S \setminus C_{j^\ast}} \omega_{i,j} <0.
$$
\item When  $|C_{j^\ast}| < \frac{12\sqrt{\log n}}{(1-2p)}$, let $|C_{j^\ast}|=x$. We have,
$$
\sum_{i, j \in  C_{j^\ast}, i < j}\omega_{i,j} \le \binom{|C_{j^\ast}|}{2} \le \frac{x^2}{2}.
$$
While at the same time, 
\begin{align*}
& \Pr\Big( \sum_{i \in C_{j^\ast}, j \in S \setminus C_{j^\ast}} \omega_{i,j}  \ge -(1-\nu) (1-2p)|C_{j^\ast}|\cdot |S\setminus C_{j^\ast}|\Big) \\
&\le e^{-\frac{\nu^2 (1-2p)^2 |C_{j^\ast}|\cdot |S\setminus C_{j^\ast}|}{2}} \leq e^{-\frac{\nu^2 (1-2p)^2 x (|S|-x)}{2}}
\end{align*}
If $x \geq \sqrt{\frac{3}{2(1-2p)}}$, then $x (|S|-x)\geq \frac{2x|S|}{3}=\frac{2c'\log{n}}{3} \geq \frac{64\log{n}}{(1-2p)^2}$, where the second inequality followed since $x < \frac{S}{3}$.
Hence, in this case, again setting $\nu=\frac{1}{4}$ and noting the value of $S$ and the fact $|C_{j^\ast}| < \frac{12\sqrt{\log n}}{(1-2p)}$, with probability at least $1-\frac{1}{n^2}$,
$$
\sum_{i, j \in  C_{j^\ast}, i < j}\omega_{i,j} + \sum_{i \in C_{j^\ast}, j \in S \setminus C_{j^\ast}} \omega_{i,j} <0.
$$

If $x < \sqrt{\frac{3}{2(1-2p)}}$, then $(S-x) > \frac{48x\log{n}}{(1-2p)}$. Hence $E[\sum_{i \in C_{j^\ast}, j \in S \setminus C_{j^\ast}} \omega_{i,j}] \leq -(1-2p)x(S-x)<- 48\log{n}\frac{x^2}{2}$.

Hence by Hoeffding's inequality,
\[\Pr\Big( \sum_{i \in C_{j^\ast}, j \in S \setminus C_{j^\ast}} \omega_{i,j}  \ge -\frac{x^2}{2}\Big) \leq e^{-\frac{2*47*47x^4\log^2{n}}{|C_{j^\ast}||S \setminus C_{j^\ast}|}}\leq e^{-\frac{2*47*47x^3\log^2{n}}{|S|}} <<\frac{1}{n^2}\]

\end{enumerate}
Hence \eqref{eq:reduc} is true with probability at least $1-\frac{4}{n^2}$. But then the algorithm would not return $S$, but will return $S \setminus C_{j^\ast}$. Hence, we have run into a contradiction. This means $S \subseteq V_i$ for some $V_i$. 
\end{proof}

\vspace{0.2in}
\begin{proof}[Proof of Claim~\ref{claim:1}]
From Claim~\ref{claim:0} with probability at least $1-\frac{4}{n^2}$, $S \subseteq V_i$ and $$\sum_{i, j \in S, i < j}\omega_{i,j} \geq \frac{c'^2}{3}(1-2p) \log^2 n.$$
Suppose if possible $|S|=x < c\log{n}=\frac{c'\log{n}}{6}$. Then
$$E[\sum_{i, j \in S, i < j}\omega_{i,j} ]<\frac{x^2}{2}(1-2p)$$

Hence, by the Hoeffding's inequality
\begin{align*}
\Pr\Big(\sum_{i, j \in S, i < j}\omega_{i,j} \geq \frac{c'^2}{3}(1-2p) \log^2 n\Big) 
&\leq e^{-\frac{(1-2p)^2\Big(\frac{c'^2}{3}\log^2{n}-\frac{x^2}{2}\Big)^2}{x^2}}\\
&\leq e^{-\frac{(1-2p)^2\Big(\frac{c'^2}{4}\log^2{n}\Big)^2}{x^2}}<<\frac{1}{n^2}
\end{align*}
Therefore, $|S| \geq c\log{n}$ with probability at least $1-\frac{5}{n^2}$.

\end{proof}

\vspace{0.2in}
\begin{proof}[Proof of Claim~\ref{claim:2}]
If the algorithm returns a set $S$ with $|S| > c\log{n}$ then $S$ must have intersection with at least $2$ clusters in $V$. Now following the same argument as in Claim \ref{claim:0} to establish Eq. \eqref{eq:reduc}, we arrive to a contradiction, and $S$ cannot be returned.
\end{proof}

\remove{In this section, we first develop an information theoretically optimal algorithm for clustering with faulty oracle within an $O(\log{n})$ factor of the optimal query complexity. Next, we show how the ideas can be extended to make it computationally efficient. We consider both the adaptive and non-adaptive versions. All the missing proofs are presented here.

\subsection{Information-Theoretic Optimal Algorithm}
\label{appenc:info-theory}
We restate the algorithm.

Let $V = \sqcup_{i=1}^k {V}_i$ be the true clustering and $V=\sqcup_{i=1}^k \hat{V}_i$ be the maximum likelihood (ML) estimate of  the clustering that can be found when all $\binom{n}2$ queries have been made to the faulty oracle. Our first result obtains a query complexity upper bound within an $O(\log{n})$ factor of the information theoretic optimal algorithm. The algorithm runs in quasi-polynomial time, and we show this is the optimal possible assuming the famous {\em planted clique} hardness. In Section~\ref{sec:efficient}, we develop a computationally efficient algorithm for clustering with noisy oracle.

\begin{theorem*}[restated
\ref{theorem:cc-error}]
There exists an algorithm with query complexity $O(\frac{nk\log{n}}{(1-2p)^2})$ for \cc~that returns the ML estimate with high probability when query answers are incorrect with probability $p < \frac{1}{2}$. Moreover, the algorithm returns all true clusters of $V$ of size at least $\frac{C\log{n}}{(1-2p)^2}$ for a suitable constant $C$ with probability $1-o_{n}(1)$. 
\end{theorem*}
\begin{remark}
Assuming $p=\frac{1}{2}-\lambda$, as $\lambda \to 0$, $\Delta(p\|1-p)=O(\lambda^2)=O((2p-1)^2)$. Thus our upper bound is within a $\log{n}$ factor of the information theoretic optimum in this range.
\end{remark}

\paragraph*{Algorithm. 1} The algorithm that leads us to the above theorem has several phases. The main idea is as follows. We start by selecting a small subset of vertices, and extract the heaviest weight subgraph in it by suitably defining edge weight. If the subgraph extracted has $\sim \log{n}$ size, we are confident that it is part of an original cluster. We then grow it completely, where a decision to add a new vertex to it happens by considering the query answers involving these different $\log{n}$ vertices and the new vertex. Otherwise, if the subgraph extracted has size less than $\log{n}$, we select more vertices. We note that we would never have to select more than $O(k\log{n})$ vertices, because by pigeonhole principle, this will ensure that we have selected at least $\sim \log{n}$ members from a cluster, and the subgraph detected will have size at least $\log{n}$. This helps us to bound the query complexity. We note that our algorithm is completely deterministic.

\vspace{0.1in}
\noindent{\it Phase 1: Selecting a small subgraph.}
Let $c=\frac{16}{(1-2p)^2}$. 
\vspace{-0.1in}
\begin{enumerate}[noitemsep]
\item Select $c\log{n}$ vertices arbitrarily from $V$. Let $V'$ be the set of selected vertices. Create a subgraph $G'=(V',E')$ by querying for every $(u,v) \in V' \times V'$ and assigning a weight of $\omega(u,v)=+1$ if the query answer is ``yes'' and $\omega(u,v)=-1$ otherwise . 
\item Extract the heaviest weight subgraph $S$ in $G'$. If $|S| \geq c\log{n}$, move to Phase 2.
\item Else we have $|S|< c\log{n}$. Select a new vertex $u$, add it to $V'$, and query $u$ with every vertex in $V'\setminus \{u\}$. Move to step (2).
\end{enumerate}

\noindent{\it Phase 2: Creating an Active List of Clusters.} Initialize an empty list called $\sf{active}$ when Phase 2 is executed for the first time.
\vspace{-0.1in}
\begin{enumerate}[noitemsep]
\item Add $S$ to the list $\sf{active}$.
\item Update $G'$ by removing $S$ from $V'$ and every edge incident on $S$. For every vertex $z \in V'$, if $\sum_{u \in S} \omega{(z,u)} > 0$, include $z$ in $S$ and remove $z$ from $G'$ with all edges incident to it. 
\item Extract the heaviest weight subgraph $S$ in $G'$. If $|S| \geq c\log{n}$, Move to step(1). Else move to Phase $3$.
\end{enumerate}

\noindent{\it Phase 3: Growing the Active Clusters.} We now have a set of clusters in ${\sf active}$. 
\vspace{-0.1in}
\begin{enumerate}[noitemsep]
\item Select an unassigned vertex $v$ not in $V'$ (that is previously unexplored), and for every cluster $\cC \in \sf{active}$, pick $c\log{n}$ distinct vertices $u_1, u_2,...., u_l$ in the cluster and query $v$ with them. If the majority of these answers are ``yes'', then include $v$ in $\cC$. 
\item Else we have for every $\cC \in \sf{active}$ the majority answer is ``no'' for $v$.  Include $v \in V'$ and query $v$ with every node in $V' \setminus {v}$ and update $E'$ accordingly. Extract the heaviest weight subgraph $S$ from $G'$ and if its size is at least $c\log{n}$ move to Phase 2 step (1). Else move to Phase 3 step (1) by selecting another unexplored vertex.
\end{enumerate}

\noindent{\it Phase 4: Maximum Likelihood (ML) Estimate.}
\vspace{-0.1in}
\begin{enumerate}[noitemsep]
\item When there is no new vertex to query in Phase $3$, extract the maximum likelihood clustering of $G'$ and return them along with the active clusters, where the ML estimation is defined as, 
\begin{align}
\label{eq:ml1}
\max_{S_\ell, \ell = 1, \dots: V = \sqcup_{\ell=1} S_\ell}\sum_{\ell} \sum_{i, j \in S_\ell, i \ne j}\omega_{i,j},~~\text{(see Lemma~\ref{lemma:ml})}
\end{align}
\end{enumerate}


\paragraph*{Analysis.}
To establish the correctness of the algorithm, we show the following. Suppose all $\binom{n}{2}$ queries on $V \times V$  have been made. If the ML estimate of the clustering with these $\binom{n}{2}$  answers is same as the true clustering of $V$ that is, $\sqcup_{i=1}^k {V}_i \equiv \sqcup_{i=1}^k \hat{V}_i$ then the algorithm for faulty oracle finds the true clustering with high probability. 

Let without loss of generality, $|\hat{V}_1| \geq ...\geq |\hat{V}_l| \geq 6c\log{n} > |\hat{V}_{l+1}| \geq...\geq |\hat{V}_k|$. We will show that Phase $1$-$3$ recover $\hat{V}_1,\hat{V}_2 ... \hat{V}_l$ with probability at least $1-\frac{1}{n}$. The remaining clusters are recovered in Phase $4$.

\remove{Note that,
$\cH^2(p \|1-p) =(\sqrt{p}-\sqrt{1-p})^2 \le \frac{(\sqrt{p}-\sqrt{1-p})^2}{2p}$, as $p \le 1/2$, and $(1-2p)^2 = (1-p -p)^2 = (\sqrt{p}-\sqrt{1-p})^2(\sqrt{p}+\sqrt{1-p})^2 \ge  (\sqrt{p}-\sqrt{1-p})^2(p+1-p)^2 =  (\sqrt{p}-\sqrt{1-p})^2$. }

A subcluster is a subset of nodes in some cluster. Lemma \ref{lemma:mlG'} shows that any set $S$ that is included in ${\sf active}$ in Phase $2$ of the algorithm is a subcluster of $V$. This establishes that all clusters in ${\sf active}$ at any time are subclusters of some original cluster in $V$. Next, Lemma \ref{lemma:vertex} shows that elements that are added to a cluster in ${\sf active}$ are added correctly, and no two clusters in ${\sf active}$ can be merged. Therefore, clusters obtained from ${\sf active}$ are the true clusters. Finally, the remaining of the clusters can be retrieved from $G'$ by computing a ML estimate on $G'$ in Phase $4$, leading to Theorem \ref{lemma:correct}.

We will use the following version of the Hoeffding's inequality heavily in our proof. We state it here for the sake of completeness.

Hoeffding's  inequality for large deviation of sums  of bounded independent random variables is well known \cite{hoeffding1963probability}[Thm. 2].
\begin{lemma}[Hoeffding]\label{lem:hoef1}
If $X_1, \dots, X_n$ are  independent random variables   and $a_i\le X_i\le b_i$ for all $i\in [n].$ Then
$$
\Pr(|\frac1n\sum_{i=1}^n (X_i - \avg X_i) | \ge t) \le 2 \exp(-\frac{2n^2t^2}{\sum_{i=1}^n (b_i-a_i)^2}). 
$$
\end{lemma}
This inequality can be used when the random variables are independently sampled with replacement from a finite sample space.  
However due to a result in the same paper  \cite{hoeffding1963probability}[Thm. 4], this inequality also holds when the random variables are sampled
without replacement from a finite population.
\begin{lemma}[Hoeffding]\label{lem:hoef2}
If $X_1, \dots, X_n$ are  random variables  sampled without replacement from a finite set $\cX \subset \reals$, and $a\le x\le b$ for all $x\in \cX.$ Then
$$
\Pr(|\frac1n\sum_{i=1}^n (X_i - \avg X_i) | \ge t) \le 2 \exp(-\frac{2nt^2}{(b-a)^2}). 
$$
\end{lemma}

\begin{lemma*}[restated~\ref{lemma:mlG'}]
Let $c'=6c=\frac{96}{(2p-1)^2}$. 
Algorithm $1$ in Phase $1$ and $3$ returns a subcluster of $V$ of size at least $c\log{n}$ with high probability if $G'$ contains a subcluster of $V$ of size at least $c'\log{n}$. Moreover, it does not return any set of vertices of size at least $c\log{n}$ if $G'$ does not contain a subcluster of $V$ of size at least $c\log{n}$.
\end{lemma*}
\begin{proof}
Let $V'=\bigcup V'_i$, $i\in [1,k]$,  $V'_i \cap V'_j =\emptyset$ for $i \neq j$, and $V'_i \subseteq V_i$. Suppose without loss of generality $|V'_1| \geq |V'_2| \geq ....\geq |V'_k|$.
The lemma is proved via a series of claims.
\begin{claim}
\label{claim:0}
Let $|V'_1| \geq c'\log{n}$. Then a set $S \subseteq V_i$ for some $i \in [1,k]$ will be returned with high probability when $G'$ is processed.
\end{claim}
\begin{proof}
For an $i: |V'_i| \ge c' \log n,$ we have
\begin{align*}
\avg \sum_{s, t \in V'_i, s<t}\omega_{s,t} = \binom{|V'_i|}{2}((1-p)-p) = (1-2p)\binom{|V'_i|}{2}.
\end{align*}
Since $\omega_{s,t}$ are independent binary random variables, using the Hoeffding's inequality (Lemma \ref{lem:hoef1}),
\begin{align*}
\Pr\Big( \sum_{s, t \in V'_i, s<t}\omega_{s,t} \le \avg \sum_{s, t \in V'_i, s<t}\omega_{s,t} - u \Big) \le e^{-\frac{ u^2 }{2\binom{|V'_i|}{2}}}.
\end{align*}
Hence,
\begin{align*}
\Pr\Big( \sum_{s, t \in V'_i, s<t}\omega_{s,t} >(1-\delta) \avg \sum_{s, t \in V'_i, s<t}\omega_{s,t} \Big) \ge 1 - e^{-\frac{ \delta^2(1-2p)^2 \binom{|V'_i|}{2} }{2}}.
\end{align*}
Therefore with high probability (here the success probability is even $> 1-\frac{1}{n^{\log{n}}}$)
\begin{align*}
& \sum_{s, t \in V'_i, s<t}\omega_{s,t} > (1-\delta) (1-2p)\binom{|V'_i|}{2} \\
& \ge  (1-\delta) (1-2p)\binom{c' \log n}2 >
\frac{c'^2}{3}(1-2p) \log^2 n,
\end{align*} for an appropriately chosen $\delta$ (say $\delta=\frac{1}{4}$).

So, when processing $G'$, the algorithm must return a set $S$ such that $|S| \ge c' \sqrt{\frac{2(1-2p)}{3}} \log n=c''\log{n}$ (define $c''=c' \sqrt{\frac{2(1-2p)}{3}}$) with probability $> 1-\frac{1}{n^{\log{n}}}$ -  since otherwise $$\sum_{i, j \in S, i < j}\omega_{i,j} < \binom{c' \sqrt{\frac{2(1-2p)}{3}} \log n}{2} < \frac{c'^2}{3}(1-2p) \log^2 n.$$

Now let $S \nsubseteq V_i$ for any $i$. Then $S$ must have intersection with at least $2$ clusters. Let $V_i \cap S = C_i$ and
let $j^\ast = \arg \min_{i: C_i \neq \emptyset} |C_i|$. We claim that,
\begin{equation}\label{eq:reduc}
\sum_{i, j \in S, i < j}\omega_{i,j}  < \sum_{i, j \in S \setminus C_{j^\ast}, i < j}\omega_{i,j},
\end{equation}
with high probability.
Condition \eqref{eq:reduc} is equivalent to,
\[
\sum_{i, j \in  C_{j^\ast}, i < j}\omega_{i,j} + \sum_{i \in C_{j^\ast}, j \in S \setminus C_{j^\ast}} \omega_{i,j} <0. \tag{I}
\]
However this is true because,
\begin{enumerate}
\item $
\avg \Big(\sum_{i, j \in  C_{j^\ast}, i < j}\omega_{i,j} \Big) = (1-2p) \binom{|C_{j^\ast}|}{2}$ and 
$\avg\Big(\sum_{i \in C_{j^\ast}, j \in S \setminus C_{j^\ast}} \omega_{i,j} \Big) = - (1-2p)|C_{j^\ast}|\cdot |S\setminus C_{j^\ast}|.
$ Note that $|S\setminus C_{j^\ast}| \geq |C_{j^\ast}|$. Hence the expected value of the L.H.S. of (I) is negative.

\item As long as $|C_{j^\ast}| \ge \frac{12\sqrt{\log{n}}}{(1-2p)}$ we have, from Hoeffding's inequality,
\begin{align*}
&\Pr\Big(\sum_{i, j \in  C_{j^\ast}, i < j}\omega_{i,j}  \ge (1+\nu) (1-2p) \binom{|C_{j^\ast}|}{2}\Big) \\
&\le e^{-\frac{\nu^2(1-2p)^2\binom{|C_{j^\ast}|}{2}}{2}} = n^{-36\nu^2}.
\end{align*}
While at the same time, 
\begin{align*}
& \Pr\Big( \sum_{i \in C_{j^\ast}, j \in S \setminus C_{j^\ast}} \omega_{i,j}  \ge - (1-\nu) (1-2p)|C_{j^\ast}|\cdot |S\setminus C_{j^\ast}|\Big) \\
& \le e^{-\frac{\nu^2 (1-2p)^2 |C_{j^\ast}|\cdot |S\setminus C_{j^\ast}|}{2}} = n^{-72\nu^2}.
\end{align*}
Setting $\nu=\frac{1}{4}$ (say), of course with high probability (probability at least $1-\frac{2}{n^{2.25}}$)
 $$
\sum_{i, j \in  C_{j^\ast}, i < j}\omega_{i,j} + \sum_{i \in C_{j^\ast}, j \in S \setminus C_{j^\ast}} \omega_{i,j} <0.
$$
\item When  $|C_{j^\ast}| < \frac{12\sqrt{\log n}}{(1-2p)}$, let $|C_{j^\ast}|=x$. We have,
$$
\sum_{i, j \in  C_{j^\ast}, i < j}\omega_{i,j} \le \binom{|C_{j^\ast}|}{2} \le \frac{x^2}{2}.
$$
While at the same time, 
\begin{align*}
& \Pr\Big( \sum_{i \in C_{j^\ast}, j \in S \setminus C_{j^\ast}} \omega_{i,j}  \ge -(1-\nu) (1-2p)|C_{j^\ast}|\cdot |S\setminus C_{j^\ast}|\Big) \\
&\le e^{-\frac{\nu^2 (1-2p)^2 |C_{j^\ast}|\cdot |S\setminus C_{j^\ast}|}{2}} \leq e^{-\frac{\nu^2 (1-2p)^2 x (|S|-x)}{2}}
\end{align*}
If $x \geq \sqrt{\frac{3}{2(1-2p)}}$, then $x (|S|-x)\geq \frac{2x|S|}{3}=\frac{2c'\log{n}}{3} \geq \frac{64\log{n}}{(1-2p)^2}$, where the second inequality followed since $x < \frac{S}{3}$.
Hence, in this case, again setting $\nu=\frac{1}{4}$ and noting the value of $S$ and the fact $|C_{j^\ast}| < \frac{12\sqrt{\log n}}{(1-2p)}$, with probability at least $1-\frac{1}{n^2}$,
$$
\sum_{i, j \in  C_{j^\ast}, i < j}\omega_{i,j} + \sum_{i \in C_{j^\ast}, j \in S \setminus C_{j^\ast}} \omega_{i,j} <0.
$$

If $x < \sqrt{\frac{3}{2(1-2p)}}$, then $(S-x) > \frac{48x\log{n}}{(1-2p)}$. Hence $E[\sum_{i \in C_{j^\ast}, j \in S \setminus C_{j^\ast}} \omega_{i,j}] \leq -(1-2p)x(S-x)<- 48\log{n}\frac{x^2}{2}$.

Hence by Hoeffding's inequality,
\[\Pr\Big( \sum_{i \in C_{j^\ast}, j \in S \setminus C_{j^\ast}} \omega_{i,j}  \ge -\frac{x^2}{2}\Big) \leq e^{-\frac{2*47*47x^4\log^2{n}}{|C_{j^\ast}||S \setminus C_{j^\ast}|}}\leq e^{-\frac{2*47*47x^3\log^2{n}}{|S|}} <<\frac{1}{n^2}\]

\end{enumerate}
Hence \eqref{eq:reduc} is true with probability at least $1-\frac{4}{n^2}$. But then the algorithm would not return $S$, but will return $S \setminus C_{j^\ast}$. Hence, we have run into a contradiction. This means $S \subseteq V_i$ for some $V_i$. 
\end{proof}

\begin{claim}
\label{claim:1}
Let $|V'_1| \geq c'\log{n}$. Then a set $S \subseteq V_i$ for some $i \in [1,k]$ with size at least $c\log{n}$ will be returned with high probability when $G'$ is processed.
\end{claim}
\begin{proof}
From Claim~\ref{claim:0} with probability at least $1-\frac{4}{n^2}$, $S \subseteq V_i$ and $$\sum_{i, j \in S, i < j}\omega_{i,j} \geq \frac{c'^2}{3}(1-2p) \log^2 n.$$
Suppose if possible $|S|=x < c\log{n}=\frac{c'\log{n}}{6}$. Then
$$E[\sum_{i, j \in S, i < j}\omega_{i,j} ]<\frac{x^2}{2}(1-2p)$$

Hence, by the Hoeffding's inequality
\begin{align*}
\Pr\Big(\sum_{i, j \in S, i < j}\omega_{i,j} \geq \frac{c'^2}{3}(1-2p) \log^2 n\Big) 
&\leq e^{-\frac{(1-2p)^2\Big(\frac{c'^2}{3}\log^2{n}-\frac{x^2}{2}\Big)^2}{x^2}}\\
&\leq e^{-\frac{(1-2p)^2\Big(\frac{c'^2}{4}\log^2{n}\Big)^2}{x^2}}<<\frac{1}{n^2}
\end{align*}
Therefore, $|S| \geq c\log{n}$ with probability at least $1-\frac{5}{n^2}$.

\end{proof}


\begin{claim}
\label{claim:2}
If $|V'_1| < c\log{n}$. then no subset of size $> c\log{n}$ will be returned by the algorithm for faulty oracle when processing $G'$ with high probability. 
\end{claim}
\begin{proof}
If the algorithm returns a set $S$ with $|S| > c\log{n}$ then $S$ must have intersection with at least $2$ clusters in $V$. Now following the same argument as in Claim \ref{claim:0} to establish Eq. \eqref{eq:reduc}, we arrive to a contradiction, and $S$ cannot be returned.
\end{proof}

Since, the algorithm attempts to extract a heaviest weight subgraph at most $n$ times, and each time the probability of failure is at most $O(\frac{1}{n^2})$. By union bound, all the calls succeed with probability at least $1-O(\frac{1}{n})$. This establishes the lemma.
\end{proof}

\begin{lemma*}[restated~\ref{lemma:vertex}]
 The list ${\sf active}$ contains all the true clusters of $V$ of size $\geq c'\log{n}$ at the end of the algorithm with high probability.
\end{lemma*}
\begin{proof}
From Lemma \ref{lemma:mlG'}, any cluster that is added to ${\sf active}$ in Phase $2$ is a subset of some original cluster in $V$ with high probability, and has size at least $c\log{n}$. Moreover, whenever $G'$ contains a subcluster of $V$ of size at least $c'\log{n}$, it is retrieved by the algorithm and added to ${\sf active}$.

When a vertex $v$ is added to a cluster $\cC$ in ${\sf active}$, we have $|\calC| \geq c\log{n}$ at that time, and there exist $l=c\log{n}$ distinct members of $\calC$, say, $u_1,u_2,..,u_l$ such that majority of the queries of $v$ with these vertices returned $+1$. Let if possible $v \not\in \cC$. Then the expected number of queries among the $l$ queries that had an answer ``yes'' (+1) is $lp$. We now use the following version of the Chernoff bound.

\begin{lemma}[Chernoff Bound]
\label{lemma:chernoff}
Let $X_1, X_2,...,X_n$ be independent binary random variables, and $X=\sum_{i=1}^{n}X_i$ with $E[X]=\mu$. Then for any $\epsilon > 0$
\[\Pr[X \geq (1+\epsilon) \mu] \leq \exp\Big(-\frac{\epsilon^2}{2+\epsilon}\mu\Big)\]
and, 
\[\Pr[X \leq (1-\epsilon)\mu] \leq \exp\Big(-\frac{\epsilon^2}{2}\mu\Big)\]
\end{lemma}

Hence, by the application of the Chernoff bound, $\Pr(v \text{ added to } \cC \mid v \not\in \cC) \leq e^{-lp\frac{(\frac{1}{2p}-1)^2}{2+(\frac{1}{2p}-1)}}\leq \frac{1}{n^3}$.

%

On the other hand, if there exists a cluster $\calC \in {\sf active}$ such that $v \in \calC$, then while growing $\cC$, $v$ will be added to $\cC$ (either $v$ already belongs to $G'$, or is a newly considered vertex). This again follows by the Chernoff bound. Here the expected number of queries to be answered ``yes'' is $(1-p)l$. Hence the probability that less than $\frac{l}{2}$ queries will be answered yes is $\Pr(v \text{ not included in } \calC \mid v \in \calC) \leq \text{exp}(-c\log{n}(1-p)\frac{(1-2p)^2}{8(1-p)^2})=\text{exp}(-\frac{2}{(1-p)}\log{n})\leq \frac{1}{n^2}$. Therefore, for all $v$, if $v$ is included in a cluster in ${\sf active}$, the assignment is correct with probability at least $1-\frac{1}{n}$. Also, the assignment happens as soon as such a cluster is formed in ${\sf active}$ and $v$ is explored (whichever happens first).

Furthermore, two clusters in ${\sf active}$ cannot be merged. Suppose, if possible there are two clusters $\calC_1$ and $\calC_2$ which ought to be subset of the same cluster in $V$. Let without loss of generality $\calC_2$ is added later in ${\sf active}$. Consider the first vertex $v \in \calC_2$ that is considered by our algorithm. If $\calC_1$ is already there in ${\sf active}$ at that time, then with high probability $v$ will be added to $\calC_1$ in Phase $3$. Therefore, $\calC_1$ must have been added to ${\sf active}$ after $v$ has been considered by our algorithm and added to $G'$. Now, at the time $\calC_1$ is added to $A$ in Phase $2$, $v \in V'$, and again $v$ will be added to $\calC_1$ with high probability in Phase $2$--thereby giving a contradiction.

This completes the proof of the lemma.
\end{proof}

\begin{theorem}\label{lemma:correct}
If the ML estimate of the clustering of $V$ with all possible $\binom{n}{2}$ queries return the true clustering, then the algorithm for faulty oracle returns the true clusters with high probability. Moreover, it returns all the true clusters of $V$ of size at least $c'\log{n}$ with high probability.
\end{theorem}
\begin{proof}
From Lemma \ref{lemma:mlG'} and Lemma \ref{lemma:vertex}, ${\sf active}$ contains all the true clusters of $V$ of size at least $c'\log{n}$ with high probability. Any vertex that is not included in the clusters in ${\sf active}$ at the end of the algorithm, are in $G'$. Also $G'$ contains all possible pairwise queries among them. Clearly, then the ML estimate of $G'$ will be the true ML estimate of the clustering restricted to these clusters.
\end{proof}

\begin{lemma*}[restated~\ref{lemma:query}]
The query complexity of the algorithm for faulty oracle is $O\Big(\frac{nk\log{n}}{(2p-1)^2}\Big)$.
\end{lemma*}
\begin{proof}
Let there be $k'$ clusters in ${\sf active}$ when $v$ is considered by the algorithm. $k'$ could be $0$ in which case $v$ is considered in Phase $1$, else $v$ is considered in Phase $3$. Therefore, $v$ is queried with at most $ck'\log{n}$ members, $c\log{n}$ each from the $k'$ ${\sf active}$ clusters. If $v$ is not included in one of these clusters, then $v$ is added to $G'$ and queried will all vertices $V'$ in $G'$. We have seen in the correctness proof (Lemma \ref{lemma:correct}) that if $G'$ contains at least $c'\log{n}$ vertices from any original cluster, then ML estimate on $G'$ retrieves those vertices as a cluster with high probability. Hence, when $v$ is queried with the vertices in $G'$, $|V'|\leq (k-k')c'\log{n}$. Thus the total number of queries made when the algorithm considers $v$ is at most $c'k\log{n}$, where $c'=6c=\frac{96}{(2p-1)^2}$ when the error probability is $p$. This gives the query complexity of the algorithm considering all the vertices, which matches the lower bound computed in Section \ref{sec:error-lc} within an $O(\log{n})$ factor. 
\end{proof}

Now combining all these we get the statement of Theorem \ref{theorem:cc-error}.

\paragraph*{Running Time \& Connection to Planted Clique}
\label{appen:hardness}
While the algorithm described above is very close to information theoretic optimal, the running time is not polynomial. Moreover, it is unlikely that the algorithm can be made efficient. 

A crucial step of our algorithm is to find a large cluster of size at least $O(\frac{\log{n}}{(2p-1)^2})$, which can of course be computed in $O(n^{\frac{\log{n}}{(2p-1)^2}})$ time. However, since size of $G'$ is bounded by $O(\frac{k\log{n}}{(2p-1)^2})$, the running time to compute such a heaviest weight subgraph is $O([\frac{k\log{n}}{(2p-1)^2}]^{\frac{\log{n}}{(2p-1)^2}})$. This running time is unlikely to be improved to a polynomial. This follows from the planted clique conjecture.

\begin{conjecture}[Planted Clique Hardness]
Given an Erd\H{o}s-R\'{e}nyi random graph $G(n,p)$, with $p=\frac{1}{2}$, the planted clique conjecture states that if we plant in $G(n,p)$ a clique of size $t$ where $t=[\Omega(\log{n}), o(\sqrt{n})]$, then there exists no polynomial time algorithm to recover the largest clique in this planted model.
\end{conjecture}

{\bf Reduction.} Given such a graph with a planted clique of size $t=\Theta(\log{n})$, we can construct a new graph $H$ by randomly deleting each edge with probability $\frac{1}{3}$. Then in $H$, there is one cluster of size $t$ where edge error probability is $\frac{1}{3}$ and the remaining clusters are singleton with inter-cluster edge error probability being $\frac{1}{2}*\frac{2}{3}=\frac{1}{3}$. So, if we can detect the heaviest weight subgraph in polynomial time in the faulty oracle algorithm, then there will be a polynomial time algorithm for the planted clique problem.

In fact, the reduction shows that if it is computationally hard to detect a planted clique of size $t$ for some value of $t >0$, then it is also computationally hard to detect a cluster of size $\leq t$ in the faulty oracle model. Note that $t=o(\sqrt{n})$. In the next section, we propose a computationally efficient algorithm which recovers all clusters of size at least $\frac{\min{(k,\sqrt{n})}\log{n}}{(1-2p)^2}$ with high probability, which is the best possible assuming the conjecture, and can potentially recover much smaller sized clusters if $k=o(\sqrt{n})$.
}

\remove{ \subsection{Faulty Oracle with Side Information}\label{sec:faultysideub}
 The algorithm for \cc~with side information when the oracle may also return erroneous answers is a direct combination of the algorithms for perfect oracle with side information and faulty oracle with no side information. We assume side information is less accurate than querying because otherwise, querying is not useful. Or in other words $\cH(f_+\|f_-) < \cH(p\|1-p)$.
 
 We use only the queried answers to extract the heaviest subgraph from $G'$, and add that to the list {\sf active}. For the clusters in {\sf active}, we follow the strategy of the algorithm for perfect oracle to recover the underlying clusters with the only difference, where the perfect oracle algorithm issued one query per cluster for a vertex, here we will issue $c\log{n}$ queries and take the majority answer as our final response. Everything else remains the same.
  
 We now analyze the query complexity. Consider a vertex $v$ which needs to be included in a cluster. Let there be $(r-1)$ other vertices from the same cluster as $v$ that have been considered by the algorithm prior to $v$.
\begin{enumerate}
 \item Case 1. $r \in [1,c\log{n}]$, the number of queries is at most $kc\log{n}$. In that case $v$ is added to $G'$ according to the faulty oracle algorithm.
 \item Case 2. $r \in (c\log{n},M^E]$, the number of queries can be $kc\log{n}$. In that case, the cluster that $v$ belongs to is in {\sf active}, but has not grown to size $M^E$. Recall $M^E=O(\frac{\log{n}}{\cH^2(f_+\| f_-)})$. In that case, according to the perfect oracle algorithm, $v$ may need to be queried with each cluster in {\sf active}, and according to the faulty oracle algorithm, there can be at most $c\log{n}$ queries for each cluster in {\sf active}. 
 \item Case 3. $r \in (M^E,|C|]$, the number of queries in that case is $0$ with high probability.
   \end{enumerate}
  
  Hence, the total number of queries per cluster is at most $O(kc^2(\log{n})^2+(M^E-c\log{n})kc\log{n})$. So, over all the clusters, the query complexity is $O(k^2M^Ec\log{n})$.

 \begin{theorem}
 \label{thm:div-new-err}
 Let $V = \sqcup_{i=1}^k \hat{V}_i$ be the ML estimate of  the clustering that can be found with all $\binom{n}2$ queries to the faulty oracle.
 Let $f_+,f_-$ be pmfs.  With side information and faulty oracle with error probability $p$, there exist an algorithm for \cc~with query complexity $O(\min{\{nk, \frac{k^2}{\cH^2(f_+\| f_-)}\}}\frac{\log{n}}{\cH^2(p\|1-p)})$  with unknown  $f_+,f_-$ that recovers the ML estimate $\sqcup_{i=1}^k \hat{V}_i$ exactly with high probability. 
 \end{theorem}}

\end{document}